\documentclass[12pt]{article}

\usepackage[margin=0.7in]{geometry}

\usepackage{amsmath,amssymb,amsthm,mathtools}
\usepackage{mathrsfs}
\numberwithin{equation}{section}

\mathtoolsset{showonlyrefs=true}

\usepackage{graphicx}
\usepackage{booktabs}
\usepackage{longtable}
\usepackage{array}
\usepackage{caption}
\usepackage{subcaption}
\usepackage{pdflscape}
\usepackage{siunitx}
\usepackage{xcolor}
\usepackage{enumitem}
\usepackage{appendix}
\usepackage{comment}

\usepackage[round]{natbib}
\usepackage[
    colorlinks=true,
    linkcolor=blue,
    citecolor=blue,
    urlcolor=blue
]{hyperref}

\usepackage{titling}
\theoremstyle{plain}
\newtheorem{theorem}{Theorem}[section]
\newtheorem{lemma}[theorem]{Lemma}
\newtheorem{proposition}[theorem]{Proposition}
\newtheorem{corollary}[theorem]{Corollary}

\theoremstyle{definition}
\newtheorem{definition}{Definition}[section]

\theoremstyle{remark}

\newcommand{\R}{\mathbb{R}}
\newcommand{\E}{\mathbb{E}}
\newcommand{\F}{\mathcal{F}}
\newcommand{\Dop}{\mathcal{D}}

\newcommand{\norm}[1]{\left\lVert#1\right\rVert}
\newcommand{\supnorm}[2]{\left\lVert#1\right\rVert_{{L^\infty([0,#2])}}}

\newcommand{\TaskOne}{\ensuremath{\mathcal{T}_1}}
\newcommand{\TaskTwo}{\ensuremath{\mathcal{T}_2}}
\newcommand{\TaskThree}{\ensuremath{\mathcal{T}_3}}

\newcommand{\RelLtwo}{\ell_{2}}
\newcommand{\RelLinf}{\ell_{\infty}}

\newcommand{\sci}[2]{\ensuremath{#1\times 10^{#2}}}
\newcommand{\bsci}[2]{\ensuremath{\boldsymbol{#1\times 10^{#2}}}}
\begin{document}
\title{Wong--Zakai Convergence and FNO Learning for Stochastic Volterra Equations}
\author{Wonjae Lee\thanks{InnoCore AI Meta-Scientist Research Group, KAIST, Daejeon 34141, South Korea. wjlee1107@kaist.ac.kr, phwjlee117@gmail.com}\quad Jiuk Jang\thanks{Department of Mathematical Sciences, Seoul National University, Seoul 08826, South Korea. jiukjang@snu.ac.kr} \quad Hyungbin Park\thanks{Department of Mathematical Sciences and Research Institute of Mathematics, Seoul National University, 1, Gwanak-ro, Gwanak-gu, Seoul 08826, South Korea. hyungbin@snu.ac.kr, hyungbin2015@gmail.com}}

\date{\today}

\maketitle

\begin{abstract} 
This paper studies Wong--Zakai convergence and Fourier neural operator learning for stochastic Volterra integral equations (SVIEs) describing random systems with memory. A Wong--Zakai theorem establishes convergence of pathwise solutions driven by piecewise linear approximations of Brownian motion to the corresponding SVIE solution. This convergence provides a foundation for using mirror-padded Fourier neural operators to learn the solution operator mapping initial conditions and driving noise to entire solution trajectories. Numerical experiments demonstrate competitive prediction accuracy and generalization to temporal grids finer than those used for training, while assessing the influence of padding and computational cost. As a follow-up to \citet{LKP:26}, this study extends their operator-learning framework to SVIEs through separate convergence and approximation analyses tailored to convolutional memory.
\end{abstract}

 \section{Introduction}

 Stochastic Volterra integral equations provide a natural framework for describing random systems with memory. Their convolutional structure allows past states and disturbances to influence the current dynamics through a memory kernel. This flexibility also creates computational challenges when many solution trajectories must be generated for different initial conditions and realizations of the driving noise. Learning the corresponding solution operator offers a way to capture this dependence directly and facilitates efficient simulation of complex stochastic dynamics.

 This paper is a follow-up to \citet{LKP:26}, who developed a Fourier neural operator framework for learning non-Markovian stochastic processes. Their work introduced the mirror-padded Fourier neural operator (MFNO) and established approximation results for path-dependent stochastic differential equations and Lipschitz transformations of fractional Brownian motion. The present study extends this research to stochastic Volterra integral equations by establishing the stochastic convergence and operator approximation results required for this setting.

Consider the stochastic Volterra integral equation
\begin{equation}
	X_t=\xi+\int_0^t\phi_0(t-s)b(X_s)\,ds
	+\int_0^t\phi(t-s)\sigma(X_s)\circ dB_s,
	\qquad 0\le t\le T,
\end{equation}
which induces a solution operator
$$
X=F(\xi,B),
$$
where $F$ maps the initial condition $\xi\in\mathbb{R}^d$ and the driving Brownian path $B\in C([0,T],\mathbb{R}^{\ell})$ to the solution path $X\in C([0,T],\mathbb{R}^d)$. Identifying $\xi$ with a constant path, we regard $F$ as an operator on paired paths:
$$
F:C([0,T],\mathbb{R}^{d+\ell})\to C([0,T],\mathbb{R}^d).
$$
The objective is to approximate $F$ using an FNO. To this end, our framework replaces $B$ by its piecewise linear interpolation and approximates $X$ through the resulting pathwise equations. Wong--Zakai convergence ensures that these pathwise solutions converge to $X$, while their deterministic solution maps provide continuous approximations of $F$ suitable for neural operator approximation. However, the preceding study did not establish the Wong--Zakai theorem required for the stochastic Volterra equations considered here, leaving them outside the scope of its approximation theory.

The present work establishes a Wong--Zakai convergence theorem for the class of convolutional stochastic Volterra equations considered here and uses it to develop Fourier neural operator approximation results.  This theorem provides the stochastic foundation for extending the operator-learning framework. Related Wong--Zakai results have been established for equations driven by fractional Brownian motion and more general Gaussian processes \citep{CoutinQian2002,FrizRiedel2014}, as well as for stochastic Volterra equations \citep{Tudor2009,XuZhang2024}. These results do not directly yield the convergence theorem required in our setting, motivating a separate analysis tailored to the kernel assumptions and approximation scheme adopted here.

 A further step is needed to obtain the neural operator approximation results. The general approximation properties of MFNOs must be combined with well-posedness and continuity of the relevant Volterra solution maps. These properties require independent proofs that account for the convolutional memory terms. By establishing them and combining them with Wong--Zakai convergence, this paper derives approximation results for the stochastic Volterra solution operators. The contribution therefore includes both the stochastic approximation theorem and the operator analysis needed to extend the preceding learning framework.

 The theoretical results are complemented by numerical experiments involving different memory kernels and drift structures. The experiments compare Fourier-based models with alternative neural architectures and evaluate their performance on temporal grids finer than those used for training. They also examine the influence of padding on prediction accuracy. Mirror padding provides the structure used in the theoretical approximation analysis, while the empirical comparisons assess the performance of the different architectures and padding strategies.

This paper is a follow-up to \citet{LKP:26}, which introduced an MFNO framework for non-Markovian stochastic processes, established rigorous approximation theorems, and demonstrated prediction accuracy and resolution generalization through numerical experiments. The present study extends these contributions to stochastic Volterra integral equations as follows.

First, the operator-learning framework is extended to stochastic dynamics with convolutional memory. Using the MFNO architecture developed in the preceding study, the proposed approach learns the mapping from initial conditions and driving Brownian paths to entire solution trajectories. This broadens the scope of the framework, previously applied to path-dependent stochastic differential equations and Lipschitz transformations of fractional Brownian motion, to the Volterra equations considered here.

Second, this paper establishes the theoretical results required for this extension. The preceding study did not prove the Wong--Zakai theorem needed for these Volterra equations and therefore did not include them in its approximation theory. Our analysis establishes the corresponding convergence result, proves well-posedness and continuity of the associated deterministic solution maps, and derives Fourier neural operator approximation theorems. The convolutional memory structure requires separate arguments, so these conclusions do not follow directly from the earlier results.

Third, numerical experiments show that the prediction accuracy and resolution generalization demonstrated in the preceding study also extend to the Volterra examples considered here. Comparisons with alternative neural architectures demonstrate competitive accuracy, while evaluations on finer temporal grids show that trained Fourier-based models retain comparable performance across resolutions. The experiments also assess the influence of padding and the computational cost of evaluating the learned solution operators.

The present study is related to two strands of literature: pathwise approximation of stochastic equations and neural operator learning. The classical work of \citet{WongZakai1965} established the connection between ordinary equations driven by smooth Brownian approximations and stochastic equations with a corrected drift. Rough path extensions include the construction of fractional Brownian rough paths by \citet{CoutinQian2002} and quantitative approximation results for Gaussian rough paths by \citet{FrizRiedel2014}. For equations with memory, \citet{Tudor2009} proved quadratic mean convergence, uniformly in time, for Brownian piecewise linear approximations of scalar It\^o--Volterra equations associated with the fractional Brownian kernel, with a modified drift in the limiting equation. \citet{XuZhang2024} established Wong--Zakai approximation results for stochastic Volterra equations, including applications to fractional and subfractional Brownian kernels with Hurst parameter greater than $1/2$. A complementary pathwise approach was developed by \citet{DeyaTindel,secondroughvol} through algebraic and convolutional rough integration, while \citet{volterraroughsignal} introduced Volterra signatures and controlled paths to treat possibly singular kernels.

On the operator-learning side, \citet{Li:21} introduced FNOs for parametric PDEs, and \citet{Kovachki:21} established universal approximation results for continuous operators between Sobolev spaces on compact input sets. \citet{Salvi2022} developed neural SPDEs that learn solution operators depending jointly on initial conditions and driving-noise realizations through the mild formulation. More directly related, \citet{LKP:26} introduced MFNOs and combined Wong--Zakai approximation with operator universality for path-dependent SDEs and Lipschitz transformations of fractional Brownian motion. Building on these developments, the present work connects a direct Wong--Zakai analysis for Brownian-driven convolutional SVIEs, under an absolutely continuous diffusion kernel with derivative of bounded variation, with well-posedness and continuity of deterministic Volterra solution maps to establish MFNO approximation results for the corresponding stochastic solution operators.

The remainder of the paper is organized as follows. Section~\ref{WZ} establishes the Wong--Zakai convergence theorem for stochastic Volterra equations. Section~\ref{sec:FNO} reviews Fourier neural operators and their universal approximation properties. Section~\ref{sec:approx} develops the approximation theory for Volterra solution maps. Section~\ref{sec:experi} describes the experimental setup, and Section~\ref{sec:results} presents the numerical results on prediction accuracy, resolution generalization, and computational performance. Section \ref{sec:con} summarizes the paper.

\section{Wong--Zakai Approximation}
\label{WZ}
\subsection{Stochastic Volterra Equations}

Let $(\Omega,\mathcal{F},(\mathcal{F}_t)_{t\ge0},\mathbb{P})$ be a filtered probability space satisfying the usual conditions and supporting a standard $m$-dimensional Brownian motion $B=(B^1,\ldots,B^m)$ with respect to $(\mathcal{F}_t)_{t\ge0}$. Throughout this paper, we impose the following assumptions:
	\begin{equation}\label{eq:assumptions}
		\begin{gathered}
		  T>0,\qquad	\phi_0\in W^{1,1}(0,T;\mathbb R),\\
			\phi\in AC([0,T];\mathbb R^{d\times d}),\qquad
			q:=\phi'\in BV(0,T;\mathbb R^{d\times d}),\\
			b:\mathbb R^d\to\mathbb R^d\ \text{is globally Lipschitz},
			\qquad \sigma\in C_b^2(\mathbb R^d;\mathbb R^{d\times m}).
		\end{gathered}
	\end{equation}
We write
\[
\sigma(y):=\bigl(\sigma_1(y),\ldots,\sigma_m(y)\bigr),
\]
that is, $\sigma_j(y)=\sigma(y)e_j$, where $e_j$ is the $j$th standard basis
vector of $\mathbb R^m$.

Here $AC$ means entrywise absolute continuity. The space
$BV(0,T;\mathbb R^{d\times d})$ consists of integrable matrix-valued
functions whose distributional derivative is a finite matrix-valued
measure.
We use $|\cdot|$ for the Euclidean norm of vectors and the
Frobenius norm of matrices. For a matrix-valued measure $\mu$, its
variation measure $|\mu|$ is defined using the Frobenius norm.
An integral $\int\mu(dr)\,v_r$ is interpreted entrywise, with the
matrix-valued measure multiplying the vector on the left.
We use the unique absolutely continuous representative of $\phi_0$ on
$[0,T]$. The derivative $q$ is understood almost everywhere; its
one-sided traces exist and are finite. In particular,
$q\in L^\infty(0,T;\mathbb R^{d\times d})$ and $\phi$ is Lipschitz.
No classical second derivative of $\phi$ is assumed.
The space $C_b^2$ consists of functions with continuous bounded
derivatives through order two, including the function itself.
No boundedness or differentiability assumption is imposed on $b$.

We study the Stratonovich Volterra equation
\begin{equation}\label{eq:stratonovich}
	Y_t={y_0}+\int_0^t\phi_0(t-s)b(Y_s)\,ds
	+\int_0^t\phi(t-s)\sigma(Y_s)\circ dB_s,
	\qquad 0\le t\le T
\end{equation}
for a deterministic initial value $y_0\in\mathbb R^d$.
We denote the solution with initial value $y_0$ by ${Y^{y_0}}$ and write $Y=Y^{y_0}$ for brevity.
The circle denotes Stratonovich integration. For each time
$t$, the stochastic integral in \eqref{eq:stratonovich} is the usual
Stratonovich integral in the variable $s$.

{Let $N_n=2^n$ and define the dyadic partition
\[
\pi_n:=\{t_k^n:0\le k\le N_n\},
\qquad
t_k^n:=kh_n,
\qquad
h_n:=|\pi_n|=T/N_n.
\]
}
Define the piecewise linear interpolation of $B$ by
\begin{equation}\label{eq:interpolation}
	B_t^n=B_{{t_k^n}}+\frac{t-{t_k^n}}{h_n}(B_{{t_{k+1}^n}}-B_{{t_k^n}}),
	\qquad t\in[{t_k^n},{t_{k+1}^n}].
\end{equation}
The approximating equation is
\begin{equation}\label{eq:approximation}
Y_t^n={y_0}+\int_0^t\phi_0(t-s)b(Y_s^n)\,ds
+\int_0^t\phi(t-s)\sigma(Y_s^n)\,dB_s^n\,,\qquad 0\le t\le T\,.
\end{equation}
We denote the solution with initial value $y_0$ by ${Y^{n,y_0}}$ and write $Y^n=Y^{n,y_0}$ for brevity.
The last integral is an ordinary Riemann--Stieltjes integral.

\begin{theorem}[Wong--Zakai convergence]\label{thm:main}
	Under the assumptions in \eqref{eq:assumptions}, the following statements hold:
	\begin{enumerate}
		\item There exists a unique continuous adapted semimartingale solution $Y$ to \eqref{eq:stratonovich} such that $$\E\supnorm{Y}{T}^2<\infty\,.$$
		\item For each $n$, there exists a unique continuous pathwise solution $Y^n$ to \eqref{eq:approximation}.
		\item There exists a constant $C<\infty$ such that, for all sufficiently large $n$,
		\begin{equation}\label{eq:convergence}
			\E\!\left[\sup_{0\le t\le T}|Y_t^n-Y_t|^2\right]
			\le C h_n\log\!\left(\frac{2T}{h_n}\right).
		\end{equation}
		Moreover,
		\begin{equation}\label{eq:asconvergence}
			\sup_{0\le t\le T}|Y_t^n-Y_t|\longrightarrow0
			\quad\text{almost surely as }n\to\infty.
		\end{equation}
	\end{enumerate}
\end{theorem}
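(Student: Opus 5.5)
We first rewrite \eqref{eq:stratonovich} in It\^o form. For fixed $t$, the integrand $s\mapsto\phi(t-s)\sigma(Y_s)$ is a semimartingale on $[0,t]$. The factor $\phi(t-s)$ has finite variation, so the It\^o--Stratonovich correction only comes from $\sigma(Y_s)$. Its martingale part is $\sum_j\nabla\sigma(Y_s)\phi(0)\sigma_j(Y_s)\,dB^j_s$, because the martingale part of $dY_s$ is $\phi(0)\sigma(Y_s)\,dB_s$. Differentiating $t\mapsto\int_0^t\phi(t-s)\sigma(Y_s)\,dB_s$ gives $\phi(0)\sigma(Y_t)\,dB_t+\bigl(\int_0^t q(t-s)\sigma(Y_s)\,dB_s\bigr)dt$. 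This shows that $Y$ is a semimartingale, and it yields the It\^o equation
\[
Y_t=y_0+\int_0^t\phi_0(t-s)b(Y_s)\,ds+\int_0^t\phi(t-s)\sigma(Y_s)\,dB_s+\tfrac12\int_0^t\phi(t-s)\sum_{j}\nabla\sigma_j(Y_s)\phi(0)\sigma_j(Y_s)\,ds .
\]
Part 1 then follows from a standard Picard iteration in the norm $\E\sup_{s\le t}|\cdot|^2$. The coefficients are Lipschitz, since $\sigma\in C_b^2$ and $b$ is Lipschitz. The kernels $\phi_0\in L^1$ and $\phi\in L^\infty$. Because the kernel depends on $t$, the supremum of the stochastic convolution has to be handled. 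We do this with the decomposition $\phi(t-s)=\phi(0)+\int_s^t q(r-s)\,dr$, i.e.\ a stochastic Fubini argument. Then $\sup_t|\int_0^t\phi(t-s)f_s\,dB_s|$ is bounded by $|\phi(0)|\sup_t|\int_0^tf\,dB|+\int_0^T\|q\|_\infty\sup_r|\int_0^r f\,dB|\,dr$, and BDG controls both terms. A Volterra--Gronwall lemma closes uniqueness and existence. Part 2 is pathwise. On each $[t_k,t_{k+1}]$, $\dot B^n$ is constant and bounded for fixed $\omega$, so \eqref{eq:approximation} is a deterministic Volterra equation with Lipschitz kernel coefficients, solvable by contraction (or Gronwall on the whole interval with the $L^1$ bound of $|\dot B^n|$).

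For Part 3, we write $\dot B^n_s=\Delta_kB/h_n$ on $[t_k,t_{k+1}]$ and expand the Riemann--Stieltjes term $\int_0^t\phi(t-s)\sigma(Y^n_s)\dot B^n_s\,ds$. Following the classical Wong--Zakai scheme, we Taylor expand $\sigma(Y^n_s)$ around $Y^n_{t_k}$ on each subinterval. We use $Y^n_s-Y^n_{t_k}\approx\phi(0)\sigma(Y^n_{t_k})(B^n_s-B_{t_k})+(\text{memory increment})$. The memory increment is $\int_0^{t_k}[\phi(s-u)-\phi(t_k-u)]\sigma\,dB^n_u$, which is $O(h_n\cdot\text{size})$ since $\phi$ is Lipschitz. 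This splits the approximating integral into three parts. The first is an It\^o-type sum $\sum_k\phi(t-t_k)\sigma(Y^n_{t_k})\Delta_kB$. The second is a correction $\sum_k\int\phi(t-s)\nabla\sigma\,\phi(0)\sigma(B^n_s-B_{t_k})\,ds\,\Delta_kB/h_n$, whose conditional mean is $\tfrac12\phi(t-t_k)\sum_j\nabla\sigma_j\phi(0)\sigma_j h_n$. The third is a remainder. Replacing $\phi(t-s)$ by $\phi(t-t_k)$ costs $\|q\|_\infty h_n|\Delta_kB|$ per cell, which sums to $O(h_n^{1/2})$ in $L^2$ but must be treated as a martingale sum to get $h_n$ rates. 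We then subtract the It\^o form of $Y$. This gives
\[
e_t=Y^n_t-Y_t=\int_0^t\phi_0(t-s)[b(Y^n)-b(Y)]ds+\int_0^t\phi(t-s)[\sigma(Y^n_{\lfloor s\rfloor})-\sigma(Y_s)]dB_s+(\text{correction difference})+R^n_t .
\]
Here the correction difference is Lipschitz in $e$ plus discretization error. The remaining terms are controlled as follows. The martingale-difference part of the correction, $\sum_k(\cdot)(\Delta_kB^i\Delta_kB^j-\delta_{ij}h_n)$, has $L^2$ norm $O(h_n)$. Terms involving $\sup_k|\Delta_kB|^2$ produce the $h_n\log(2T/h_n)$ factor via $\E\max_k|\Delta_kB|^2\lesssim h_n\log(N_n)$. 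The sup over $t$ of the kernel-weighted sums is handled again by writing $\phi(t-t_k)=\phi(0)+\int_{t_k}^t q(r-t_k)\,dr$. At this point $q\in BV$ is needed: after a summation by parts in $k$ it controls $\sum_k|q(t-t_k)-q(t-t_{k+1})|$ uniformly. The Taylor remainders require moment bounds on $Y^n$ uniform in $n$ and on local increments $\E\sup_{s\in[t_k,t_{k+1}]}|Y^n_s-Y^n_{t_k}|^p\lesssim h_n^{p/2}\log$. We prove these first by the same expansion, using $\sigma\in C^2_b$. A Gronwall argument for $\E\sup_{s\le t}|e_s|^2$ then yields \eqref{eq:convergence}.

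The main obstacle is the Volterra-specific one. The integrand $\phi(t-s)\sigma(Y^n_s)$ depends on $t$, so the usual adaptedness and martingale structure in $s$ is lost uniformly in $t$. The memory increment also feeds back into the local expansion. We will handle both through the Fubini decomposition with $q$ and a summation by parts exploiting $q\in BV$. We expect the bookkeeping of these terms, uniformly in $t$, to be the hardest part.

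Almost sure convergence \eqref{eq:asconvergence} then follows from Borel--Cantelli. Since $h_n=T2^{-n}$, Chebyshev's inequality gives $\sum_n\mathbb P(\sup_t|e^n_t|>\varepsilon_n)\le\sum_n Cn2^{-n}/\varepsilon_n^2<\infty$ with $\varepsilon_n=2^{-n/4}$.
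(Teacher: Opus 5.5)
\textbf{There is a genuine gap in part 3, at the Volterra-specific step.}

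Your outer structure matches the paper's:
\begin{itemize}
\item the It\^o form with correction $\tfrac12\int_0^t\phi(t-s)\sum_jD\sigma_j(Y_s)\phi(0)\sigma_j(Y_s)\,ds$;
\item Picard iteration and pathwise solvability of the $B^n$-equation;
\item a cellwise Taylor expansion whose centered quadratic part is an $O(h_n)$ martingale and whose conditional mean produces the correction;
\item the $\log N_n$ factor from Brownian maxima, then Gronwall and Borel--Cantelli.
\end{itemize}
The step you defer as ``bookkeeping'' is where the proof needs an idea, and the one concrete claim you make there does not hold as justified.

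Lipschitz continuity of $\phi$ bounds the memory increment $\int_0^{t_k}[\phi(s-u)-\phi(t_k-u)]\sigma(Y^n_u)\,dB^n_u$ only by $h_n\|q\|_\infty\|\sigma\|_\infty\sum_{i<k}|\Delta_iB|$. This total variation is of order $h_n^{-1/2}$, so you get $O(h_n^{1/2})$, the same order as the leading term $\phi(0)\sigma(y_k)(B^n_s-B_{t_k})$. Your linearization of $Y^n_s-Y^n_{t_k}$ is then not valid. Treated as remainders, the resulting cell errors are $O(h_n)$ each and sum to $O(1)$. To get $O(h_n)$ with a constant of bounded moments, you must integrate by parts against $Z^n:=\int_0^\cdot\sigma(Y^n)\,dB^n$ and use $q\in BV$. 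This gives a bound $Ch_n(1+\sup_{u\le t_k}|Z^n_u|)$. You then need $\sup_n\E\sup_t|Z^n_t|^2<\infty$, which is circular: it comes out of the same expansion. Closing it requires an absorption argument using that $\sup_{u\le t_k}|Z^n_u|$ is $\F_{t_k}$-measurable, hence independent of $\Delta_kB$. ``We prove these first by the same expansion'' does not address this.

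\textbf{The missing idea, which the paper uses, is to remove the kernel from the stochastic analysis entirely.}

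For $z_0=0$, integration by parts gives $\int_0^t\phi(t-s)\,dz_s=\phi(0)z_t+\int_0^tq(t-s)z_s\,ds=:(\mathcal Kz)_t$. This holds both for $B^n$-integrals and for Stratonovich integrals, with $\sup_{[0,t]}|\mathcal Kz|\le A_\phi\sup_{[0,t]}|z|$. Hence:
\begin{itemize}
\item $Y^n=y_0+\Dop Y^n+\sum_j\mathcal KZ^{n,j}$, with the \emph{uncorrected} $Z^{n,j}=\int_0^\cdot\sigma_j(Y^n)\,dB^{n,j}$;
\item $Y=y_0+\Dop Y+\sum_j\mathcal KZ^j$, with $Z^j=\int_0^\cdot\sigma_j(Y)\circ dB^j$;
\item a deterministic Gronwall argument then yields $\sup_{[0,t]}|Y^n-Y|\le C\sum_j\sup_{[0,t]}|Z^{n,j}-Z^j|$.
\end{itemize}
After this reduction you never need kernel-weighted sums, uniform-in-$t$ control of Volterra convolutions, or summation by parts in $k$. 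It suffices to compare the ordinary integrals $Z^{n,j}$ with the adapted $J^{n,j}=\int_0^\cdot\sigma_j(Y^n_{\eta_n(s)})\,dB^j_s+\int_0^\cdot\rho_j(Y^n_{\eta_n(s)})\,ds$. You then compare $J^{n,j}$ with $Z^j$ by Doob and the It\^o isometry.

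The same representation gives $Y^n=y_0+\phi(0)\sum_jZ^{n,j}+V^n$ with $V^n\in C^1$ and $|\tfrac{d}{dt}V^n_t|\le C(1+\sum_j\sup_{[0,t]}|Z^{n,j}|)$. This is Lemma~\ref{lem:BV}, and it is where $q\in BV$ enters. It is exactly the $O(h_n(1+S_k))$ memory bound the cell expansion needs.

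A minor point: an $L^2$ error of order $h_n^{1/2}$ already gives the claimed mean-square rate $h_n$. So the replacement $\phi(t-s)\to\phi(t-t_k)$ needs no martingale treatment.
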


The proof below is direct and does not invoke a Wong--Zakai theorem for an
It\^o equation. It\^o integrals appear only as a tool for constructing the
Stratonovich solution and for applying martingale estimates.

\subsection{Convolution Identities and Well-Posedness}
{
Set
\begin{equation}\label{eq:coefficients}
\begin{gathered}
G_j(y):=D\sigma_j(y)\,\phi(0)\sigma_j(y),\qquad \rho_j(y):=\frac12G_j(y),\qquad
\rho(y):=\sum_{j=1}^m\rho_j(y).
\end{gathered}
\end{equation}
The assumption $\sigma\in C_b^2$ ensures that each $\rho_j$ is bounded
and globally Lipschitz. Indeed,
\[
\|D\rho_j\|_\infty
\le \frac{|\phi(0)|}{2}
\left(\|D^2\sigma_j\|_\infty\|\sigma_j\|_\infty
+\|D\sigma_j\|_\infty^2\right).
\]
The order $D\sigma_j(y)\phi(0)\sigma_j(y)$ is essential; no commutativity
of $\phi(0)$ and $D\sigma_j(y)$ is assumed.
For continuous paths $y$ and $z$, define
\begin{equation}\label{eq:operators}
\begin{aligned}
(\Dop y)_t&:=\int_0^t\phi_0(t-s)b(y_s)\,ds,\\
(\mathcal Kz)_t&:=\phi(0) z_t+\int_0^tq(t-s)z_s\,ds.
\end{aligned}
\end{equation}
If $z_0=0$ and $z$ is a continuous semimartingale or a continuous path
of bounded variation, integration by parts gives
\begin{equation}\label{eq:ibp}
\int_0^t\phi(t-s)\,dz_s=(\mathcal Kz)_t.
\end{equation}
The identity holds for both It\^o and Stratonovich integration because
the deterministic matrix-valued kernel has finite variation. Moreover,
\begin{equation}\label{eq:Lbound}
\supnorm{\mathcal Kz}{t}\le A_\phi\supnorm{z}{t},
\qquad A_\phi:=|\phi(0)|+\int_0^T|q(r)|\,dr.
\end{equation}
All constants denoted by $C$ below may depend on $T,{y_0},d,m$,
$|b(0)|$, the global Lipschitz constant of $b$,
$\|\sigma\|_\infty$, $\|D\sigma\|_\infty$, $\|D^2\sigma\|_\infty$,
and the kernel quantities
\[
|\phi_0(0)|+\|\phi_0'\|_{L^1},\qquad
|\phi(0)|+\|q\|_{L^1}+|q(0+)|+|Dq|((0,T)),
\]
but not on $n$ or a mesh index.

}

{
\begin{lemma}\label{lem:BV}
Let $q\in BV(0,T;\mathbb R^{d\times d})$,  $\mu=Dq$ on $(0,T)$,
and $z\in C([0,T];\mathbb R^d)$ satisfy $z_0=0$. Define
\[
U_t:=\int_0^tq(t-s)z_s\,ds.
\]
Then $U\in C^1([0,T];\mathbb R^d)$, with one-sided derivatives at the
endpoints. If $\bar z_s=z_s$ for $s\in[0,T]$ and $\bar z_s=0$ for $s<0$,
then
\begin{equation}\label{eq:BVderivative}
U_t'=q(0+)z_t+\int_{(0,T)}\mu(dr)\,\bar z_{t-r},
\qquad 0\le t\le T.
\end{equation}
In particular,
\begin{equation}\label{eq:BVbound}
|U_t'|\le\bigl(|q(0+)|+|\mu|((0,T))\bigr)\supnorm{z}{t}.
\end{equation}
\end{lemma}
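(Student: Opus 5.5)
The plan is to rewrite $U$ as a convolution in the other order and use the BV structure of $q$ through a Fubini argument. After the substitution $r=t-s$,
\[
U_t=\int_0^t q(r)\,\bar z_{t-r}\,dr .
\]
Since $q\in BV(0,T)$, it has a right-continuous representative with $q(r)=q(0+)+\mu((0,r])$ for a.e.\ $r\in(0,T)$. Substituting this gives two terms:
\[
U_t=q(0+)\int_0^t z_s\,ds+\int_0^t\int_{(0,r]}\mu(dr')\,\bar z_{t-r}\,dr .
\]
The first term is $C^1$ with derivative $q(0+)z_t$, because $z$ is continuous.

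For the second term I would apply Fubini, which is legitimate since $|\mu|$ is finite and $z$ is bounded. This rewrites it as
\[
\int_{(0,T)}\mu(dr')\int_{r'}^{t}\bar z_{t-r}\,dr\,\mathbf 1_{\{r'<t\}}
=\int_{(0,T)}\mu(dr')\,Z(t-r'),
\qquad Z(u):=\int_0^{u}\bar z_v\,dv,
\]
with the convention $Z(u)=0$ for $u\le 0$. Because $\bar z$ vanishes on negative arguments, the indicator can be dropped: the integrand is automatically zero when $r'\ge t$. The function $Z$ is $C^1$ on $(-\infty,T]$ with $Z'=\bar z$. This uses $z_0=0$: it makes $\bar z$ continuous at $0$, so $Z$ is $C^1$ across $u=0$ and has no kink there.

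Next I would differentiate under the measure. The difference quotients $(Z(t+h-r')-Z(t-r'))/h$ are bounded by $\supnorm{z}{T}$ and converge pointwise to $\bar z_{t-r'}$. Since $|\mu|((0,T))<\infty$, dominated convergence gives the derivative $\int_{(0,T)}\mu(dr')\,\bar z_{t-r'}$; one-sided quotients handle the endpoints. Adding the first term yields \eqref{eq:BVderivative}. Continuity of $t\mapsto U'_t$ follows the same way: $\bar z$ is continuous on $(-\infty,T]$ (again by $z_0=0$) and bounded, so dominated convergence applies once more. Hence $U\in C^1$.

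The bound \eqref{eq:BVbound} then comes from estimating the right-hand side of \eqref{eq:BVderivative} term by term. For each $r'\in(0,T)$ we have $|\bar z_{t-r'}|\le\supnorm{z}{t}$, because either $t-r'<0$ and the value is zero, or $t-r'\in[0,t]$. Therefore
\[
\Bigl|\int_{(0,T)}\mu(dr')\,\bar z_{t-r'}\Bigr|\le |\mu|((0,T))\,\supnorm{z}{t}
\]
(with Frobenius and Euclidean norms), and the term $q(0+)z_t$ is bounded by $|q(0+)|\,\supnorm{z}{t}$.

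The main obstacle is the representation step $q(r)=q(0+)+\mu((0,r])$ a.e., with $\mu=Dq$ on the open interval. To justify it, I would take a good representative of $q$ and check the identity distributionally against test functions on $(0,T)$: the distributional derivative of $r\mapsto\mu((0,r])$ is $\mu$, so $q$ minus this function has zero distributional derivative and is therefore a.e.\ constant. That constant is identified as $q(0+)$ because $\mu((0,r])\to 0$ as $r\downarrow 0$. Since $d\times d$ matrices are handled entrywise, the scalar argument suffices.
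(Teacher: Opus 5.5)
Your proposal is correct and follows essentially the same route as the paper. Both arguments take the right-continuous representative $q(r)=q(0+)+\mu((0,r])$, apply Fubini to reach $U_t=q(0+)\int_0^t z_s\,ds+\int_{(0,T)}\mu(dr)\int_0^{(t-r)_+}z_s\,ds$, and differentiate under the finite measure by dominated convergence, using $z_0=0$ for the continuity of $\bar z$. Your preliminary substitution $r=t-s$ and your explicit distributional justification of the BV representation are only expository additions.
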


\begin{proof}
Choose a right-continuous representative on $(0,T)$, so that
$q(r)=q(0+)+\mu((0,r])$ for almost every $r\in(0,T)$.
Applying Fubini's theorem entrywise gives
\begin{equation}\label{eq:BVfubini}
U_t=q(0+)\int_0^tz_s\,ds+
\int_{(0,T)}\mu(dr)\left(\int_0^{(t-r)_+}z_s\,ds\right).
\end{equation}
For a continuous vector-valued integrand $v$, the bound
$|\int\mu(dr)v_r|\le\int|v_r|\,|\mu|(dr)$ follows from the
matrix norm inequality $|Mv|\le|M|\,|v|$.
Since $z_0=0$, the function $t\mapsto\int_0^{(t-r)_+}z_s\,ds$
is continuously differentiable, with derivative $\bar z_{t-r}$.
This derivative is bounded by $\supnorm{z}{T}$ uniformly in $r$.
Differentiation under the finite variation measure gives
\eqref{eq:BVderivative}. Its right-hand side is continuous by dominated
convergence, including at the endpoints, because the extension $\bar z$
is continuous at zero. Only $r\le t$ contribute, which also proves
\eqref{eq:BVbound}. The derivative measure is taken on the open interval
$(0,T)$, without artificial boundary atoms.
\end{proof}

}

\begin{lemma}\label{lem:deterministic}
Let continuous paths $y,z^1,\ldots,z^m$, with $z_0^j=0$ for all $j=1,\ldots,m$, satisfy
\begin{equation}\label{eq:representation}
	y=y_0+\Dop y+\sum_{j=1}^m\mathcal K z^j.
\end{equation}
Then, for $t\le T$,
\begin{equation}\label{eq:growth}
	\supnorm{y}{t}\le C\left(1+\sum_{j=1}^m\supnorm{z^j}{t}\right).
\end{equation}
If $\widehat y,\widehat z^1,\ldots,\widehat z^m$ satisfy the same conditions
with $\widehat y_0=y_0$, then
\begin{equation}\label{eq:stability}
	\supnorm{y-\widehat y}{t}
	\le C\sum_{j=1}^m\supnorm{z^j-\widehat z^j}{t}.
\end{equation}
Moreover, defining
\begin{equation}\label{eq:Vdefinition}
	V_t:=(\Dop y)_t+\sum_{j=1}^m\int_0^t q(t-s)z_s^j\,ds,
\end{equation}
we have $y_t=y_0+\phi(0)\sum_j z_t^j+V_t$, $V\in C^1([0,T];\R^d)$, and
\begin{equation}\label{eq:Vbound}
	|V_t'|\le C\left(1+\sum_{j=1}^m\supnorm{z^j}{t}\right).
\end{equation}
\end{lemma}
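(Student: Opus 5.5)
The plan is to treat \eqref{eq:representation} as a deterministic Volterra integral inequality for $y$ and close it with Gronwall's lemma. The forcing terms $\mathcal K z^j$ are controlled by \eqref{eq:Lbound}, and the $C^1$ regularity of $V$ comes from Lemma~\ref{lem:BV}.

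For the growth bound \eqref{eq:growth}, fix $t\le T$ and write $L_b$ for the Lipschitz constant of $b$, so that $|b(y_s)|\le |b(0)|+L_b|y_s|$. Since $\phi_0\in W^{1,1}$ has an absolutely continuous representative, it is bounded on $[0,T]$, with $\|\phi_0\|_\infty\le|\phi_0(0)|+\|\phi_0'\|_{L^1}$. Hence, for $u\le t$,
\[
|(\Dop y)_u|\le \|\phi_0\|_\infty T|b(0)|+\|\phi_0\|_\infty L_b\int_0^u\supnorm{y}{s}\,ds .
\]
Combining this with \eqref{eq:Lbound} applied to each $\mathcal K z^j$ and taking the supremum over $u\le t$ gives
\[
\supnorm{y}{t}\le |y_0|+C+A_\phi\sum_j\supnorm{z^j}{t}+C\int_0^t\supnorm{y}{s}\,ds .
\]
The forcing term $\sum_j\supnorm{z^j}{t}$ is nondecreasing in $t$, so Gronwall's lemma in its monotone-forcing form yields \eqref{eq:growth}, with the factor $e^{CT}$ absorbed into $C$.

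The stability estimate \eqref{eq:stability} follows the same route applied to the difference. Because $\widehat y_0=y_0$, the initial values cancel. Linearity of $\mathcal K$ gives $y-\widehat y=\Dop y-\Dop\widehat y+\sum_j\mathcal K(z^j-\widehat z^j)$. The Lipschitz property of $b$ gives $|(\Dop y)_u-(\Dop\widehat y)_u|\le\|\phi_0\|_\infty L_b\int_0^u\supnorm{y-\widehat y}{s}\,ds$, and \eqref{eq:Lbound} controls the $\mathcal K$ terms. Gronwall's lemma then closes the estimate, and no constant term appears.

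For the decomposition, the identity $y_t=y_0+\phi(0)\sum_j z^j_t+V_t$ follows by unfolding the definition \eqref{eq:operators} of $\mathcal K$. Regularity of $V$ has two parts.
\begin{itemize}
\item The memory terms $\int_0^t q(t-s)z^j_s\,ds$ are $C^1$ by Lemma~\ref{lem:BV}, since $z^j_0=0$. Their derivatives are bounded by $(|q(0+)|+|Dq|((0,T)))\supnorm{z^j}{t}$ via \eqref{eq:BVbound}.
\item For the drift term, set $w_s:=b(y_s)$, which is continuous. The change of variables $(\Dop y)_t=\int_0^t\phi_0(r)w_{t-r}\,dr$ suggests differentiating in $t$, giving
\[
(\Dop y)'_t=\phi_0(0)w_t+\int_0^t\phi_0'(t-s)w_s\,ds .
\]
\end{itemize}

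The differentiation of the drift term is the step I expect to need the most care, since $\phi_0'$ is only in $L^1$ and the integrand is not smooth. To justify it, I would write $\phi_0(t-s)=\phi_0(0)+\int_0^{t-s}\phi_0'(r)\,dr$ and apply Fubini's theorem. This gives
\[
(\Dop y)_t=\phi_0(0)\int_0^t w_s\,ds+\int_0^t\phi_0'(r)\Bigl(\int_0^{t-r}w_s\,ds\Bigr)dr .
\]
The second term equals $\int_0^t g(u)\,du$ with $g(u):=\int_0^u\phi_0'(u-s)w_s\,ds$, which is the convolution of $\phi_0'\in L^1$ with the continuous function $w$. This convolution is continuous by continuity of translation in $L^1$, so $\Dop y\in C^1$ with the formula above.

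The resulting derivative is bounded by $(|\phi_0(0)|+\|\phi_0'\|_{L^1})\supnorm{b(y)}{t}$. Using $\supnorm{b(y)}{t}\le |b(0)|+L_b\supnorm{y}{t}$ together with the growth bound \eqref{eq:growth} already established, this is at most $C(1+\sum_j\supnorm{z^j}{t})$. Adding the memory-term estimates gives \eqref{eq:Vbound}.
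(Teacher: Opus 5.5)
Your proposal is correct and follows the paper's proof essentially step for step. It uses the same Gronwall argument with $\|\phi_0\|_{L^\infty}\le|\phi_0(0)|+\|\phi_0'\|_{L^1}$ and \eqref{eq:Lbound} for the growth and stability bounds, the same Fubini rewriting of $\Dop y$ with continuity of the $\phi_0'$-convolution obtained from continuity of translations in $L^1$, and Lemma~\ref{lem:BV} for the memory terms, finishing \eqref{eq:Vbound} via \eqref{eq:growth} exactly as the paper does.
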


\begin{proof}
Let $L_b$ be a global Lipschitz constant of $b$, and set
$M_0:=\|\phi_0\|_{L^\infty(0,T)}$. Since $\phi_0\in W^{1,1}(0,T)$,
\[
M_0\le |\phi_0(0)|+\|\phi_0'\|_{L^1(0,T)}<\infty.
\]
Also, $|b(x)|\le |b(0)|+L_b|x|$.
By \eqref{eq:representation} and \eqref{eq:Lbound}, for $0\le u\le t\le T$,
\[
\supnorm{y}{u}
\le |y_0|+M_0T|b(0)|
+A_\phi\sum_{j=1}^m\supnorm{z^j}{t}
+M_0L_b\int_0^u\supnorm{y}{s}\,ds.
\]
Fixing $t$ and applying Gronwall's inequality on $[0,t]$ gives
\[
\begin{aligned}
\supnorm{y}{t}
&\le
\left(
|y_0|+M_0T|b(0)|
+A_\phi\sum_{j=1}^m\supnorm{z^j}{t}
\right)e^{M_0L_bt}\\
&\le C\left(1+\sum_{j=1}^m\supnorm{z^j}{t}\right).
\end{aligned}
\]

For the stability estimate, subtract the two representations:
\[
y-\widehat y
=\Dop y-\Dop\widehat y
+\sum_{j=1}^m\mathcal K(z^j-\widehat z^j).
\]
The Lipschitz continuity of $b$ implies, for $0\le u\le t$,
\[
\supnorm{y-\widehat y}{u}
\le M_0L_b\int_0^u\supnorm{y-\widehat y}{s}\,ds
+A_\phi\sum_{j=1}^m\supnorm{z^j-\widehat z^j}{t}.
\]
Hence Gronwall's inequality yields
\[
\supnorm{y-\widehat y}{t}
\le A_\phi e^{M_0L_bt}
\sum_{j=1}^m\supnorm{z^j-\widehat z^j}{t}
\le C\sum_{j=1}^m\supnorm{z^j-\widehat z^j}{t}.
\]

It remains to establish the regularity and derivative bound for $V$.
Its definition immediately gives
\[
y_t=y_0+\phi(0)\sum_{j=1}^m z_t^j+V_t.
\]
Set $f(s):=b(y_s)$, which is continuous and bounded on $[0,T]$.
Absolute continuity of $\phi_0$ and Fubini's theorem give
\[
\begin{aligned}
(\Dop y)_t
&=\int_0^t
\left(\phi_0(0)+\int_s^t\phi_0'(u-s)\,du\right)f(s)\,ds\\
&=\phi_0(0)\int_0^t f(s)\,ds
+\int_0^t\left(\int_0^u\phi_0'(u-s)f(s)\,ds\right)du.
\end{aligned}
\]
To verify continuity of the inner convolution, extend
$g:=\phi_0'$ and $f$ by zero to $\mathbb R$ and write $H:=g*f$.
Then
\[
\|H(\cdot+h)-H\|_{L^\infty(\mathbb R)}
\le
\|g(\cdot+h)-g\|_{L^1(\mathbb R)}
\|f\|_{L^\infty(\mathbb R)}
\longrightarrow0
\]
as $h\to0$, by continuity of translations in $L^1(\mathbb R)$.
Thus $\Dop y\in C^1([0,T];\R^d)$, with
\[
(\Dop y)'_t
=\phi_0(0)b(y_t)
+\int_0^t\phi_0'(t-s)b(y_s)\,ds.
\]

Let $\mu:=Dq$ on $(0,T)$ and extend each $z^j$ by zero to negative
times, denoting the extension by $\bar z^j$. Since $z_0^j=0$,
this extension is continuous. Applying Lemma~\ref{lem:BV} to each
noise memory term gives $V\in C^1([0,T];\R^d)$ and
\begin{align}\label{eq:Vderivative}
V_t'
={}&\phi_0(0)b(y_t)
+\int_0^t\phi_0'(t-s)b(y_s)\,ds\notag\\
&+\sum_{j=1}^m
\left[
q(0+)z_t^j
+\int_{(0,T)}\mu(dr)\,\bar z_{t-r}^j
\right].
\end{align}
The endpoint derivatives are understood one-sided. Moreover,
\[
\left|\int_{(0,T)}\mu(dr)\,\bar z_{t-r}^j\right|
\le |\mu|((0,T))\supnorm{z^j}{t}.
\]
Combining this estimate with the linear growth of $b$ and the
growth bound for $y$ proved above, we obtain
\[
\begin{aligned}
|V_t'|
&\le
\left(|\phi_0(0)|+\|\phi_0'\|_{L^1(0,T)}\right)
\left(|b(0)|+L_b\supnorm{y}{t}\right)\\
&\quad+
\left(|q(0+)|+|\mu|((0,T))\right)
\sum_{j=1}^m\supnorm{z^j}{t}\\
&\le C\left(1+\sum_{j=1}^m\supnorm{z^j}{t}\right).
\end{aligned}
\]
\end{proof}

\begin{proposition}
	\label{prop:wellposedness}
The Stratonovich equation \eqref{eq:stratonovich} is equivalent,
within the class of continuous adapted semimartingales $Y$ satisfying
$\E\supnorm{Y}{T}^2<\infty$, to 
\begin{equation}\label{eq:itoform}
\begin{aligned}
Y_t={}&{y_0}+\int_0^t\phi_0(t-s)b(Y_s)\,ds
+\int_0^t\phi(t-s)\rho(Y_s)\,ds\\
&+\int_0^t\phi(t-s)\sigma(Y_s)\,dB_s.
\end{aligned}
\end{equation}
This equation has a unique continuous adapted solution with a finite
	second moment of its supremum, and that solution is a semimartingale.
\end{proposition}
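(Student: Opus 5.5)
The plan has two parts: first convert the Stratonovich integral into an It\^o integral plus a correction, then solve the resulting It\^o-type Volterra equation by a fixed-point argument.

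\textbf{Step 1: Stratonovich to It\^o conversion.}
Let $Y$ be a continuous adapted semimartingale with $\E\supnorm{Y}{T}^2<\infty$. Set
\[
z^j_t:=\int_0^t\sigma_j(Y_s)\circ dB^j_s .
\]
By \eqref{eq:ibp}, the Stratonovich term in \eqref{eq:stratonovich} equals $\sum_j(\mathcal K z^j)_t$. Likewise, with It\^o integrals
\[
\zeta^j_t:=\int_0^t\sigma_j(Y_s)\,dB^j_s,
\]
the It\^o term in \eqref{eq:itoform} equals $\sum_j(\mathcal K\zeta^j)_t$. By definition of the Stratonovich integral,
\[
z^j_t=\zeta^j_t+\tfrac12\langle\sigma_j(Y),B^j\rangle_t .
\]
The key computation is this covariation. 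In either formulation, Lemma~\ref{lem:deterministic} gives the decomposition
\[
Y_t=y_0+\phi(0)\sum_i z^i_t+V_t,\qquad V\in C^1 .
\]
So the martingale part of $Y$ is $\phi(0)\sum_i\int\sigma_i(Y)\,dB^i$. By It\^o's formula for $\sigma_j\in C^2_b$,
\[
d\langle\sigma_j(Y),B^j\rangle_t=D\sigma_j(Y_t)\,\phi(0)\sigma_j(Y_t)\,dt=G_j(Y_t)\,dt .
\]
Hence
\[
z^j_t=\zeta^j_t+\int_0^t\rho_j(Y_s)\,ds .
\]
Applying \eqref{eq:ibp} to the finite-variation path $\int_0^\cdot\rho_j(Y_s)\,ds$ gives
\[
\mathcal K\Bigl(\int_0^\cdot\rho_j(Y)\,ds\Bigr)_t=\int_0^t\phi(t-s)\rho_j(Y_s)\,ds .
\]
Summing over $j$ shows that \eqref{eq:stratonovich} and \eqref{eq:itoform} coincide. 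The argument is symmetric: starting from \eqref{eq:itoform}, the same decomposition $Y=y_0+\phi(0)\sum_i\zeta^i+\tilde V$ holds, with $\tilde V\in C^1$ by Lemma~\ref{lem:BV} and the $W^{1,1}$ argument applied to $\phi_0$ and to $\phi\in AC$. The martingale part is identical, so the converse follows. One subtlety needs care: the semimartingale property is not assumed a priori for Volterra processes. It is exactly what the decomposition $Y=y_0+\phi(0)\sum\zeta^i+(\text{a }C^1\text{ process})$ provides.

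\textbf{Step 2: Well-posedness of \eqref{eq:itoform}.}
Run a Picard iteration in the Banach space $\mathcal S^2$ of continuous adapted processes with norm $(\E\supnorm{\cdot}{T}^2)^{1/2}$. Define
\[
\Gamma(Y)_t=y_0+(\Dop Y)_t+\int_0^t\phi(t-s)\rho(Y_s)\,ds+\sum_j\mathcal K(\zeta^j(Y))_t .
\]
Rewriting the stochastic term via \eqref{eq:ibp} as $\mathcal K$ applied to an It\^o integral is crucial. It lets us bound the supremum by \eqref{eq:Lbound} together with Doob's/BDG inequality, avoiding a supremum over $t$ of a two-parameter stochastic integral. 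This gives
\[
\E\supnorm{\Gamma Y-\Gamma\widehat Y}{t}^2\le C\int_0^t\E\supnorm{Y-\widehat Y}{s}^2\,ds .
\]
The constant uses the Lipschitz continuity of $b$, $\rho$ and $\sigma$, and the bounds $M_0$, $\|\phi\|_\infty$ and $A_\phi$. Iterating yields the factorial bound $C^kt^k/k!$, so the Picard sequence converges to a fixed point. Uniqueness follows by Gronwall applied to $u(t)=\E\supnorm{Y-\widehat Y}{t}^2$. The same estimate with $\widehat Y=0$ gives linear growth, so $\Gamma$ maps $\mathcal S^2$ into itself.

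The semimartingale property of the solution then follows from the Step 1 decomposition. Combining with the equivalence yields existence and uniqueness for \eqref{eq:stratonovich}.

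\textbf{Main obstacle.}
I expect the hard step to be the rigorous covariation computation in Step 1. Specifically, one must justify that $V$ (respectively $\tilde V$) is a $C^1$, hence finite-variation, adapted process. Because Lemma~\ref{lem:deterministic} is pathwise, this is immediate once each $z^j$ has continuous paths with $z^j_0=0$. The point requiring care is that the order $D\sigma_j(y)\,\phi(0)\,\sigma_j(y)$ arises, and that cross terms $i\neq j$ vanish since $\langle B^i,B^j\rangle=0$.
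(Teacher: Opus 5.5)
Your proposal is correct and follows essentially the paper's route. You run a Picard iteration for the It\^o form using the convolution identity \eqref{eq:ibp} with \eqref{eq:Lbound} and Doob's inequality, and you use the $C^1$ decomposition from Lemma~\ref{lem:deterministic} to obtain $d[Y,B^j]_s=\phi(0)\sigma_j(Y_s)\,ds$ and convert between the two forms in both directions. The only cosmetic difference is in the It\^o direction: you fold $\int_0^t\phi(t-s)\rho(Y_s)\,ds$ into the $C^1$ remainder directly, whereas the paper absorbs it into $Z^j=\int_0^\cdot\sigma_j(Y_s)\,dB_s^j+\int_0^\cdot\rho_j(Y_s)\,ds$ via \eqref{eq:ibp}, so that Lemma~\ref{lem:deterministic} applies verbatim.
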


\begin{proof}
	For a square-integrable predictable $\R^d$-valued process $H$, let
	$M_t=\int_0^tH_s\,dB_s^j$. From \eqref{eq:ibp}, \eqref{eq:Lbound},
	Doob's inequality, and the It\^o isometry,
	{
\begin{equation}\label{eq:convolutionestimate}
\E\sup_{u\le t}\left|\int_0^u\phi(u-s)H_s\,dB_s^j\right|^2
\le4A_\phi^2\int_0^t\E|H_s|^2\,ds.
\end{equation}
}
	In particular, the convolution itself need not be a martingale.
	
	Let $\Psi$ denote the right-hand side of \eqref{eq:itoform}. Ordinary
	Cauchy--Schwarz estimates for the drift convolutions and
	\eqref{eq:convolutionestimate} yield
	\[
	\E\supnorm{\Psi U-\Psi W}{t}^2
	\le C\int_0^t\E\supnorm{U-W}{s}^2\,ds.
	\]
	Starting from $Y^{(0)}\equiv {y_0}$ and setting $Y^{(r+1)}=\Psi Y^{(r)}$,
	the squared norms of successive differences are bounded at $T$ by
	$C_0(CT)^r/r!$. The square roots of these bounds are summable.
	Thus the Picard sequence converges in
	$L^2(\Omega;C([0,T];\R^d))$ to a solution. The same estimate and
	Gronwall's inequality give uniqueness. Linear growth and
	\eqref{eq:convolutionestimate} give the stated moment bound.
	
	Define
	\begin{equation}\label{eq:Zlimit}
		Z_t^j:=\int_0^t\sigma_j(Y_s)\,dB_s^j
		+\int_0^t{\rho_j}(Y_s)\,ds.
	\end{equation}
	By integration by parts, $Y={y_0}+\Dop Y+\sum_j{\mathcal K}Z^j$.
	Lemma~\ref{lem:deterministic} therefore decomposes $Y$ into
	${y_0}+{\phi(0)\sum_jZ^j}$ and a continuously differentiable process. Consequently,
	$Y$ is a semimartingale and
	{
\begin{equation}\label{eq:covariation}
d[Y,B^j]_s=\phi(0)\sigma_j(Y_s)\,ds.
\end{equation}
}
	Since $\sigma_j\in C^2$, It\^o's formula shows that
	$\sigma_j(Y)$ is a continuous semimartingale. The Stratonovich
	conversion formula therefore gives
	{
\begin{equation}\label{eq:conversion}
\begin{aligned}
\int_0^t\phi(t-s)\sigma_j(Y_s)\circ dB_s^j
={}&\int_0^t\phi(t-s)\sigma_j(Y_s)\,dB_s^j\\
&+\frac12\int_0^t\phi(t-s)D\sigma_j(Y_s)
\phi(0)\sigma_j(Y_s)\,ds\\
={}&\int_0^t\phi(t-s)\sigma_j(Y_s)\,dB_s^j
+\int_0^t\phi(t-s)\rho_j(Y_s)\,ds.
\end{aligned}
\end{equation}
}
	Thus the constructed solution solves \eqref{eq:stratonovich}.
	Conversely, for a continuous adapted semimartingale solution of
	\eqref{eq:stratonovich}, set
	$Z^j=\int_0^\cdot\sigma_j(Y_s)\circ dB_s^j$.
	The same convolution identity and finite-variation decomposition give
	\eqref{eq:covariation} and then \eqref{eq:conversion}.
	Hence it solves \eqref{eq:itoform}, proving the equivalence and uniqueness.
\end{proof}

For fixed $n$ and a fixed Brownian sample path, $B^n$ is absolutely
continuous and has finite variation. Standard deterministic Picard
iteration gives a unique solution of \eqref{eq:approximation} on $[0,T]$:
the integral Lipschitz bound has the integrable coefficient
$C(1+|\dot B_s^n|)$, and the same bound with linear growth prevents
explosion. More explicitly, iteration gives factorial bounds with
$\int_0^T C(1+|\dot B_s^n|)\,ds$ in place of $CT$.
Define the uncorrected auxiliary processes
\begin{equation}\label{eq:Zapprox}
	Z_t^{n,j}:=\int_0^t\sigma_j(Y_s^n)\,dB_s^{n,j}.
\end{equation}
Then
\begin{equation}\label{eq:Yapproxrepresentation}
	Y^n={y_0}+\Dop Y^n+\sum_j{\mathcal K}Z^{n,j}
	={y_0}+{\phi(0)\sum_jZ^{n,j}}+V^n,
\end{equation}
where $V^n$ is given by \eqref{eq:Vdefinition}.

\subsection{A Direct Expansion on Each Mesh Interval}

Fix $n$ and write $h=h_n$. Then set
\[
\Delta_kB:=B_{{t_{k+1}^n}}-B_{{t_k^n}},\qquad
R_k:=|\Delta_kB|,\qquad y_k:=Y_{{t_k^n}}^n,
\qquad
S_k:=\sum_{j=1}^m\supnorm{Z^{n,j}}{{t_k^n}}.
\]
The entire paths $Y^n$ and $Z^{n,j}$ up to ${t_k^n}$ depend only on
$B$ up to ${t_k^n}$. Hence $y_k$ and $S_k$ are $\F_{{t_k^n}}$-measurable,
and $S_k$ is independent of $\Delta_kB$. At times inside a mesh
interval the interpolated path, and generally its solution, need not
be adapted.

\begin{lemma}\label{lem:mesh}
	For $h\le1$ and each $k=0,\ldots,{N_n}-1$,
	{
\begin{equation}\label{eq:meshexpansion}
\begin{aligned}
Z_{t_{k+1}^n}^{n,j}-Z_{t_k^n}^{n,j}
={}&\sigma_j(y_k)\Delta_kB^j+\rho_j(y_k)h\\
&+\frac12\sum_{\ell=1}^m
D\sigma_j(y_k)\phi(0)\sigma_\ell(y_k)
\bigl(\Delta_kB^j\Delta_kB^\ell-\delta_{j\ell}h\bigr)
+r_k^j.
\end{aligned}
\end{equation}
}
	where $\delta_{j\ell}$ is the Kronecker delta and
	\begin{equation}\label{eq:remainder}
		\sum_{j=1}^m|r_k^j|
		\le C\left(R_k^3+hR_k(1+R_k)(1+S_k)\right).
	\end{equation}
\end{lemma}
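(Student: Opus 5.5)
On $[t_k^n,t_{k+1}^n]$ the interpolant has constant derivative $\dot B^n_s=\Delta_kB/h$. Hence
\[
Z^{n,j}_{t_{k+1}^n}-Z^{n,j}_{t_k^n}=\frac{\Delta_kB^j}{h}\int_{t_k^n}^{t_{k+1}^n}\sigma_j(Y^n_s)\,ds .
\]
The plan is to Taylor expand $\sigma_j(Y^n_s)$ around $y_k$ to second order:
\[
\sigma_j(Y^n_s)=\sigma_j(y_k)+D\sigma_j(y_k)(Y^n_s-y_k)+O\bigl(|Y^n_s-y_k|^2\bigr),
\]
where the constant in the $O$-term is $\|D^2\sigma\|_\infty/2$. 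The remaining work is to describe the increment $Y^n_s-y_k$ precisely on the mesh interval. For this we use the decomposition \eqref{eq:Yapproxrepresentation}:
\[
Y^n_s-y_k=\phi(0)\sum_\ell\bigl(Z^{n,\ell}_s-Z^{n,\ell}_{t_k^n}\bigr)+\bigl(V^n_s-V^n_{t_k^n}\bigr).
\]

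\textbf{Step 1: a priori bounds on the mesh interval.} Set $\delta:=\sup_{s\in[t_k^n,t_{k+1}^n]}\sum_\ell|Z^{n,\ell}_s-Z^{n,\ell}_{t_k^n}|$. Since $\sigma$ is bounded, $\delta\le CR_k$ directly.

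For the $V^n$ part, Lemma~\ref{lem:deterministic}, via \eqref{eq:Vbound}, gives $|V^n_s-V^n_{t_k^n}|\le Ch(1+\sup_{u\le s}\sum_\ell|Z^{n,\ell}_u|)$. Here the supremum up to $s$ is at most $S_k+\delta\le S_k+CR_k$. Hence
\[
|V^n_s-V^n_{t_k^n}|\le Ch(1+S_k+R_k).
\]
Combining, $\sup_s|Y^n_s-y_k|\le C\bigl(R_k+h(1+S_k+R_k)\bigr)$.

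\textbf{Step 2: first-order refinement of the $Z$ increment.} Write $\tau:=s-t_k^n$. We refine to
\[
Z^{n,\ell}_s-Z^{n,\ell}_{t_k^n}=\sigma_\ell(y_k)\Delta_kB^\ell\,\frac{\tau}{h}+e_s^\ell .
\]
The error satisfies $|e_s^\ell|\le CR_k\sup|Y^n-y_k|\le C\bigl(R_k^2+hR_k(1+S_k+R_k)\bigr)$, using the Lipschitz bound on $\sigma$. Therefore
\[
Y^n_s-y_k=\phi(0)\sum_\ell\sigma_\ell(y_k)\Delta_kB^\ell\,\frac{\tau}{h}+E_s,
\qquad |E_s|\le C\bigl(R_k^2+h(1+R_k)(1+S_k)\bigr).
\]

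\textbf{Step 3: insert into the Taylor expansion.} The zeroth-order term gives $\sigma_j(y_k)\Delta_kB^j$. The first-order term, after averaging $\tau/h$ over the interval (average $\tfrac12$), gives
\[
\frac12\sum_\ell D\sigma_j(y_k)\phi(0)\sigma_\ell(y_k)\Delta_kB^j\Delta_kB^\ell .
\]
We then add and subtract the $\delta_{j\ell}h$ correction, which produces exactly $\rho_j(y_k)h$ by the definition \eqref{eq:coefficients}. This yields the main terms of \eqref{eq:meshexpansion}.

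The remainders are of two kinds:
\begin{itemize}
\item $(R_k/h)\cdot h\cdot\sup|E_s|$, which is at most $C\bigl(R_k^3+hR_k(1+R_k)(1+S_k)\bigr)$;
\item $(R_k/h)\cdot h\cdot\sup|Y^n-y_k|^2$, which is at most $CR_k\bigl(R_k+h(1+S_k+R_k)\bigr)^2$.
\end{itemize}
Both fit \eqref{eq:remainder}, using $h\le1$, $h^2\le h$, and $(1+S_k)^2$ only multiplied by $h^2R_k$.

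\textbf{Main obstacle.} The squared-term contribution looks like it produces a factor $(1+S_k)^2$. The cross terms are harmless: they give $hR_k^2(1+S_k+R_k)\le hR_k(1+R_k)(1+S_k)+hR_k^3\le C\bigl(R_k^3+hR_k(1+R_k)(1+S_k)\bigr)$. The dangerous term is $h^2R_k(1+S_k+R_k)^2$.

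To keep the bound linear in $S_k$, I would avoid squaring the $V$-increment bound. Since $\sigma$ and $D\sigma$ are bounded, we can bound the quadratic Taylor remainder more carefully. First estimate
\[
|\sigma_j(Y^n_s)-\sigma_j(y_k)-D\sigma_j(y_k)(Y^n_s-y_k)|\le C\min\bigl(|Y^n_s-y_k|,\,|Y^n_s-y_k|^2\bigr).
\]
Then split $Y^n_s-y_k$ into its $Z$-part $P$, with $|P|\le CR_k$, and its $V$-part $Q$, with $|Q|\le Ch(1+S_k+R_k)$. This gives the bound $C\bigl(|P|^2+|Q|\bigr)$. The $|Q|$ term enters linearly, so $S_k$ appears only to the first power and the estimate \eqref{eq:remainder} follows.
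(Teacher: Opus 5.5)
Your proposal is correct and is essentially the paper's proof. You use the same predictor $w_s=\theta_s\sum_\ell\phi(0)\sigma_\ell(y_k)\Delta_kB^\ell$, the same error $e_s$ (your $E_s$) and the same remainder $r_k^j$. You also use the paper's key device: the $V$-increment enters only linearly through $\|D\sigma\|_\infty$, so no $v_k^2$ appears. The paper splits at $y_k+w_s$ (mean value theorem for $e_s$, second-order Taylor for $w_s$) rather than at your $y_k+P$, which gives the same estimate.

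One correction: your sentence that both bulleted remainders fit \eqref{eq:remainder} is false for the second. The term $h^2R_k(1+S_k)^2$ is not pathwise dominated because $S_k$ is unbounded. Your $C(|P|^2+|Q|)$ bound is therefore a necessary step, not an optional refinement.
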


\begin{proof}
Fix $k\in\{0,\ldots,N_n-1\}$. On $[t_k^n,t_{k+1}^n]$, we have
$dB_s^{n,j}=h^{-1}\Delta_kB^j\,ds$. Hence boundedness of $\sigma_j$
gives
\[
|Z_s^{n,j}-Z_{t_k^n}^{n,j}|
\le \|\sigma_j\|_\infty
\frac{s-t_k^n}{h}|\Delta_kB^j|,
\]
and therefore
\begin{equation}\label{eq:localZ}
\sum_{j=1}^m\sup_{t_k^n\le s\le t_{k+1}^n}
|Z_s^{n,j}-Z_{t_k^n}^{n,j}|
\le CR_k.
\end{equation}
In particular,
\[
\sum_{j=1}^m\supnorm{Z^{n,j}}{t_{k+1}^n}
\le S_k+CR_k.
\]
Applying \eqref{eq:Vbound} and integrating over the mesh interval yields
\begin{equation}\label{eq:localV}
v_k:=\sup_{t_k^n\le s\le t_{k+1}^n}|V_s^n-V_{t_k^n}^n|
\le Ch(1+S_k+R_k).
\end{equation}
By \eqref{eq:Yapproxrepresentation},
\[
Y_s^n-y_k
=\phi(0)\sum_{\ell=1}^m
\bigl(Z_s^{n,\ell}-Z_{t_k^n}^{n,\ell}\bigr)
+V_s^n-V_{t_k^n}^n.
\]
Consequently,
\begin{equation}\label{eq:localY}
\sup_{t_k^n\le s\le t_{k+1}^n}|Y_s^n-y_k|
\le C(R_k+v_k).
\end{equation}

For $s\in[t_k^n,t_{k+1}^n]$, define
\[
\theta_s:=\frac{s-t_k^n}{h},
\qquad
w_s:=\theta_s\sum_{\ell=1}^m
\phi(0)\sigma_\ell(y_k)\Delta_kB^\ell,
\qquad
e_s:=Y_s^n-y_k-w_s.
\]
Subtracting $w_s$ from the preceding representation gives
\[
e_s
=\phi(0)\sum_{\ell=1}^m
\int_{t_k^n}^s
\bigl(\sigma_\ell(Y_u^n)-\sigma_\ell(y_k)\bigr)\,dB_u^{n,\ell}
+V_s^n-V_{t_k^n}^n.
\]
Using the Lipschitz continuity of $\sigma_\ell$, we obtain
\begin{equation}\label{eq:errorleading}
\begin{aligned}
\sup_{t_k^n\le s\le t_{k+1}^n}|e_s|
&\le CR_k\sup_{t_k^n\le u\le t_{k+1}^n}|Y_u^n-y_k|+v_k\\
&\le C\bigl[R_k^2+(1+R_k)v_k\bigr].
\end{aligned}
\end{equation}

To estimate the coefficient remainder without introducing $v_k^2$,
write
\[
\begin{aligned}
&\sigma_j(Y_s^n)-\sigma_j(y_k)-D\sigma_j(y_k)w_s\\
&\quad=
\sigma_j(y_k+w_s+e_s)-\sigma_j(y_k+w_s)\\
&\qquad+
\sigma_j(y_k+w_s)-\sigma_j(y_k)-D\sigma_j(y_k)w_s.
\end{aligned}
\]
The mean value theorem and Taylor's formula imply
\[
\begin{aligned}
&|\sigma_j(Y_s^n)-\sigma_j(y_k)-D\sigma_j(y_k)w_s|\\
&\quad\le
\|D\sigma_j\|_\infty|e_s|
+\frac12\|D^2\sigma_j\|_\infty|w_s|^2.
\end{aligned}
\]
Since $|w_s|\le CR_k$, it follows that
\[
\sup_{t_k^n\le s\le t_{k+1}^n}
|\sigma_j(Y_s^n)-\sigma_j(y_k)-D\sigma_j(y_k)w_s|
\le C\bigl[R_k^2+(1+R_k)v_k\bigr].
\]

Define
\[
r_k^j:=
\int_{t_k^n}^{t_{k+1}^n}
\bigl[\sigma_j(Y_s^n)-\sigma_j(y_k)-D\sigma_j(y_k)w_s\bigr]
\,dB_s^{n,j}.
\]
Using
\[
\int_{t_k^n}^{t_{k+1}^n}\theta_s\,dB_s^{n,j}
=\frac12\Delta_kB^j,
\]
we obtain
\begin{equation}\label{eq:rawexpansion}
\begin{aligned}
Z_{t_{k+1}^n}^{n,j}-Z_{t_k^n}^{n,j}
={}&\sigma_j(y_k)\Delta_kB^j\\
&+\frac12\sum_{\ell=1}^m
D\sigma_j(y_k)\phi(0)\sigma_\ell(y_k)
\Delta_kB^\ell\Delta_kB^j+r_k^j.
\end{aligned}
\end{equation}
The total variation of $B^{n,j}$ on this interval is
$|\Delta_kB^j|$. Thus
\[
\begin{aligned}
\sum_{j=1}^m|r_k^j|
&\le CR_k\bigl[R_k^2+(1+R_k)v_k\bigr]\\
&\le C\bigl[R_k^3+hR_k(1+R_k)(1+S_k+R_k)\bigr]\\
&\le C\bigl[R_k^3+hR_k(1+R_k)(1+S_k)\bigr].
\end{aligned}
\]
The last inequality uses $h\le1$ to absorb $hR_k^3$ into $R_k^3$,
and absorbs $hR_k^2$ into $hR_k(1+R_k)(1+S_k)$.

Finally, since $y_k$ is $\F_{t_k^n}$-measurable and
\[
\E[\Delta_kB^\ell\Delta_kB^j\mid\F_{t_k^n}]
=\delta_{j\ell}h,
\]
the conditional mean of the quadratic term is
\[
\frac12D\sigma_j(y_k)\phi(0)\sigma_j(y_k)h
=\rho_j(y_k)h.
\]
Centering the quadratic products therefore gives
\[
\begin{aligned}
Z_{t_{k+1}^n}^{n,j}-Z_{t_k^n}^{n,j}
={}&\sigma_j(y_k)\Delta_kB^j+\rho_j(y_k)h\\
&+\frac12\sum_{\ell=1}^m
D\sigma_j(y_k)\phi(0)\sigma_\ell(y_k)
\bigl(\Delta_kB^j\Delta_kB^\ell-\delta_{j\ell}h\bigr)
+r_k^j,
\end{aligned}
\]
which completes the proof.
\end{proof}

\subsection{Uniform Moments and Comparison with Adapted Integrals}

We first control the accumulated remainders. For every $p>0$,
\begin{equation}\label{eq:Gaussianmoments}
	\E R_k^p=C_{p,m}h^{p/2}.
\end{equation}
For each fixed $n$, boundedness of $\sigma_j$ gives the pathwise bound
\begin{equation}\label{eq:fixednmoments}
	S_{{N_n}}\le C\sum_{k=0}^{{N_n}-1}R_k.
\end{equation}
Thus $S_{{N_n}}$ has a finite second moment before any uniform estimate is
established. Denote $A_n:=\norm{S_{{N_n}}}_{{L^2(\Omega)}}$.
By \eqref{eq:remainder}, independence of $S_k$ and $\Delta_kB$, and
Minkowski's inequality,
\begin{equation}\label{eq:remainderL2}
	\begin{aligned}
		\left\lVert\sum_{k=0}^{{N_n}-1}\sum_j|r_k^j|\right\rVert_{{L^2(\Omega)}}
		&\le C\sum_{k=0}^{{N_n}-1}
		\left[h^{3/2}+h\norm{R_k(1+R_k)}_{{L^2(\Omega)}}
		\norm{1+S_k}_{{L^2(\Omega)}}\right]\\
		&\le C\sqrt h\,(1+A_n).
	\end{aligned}
\end{equation}
Here and below estimates involving $\sqrt h$ are for $0<h\le1$.

For $i=0,\ldots,{N_n}$, define
\begin{align*}
	M_i^j&:=\sum_{k<i}\sigma_j(y_k)\Delta_kB^j,\\
	Q_i^j&:=\frac12\sum_{k<i}\sum_{\ell=1}^m
	{D\sigma_j(y_k)\phi(0)\sigma_\ell(y_k)}
	(\Delta_kB^j\Delta_kB^\ell-\delta_{j\ell}h),\\
	A_i^j&:=\sum_{k<i}{\rho_j}(y_k)h.
\end{align*}
Both $M^j$ and $Q^j$ are discrete-time martingales. Their coefficients
are uniformly bounded, and the conditional second moments of their
increments are bounded by $Ch$ and $Ch^2$, respectively. Consequently,
Doob's inequality gives
\begin{equation}\label{eq:martingalebounds}
	\sum_j\E\max_{i\le {N_n}}|M_i^j|^2\le C,
	\qquad
	\sum_j\E\max_{i\le {N_n}}|Q_i^j|^2\le Ch.
\end{equation}
Also $\max_i|A_i^j|\le T\norm{{\rho_j}}_\infty$.
Summing \eqref{eq:meshexpansion} gives
\begin{equation}\label{eq:gridrepresentation}
	Z_{{t_i^n}}^{n,j}=M_i^j+A_i^j+Q_i^j+\sum_{k<i}r_k^j.
\end{equation}
Together with \eqref{eq:localZ}, this implies
\[
A_n\le C+C\left\lVert\max_{k<{N_n}}R_k\right\rVert_{{L^2(\Omega)}}
+C\sqrt h\,(1+A_n).
\]
Since $\max_kR_k\le2\supnorm{B}{T}$ and
$\E\supnorm{B}{T}^2<\infty$, the middle term is uniformly bounded.
For sufficiently small $h$, the term $C\sqrt h\,A_n$ can be absorbed
into the left-hand side. The finitely many remaining meshes are covered
by \eqref{eq:fixednmoments}. Hence
\begin{equation}\label{eq:uniformmoments}
	\sup_n\E\left(\sum_j\supnorm{Z^{n,j}}{T}\right)^2<\infty,
	\qquad
	\sup_n\E\supnorm{Y^n}{T}^2<\infty,
\end{equation}
where the second assertion follows from \eqref{eq:growth}. Returning to
\eqref{eq:remainderL2}, we obtain
\begin{equation}\label{eq:remainderfinal}
	\left\lVert\sum_{k=0}^{{N_n}-1}\sum_j|r_k^j|\right\rVert_{{L^2(\Omega)}}
	\le C\sqrt h.
\end{equation}

Let $\eta_n(s)={t_k^n}$ for $s\in[{t_k^n},{t_{k+1}^n})$, and set
$\eta_n(T)=T$. Define the continuous adapted processes
\begin{equation}\label{eq:Jdefinition}
	J_t^{n,j}:=\int_0^t\sigma_j(Y_{\eta_n(s)}^n)\,dB_s^j
	+\int_0^t{\rho_j}(Y_{\eta_n(s)}^n)\,ds.
\end{equation}
The stochastic integrand has a predictable step-process version; its
values at the finitely many partition points are immaterial. At mesh
points, $J_{{t_i^n}}^{n,j}=M_i^j+A_i^j$. Therefore
\eqref{eq:martingalebounds} and \eqref{eq:remainderfinal} give
\begin{equation}\label{eq:griderror}
	\sum_j\E\max_{i\le {N_n}}
	|Z_{{t_i^n}}^{n,j}-J_{{t_i^n}}^{n,j}|^2\le Ch.
\end{equation}

To extend the estimates from mesh points to the entire interval, define
\[
W_n:=\max_{0\le k<N_n}\sup_{t_k^n\le s\le t_{k+1}^n}|B_s-B_{t_k^n}|.
\]
The reflection principle for each Brownian coordinate and a union bound
give, for $x>0$,
\[
\mathbb P(W_n>x)
\le \min\left\{1,\;4m{N_n} e^{-x^2/(2mh)}\right\}.
\]
Indeed, a vector with norm greater than $x$ has a coordinate of
absolute value greater than $x/\sqrt m$.
Integrating this tail bound, with
$x_0^2=2mh\log(4m{N_n})$, yields
\begin{equation}\label{eq:Brownianmaximum}
	\E W_n^2
	\le 2mh\bigl(\log(4m{N_n})+1\bigr)
	\le C h\log(2{N_n}).
\end{equation}
On each mesh interval, \eqref{eq:localZ} and boundedness of the
coefficients in \eqref{eq:Jdefinition} imply
\begin{align*}
	\sum_j\sup_{{t_k^n}\le s\le {t_{k+1}^n}}
	|Z_s^{n,j}-Z_{{t_k^n}}^{n,j}|&\le CW_n,\\
	\sum_j\sup_{{t_k^n}\le s\le {t_{k+1}^n}}
	|J_s^{n,j}-J_{{t_k^n}}^{n,j}|&\le CW_n+Ch.
\end{align*}
Combining these inequalities with \eqref{eq:griderror} and
\eqref{eq:Brownianmaximum}, we find
\begin{equation}\label{eq:epsilon}
	\varepsilon_n:=\sum_j\E\supnorm{Z^{n,j}-J^{n,j}}{T}^2
	\le C h_n\log(2N_n).
\end{equation}

We also need a fixed-time, rather than a maximal, oscillation estimate.
By \eqref{eq:localY} and \eqref{eq:localV},
\[
\sup_{{t_k^n}\le s\le {t_{k+1}^n}}|Y_s^n-y_k|
\le C\{R_k+h(1+S_k+R_k)\}.
\]
Taking second moments and using \eqref{eq:uniformmoments} yields
\begin{equation}\label{eq:fixedtimeoscillation}
	\sup_{0\le s\le T}\E|Y_s^n-Y_{\eta_n(s)}^n|^2\le Ch_n
\end{equation}
for all sufficiently fine meshes.

\subsection{Completion of the Proof}

Let $Y$ be the solution constructed in
Proposition~\ref{prop:wellposedness}, and let $Z^j$ be given by
\eqref{eq:Zlimit}. In particular,
\[
Z_t^j=\int_0^t\sigma_j(Y_s)\circ dB_s^j,
\qquad
Y={y_0}+\Dop Y+\sum_j{\mathcal K}Z^j.
\]
Define
\[
{\mathcal E_n}(t):=\E\supnorm{Y^n-Y}{t}^2.
\]
By \eqref{eq:stability} and \eqref{eq:epsilon},
\begin{equation}\label{eq:finalfirst}
	{\mathcal E_n}(t)\le C\varepsilon_n
	+C\sum_j\E\supnorm{J^{n,j}-Z^j}{t}^2.
\end{equation}
From \eqref{eq:Jdefinition} and \eqref{eq:Zlimit},
\begin{align*}
	J_t^{n,j}-Z_t^j
	={}&\int_0^t\bigl[\sigma_j(Y_{\eta_n(s)}^n)-\sigma_j(Y_s)\bigr]\,dB_s^j\\
	&+\int_0^t\bigl[{\rho_j}(Y_{\eta_n(s)}^n)-{\rho_j}(Y_s)\bigr]\,ds.
\end{align*}
All stochastic integrands in this comparison are adapted. Doob's
inequality, the It\^o isometry, and the Lipschitz properties of
$\sigma_j$ and ${\rho_j}$ give
\begin{align}\label{eq:finalsecond}
	\sum_j\E\supnorm{J^{n,j}-Z^j}{t}^2
	&\le C\int_0^t\E|Y_{\eta_n(s)}^n-Y_s|^2\,ds\notag\\
	&\le C\int_0^t
	\bigl[\E|Y_{\eta_n(s)}^n-Y_s^n|^2+{\mathcal E_n}(s)\bigr]\,ds\notag\\
	&\le Ch_n+C\int_0^t{\mathcal E_n}(s)\,ds,
\end{align}
where the last step uses \eqref{eq:fixedtimeoscillation}.
Combining \eqref{eq:finalfirst} and \eqref{eq:finalsecond},
\[
{\mathcal E_n}(t)\le C(\varepsilon_n+h_n)+C\int_0^t{\mathcal E_n}(s)\,ds.
\]
Gronwall's inequality and \eqref{eq:epsilon} now yield
\[
{\mathcal E_n}(T)\le C(\varepsilon_n+h_n)
\le C h_n\log(2N_n),
\]
which proves \eqref{eq:convergence}.

Since $h_n=T2^{-n}$,
\[
\sum_{n=0}^{\infty}h_n\log(2N_n)
=T\log 2\sum_{n=0}^{\infty}(n+1)2^{-n}<\infty.
\]
The finitely many coarse meshes also have finite second moments.
Consequently, for every $\delta>0$, Markov's inequality gives
\[
\sum_{n=0}^{\infty}
\mathbb P\bigl(\supnorm{Y^n-Y}{T}>\delta\bigr)<\infty.
\]
Apply the Borel--Cantelli lemma for $\delta=1/r$, $r\in\mathbb N$.
Intersecting the resulting events of probability one proves
\eqref{eq:asconvergence}. No independence of the errors
for different $n$ is needed. This completes the proof of
Theorem~\ref{thm:main}.

\begin{corollary}\label{cor:rate}
	For every $\alpha\in(0,1/2)$,
	\[
	h_n^{-\alpha}\supnorm{Y^n-Y}{T}\longrightarrow0
	\quad\text{almost surely}.
	\]
\end{corollary}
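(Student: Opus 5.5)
The plan is to reuse the Borel--Cantelli argument that gave \eqref{eq:asconvergence}, now with a rescaled threshold, and feed it the mean-square rate \eqref{eq:convergence} from Theorem~\ref{thm:main}. Fix $\alpha\in(0,1/2)$ and $\delta>0$. For all sufficiently large $n$, Markov's inequality together with \eqref{eq:convergence} gives
\[
\mathbb P\bigl(h_n^{-\alpha}\supnorm{Y^n-Y}{T}>\delta\bigr)
\le \delta^{-2}h_n^{2\alpha}\,\E\supnorm{Y^n-Y}{T}^2
\le C\delta^{-2}h_n^{1-2\alpha}\log\!\left(\frac{2T}{h_n}\right).
\]

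Next I check summability in $n$. Since $h_n=T2^{-n}$, the term $\log(2T/h_n)=(n+1)\log 2$ is linear in $n$, so the bound is comparable to $(n+1)\,2^{-n(1-2\alpha)}$. The exponent $1-2\alpha$ is positive because $\alpha<1/2$, so the series converges geometrically. The finitely many coarse meshes, for which \eqref{eq:convergence} is not asserted, contribute at most one each to the sum of probabilities, so the whole series $\sum_n\mathbb P(\cdot)$ is finite.

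By Borel--Cantelli, for each fixed $\delta$ almost surely only finitely many events $\{h_n^{-\alpha}\supnorm{Y^n-Y}{T}>\delta\}$ occur, so $\limsup_n h_n^{-\alpha}\supnorm{Y^n-Y}{T}\le\delta$. Intersecting over $\delta=1/r$, $r\in\mathbb N$, yields the claim almost surely.

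The only genuine constraint is the summability step. There the restriction $\alpha<1/2$ is exactly what absorbs the logarithmic factor, and the dyadic choice of mesh makes the resulting series geometric. No independence across $n$ is required.
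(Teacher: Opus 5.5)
Your proposal is correct and follows the paper's proof: Markov's inequality at the threshold $\delta h_n^{\alpha}$ combined with \eqref{eq:convergence}, summability of $h_n^{1-2\alpha}\log(2T/h_n)$ for $\alpha<1/2$, and Borel--Cantelli applied to $\delta=1/r$ with the resulting almost-sure events intersected over $r\in\mathbb N$. Your bound of one on each of the finitely many coarse-mesh probabilities fills in a step the paper leaves implicit.
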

\begin{proof}
	For every $\delta>0$, \eqref{eq:convergence} gives
	\[
	\mathbb P\bigl(\supnorm{Y^n-Y}{T}>\delta h_n^\alpha\bigr)
	\le C\delta^{-2}h_n^{1-2\alpha}\log(2N_n).
	\]
	The right-hand side is summable in $n$. Apply the same
	Borel--Cantelli argument as above.
\end{proof}

\subsection{How the Stratonovich Enhancement Enters}\label{sec:enhancement}

This subsection interprets the preceding direct proof in terms of the
canonical Stratonovich enhancement of Brownian motion. Define
\begin{equation}\label{eq:enhancement}
	\mathbf B^{\mathrm S}=(B,\mathbb B^{\mathrm S}),
	\qquad
	\mathbb B_{s,t}^{\mathrm S,\ell j}
	:=\int_s^t(B_u^\ell-B_s^\ell)\circ dB_u^j,
	\qquad 0\le s\le t\le T.
\end{equation}
This is the geometric
Brownian enhancement; its symmetric part satisfies
\[
\mathbb B_{s,t}^{\mathrm S,\ell j}
+\mathbb B_{s,t}^{\mathrm S,j\ell}
=(B_t^\ell-B_s^\ell)(B_t^j-B_s^j).
\]
For related background on Brownian Volterra equations and their geometric
interpretation, see \cite{DeyaTindel,Kalinin}.

The canonical second level of the piecewise linear path is
\[
\mathbb B_{s,t}^{n,\ell j}
:=\int_s^t(B_u^{n,\ell}-B_s^{n,\ell})\,dB_u^{n,j}.
\]
On a single mesh interval it satisfies the exact identity
\begin{equation}\label{eq:PLleveltwo}
	\mathbb B_{{t_k^n},{t_{k+1}^n}}^{n,\ell j}
	=\frac12\Delta_kB^\ell\Delta_kB^j.
\end{equation}
Thus the quadratic term in \eqref{eq:rawexpansion} is precisely the
second-level contribution from the canonical lift of the approximating
path. Its conditional mean is ${\rho_j}(y_k)h$, and this term remains
in the limit. It produces the Stratonovich integral in
\eqref{eq:Zlimit} and \eqref{eq:conversion}.

The identification can also be checked directly with enhanced sums.
For a fixed terminal time $t$ and a deterministic partition $\pi$ of
$[0,t]$, consider
{
\begin{equation}\label{eq:enhancedsum}
\begin{aligned}
\mathcal S_\pi^j(t):=
\sum_{[u,v]\in\pi}\phi(t-u)\bigg[
&\sigma_j(Y_u)(B_v^j-B_u^j)\\
&+\sum_{\ell=1}^mD\sigma_j(Y_u)\phi(0)\sigma_\ell(Y_u)
\mathbb B_{u,v}^{\mathrm S,\ell j}\bigg].
\end{aligned}
\end{equation}
}
The coefficient ${\phi(0)\sigma_\ell(Y_u)}$ is the instantaneous
Brownian coefficient of $Y$, as seen in \eqref{eq:covariation}.
Write
\[
\mathbb B_{u,v}^{\mathrm S,\ell j}
=\int_u^v(B_r^\ell-B_u^\ell)\,dB_r^j
+\frac12\delta_{\ell j}(v-u).
\]
In \eqref{eq:enhancedsum}, the first-level sum converges in ${L^2(\Omega)}$ to
the It\^o convolution. The sum containing the centered It\^o iterated
integrals converges to zero in ${L^2(\Omega)}$: its summands are martingale
differences with bounded coefficients and second moments at most
$C(v-u)^2$, so the total second moment is bounded by
$Ct|\pi|$. The remaining ordinary Riemann sum converges in ${L^2(\Omega)}$ to
$\int_0^t{\phi(t-s)}{\rho_j}(Y_s)\,ds$. Consequently,
\[
\mathcal S_\pi^j(t)\longrightarrow
\int_0^t{\phi(t-s)}\sigma_j(Y_s)\circ dB_s^j
\quad\text{in }{L^2(\Omega)}\quad\text{as }|\pi|\to0.
\]
This verifies the Stratonovich interpretation of the enhanced sums
directly in the present Brownian setting.

The mixed coefficient in \eqref{eq:rawexpansion} is
$D\sigma_j(y_k)\phi(0)\sigma_\ell(y_k)$.
Only $j=\ell$ has a nonzero conditional mean because the Brownian
components are independent. The resulting correction in the equation
for $Y$ is
\[
\frac12\sum_{j=1}^m\int_0^t
\phi(t-s)D\sigma_j(Y_s)\phi(0)\sigma_j(Y_s)\,ds.
\]
Neither $\phi(0)$ nor $\phi(t-s)$ may in general be moved through
$D\sigma_j(Y_s)$. No symmetry, commutativity, or invertibility assumption
is imposed on the matrices. The correction cannot in general be
combined with the $\phi_0$-drift as a single convolution with $\phi_0$.
If $\phi(0)=0$, all the conversion terms vanish, even if $\phi$ is
nonzero away from the origin.
 
	The theorem proves strong convergence in
	$L^2(\Omega;C([0,T];\R^d))$ and uniform almost sure convergence
	for the specified dyadic piecewise linear approximations.
	It does not assert a deterministic continuity estimate for arbitrary
	enhanced driving paths. Lemma~\ref{lem:BV} supplies the
	continuously differentiable memory term needed for this Brownian
	proof, even when ${Dq}$ has atoms or a singular continuous part.
	The assumptions are sufficient conditions and are not claimed to be
	optimal.

\section{FNO and Universal Approximation}
\label{sec:FNO} 

In \cite{Li:21}, the authors introduce an neural network architecture called Fourier Neural Operator (FNO) designed for resolution free approximation of operators between spaces of periodic functions. Specifically, an FNO $\mathcal{N}$ has the form
\[
\mathcal{N}=\mathcal{Q}\circ\mathcal{L}_{L}\circ\mathcal{L}_{L-1}\cdots\circ\mathcal{L}_{1}\circ\mathcal{R},
\]
where $\mathcal{Q,R}$ are the usual projection and lift maps, respectively, and the $\mathcal{L}_i$ are what's called Fourier layers and have the form

\begin{equation}
    \label{Flayer}
    \mathcal{L}_{l}(v)(x)=\sigma\Big( W_{l}v(x)+b_{l}(x)+\mathcal{F}^{-1}\big(P_{l}\cdot\mathcal{F}v\big)(x)\Big).
\end{equation}
Here, the input $v$ is a function defined on the periodic domain $D$,
$\mathcal{F}$ denotes the Fourier transform on $D$, and $P_l$ is a learnable
matrix-valued multiplier acting on a finite number of Fourier modes.
Furthermore, $\sigma$ is a non-polynomial, globally Lipschitz $C^3$ activation function applied componentwise,
and $W_l$ and $b_l$ are a learnable weight matrix and bias function,
respectively.
The real-world implementation of the FNO is called the $\Psi$-FNO, where the Fourier transform in \eqref{Flayer} is computed through a discrete Fourier transform on the input's values on a lattice.In fact, it was shown in \cite{Kovachki:21} that the class of $\Psi$-FNOs can approximate any continuous operator from $H^s(\mathbb{T}^d,\mathbb{R}^d_1)\,(s> d/2)$ to $H^{s'}(\mathbb{T}^d,\mathbb{R}^{d_2})$ on a compact subset, where $\mathbb{T}^d$ is the $d$-dimensional torus. Furthermore, the $\Psi$-FNO is enriched to handle non-periodic inputs as well through zero-padding functions to enforce periodicity. However, zero-padding introduces discontinuities at boundaries, which hurts the smoothness condition for the approximation theorem to hold.

The solutions to SVIEs considered in this study can be thought of as the operator image of the driving Brownian motion. However, such operators are in general, measurable operators between the space of continuous non-periodic paths under 
practical assumptions. As such, approximation properties of $\Psi$-FNOs established by \cite{Kovachki:21} do not apply to the SVIEs at hand. See \cite{LKP:26} for a similar discussion for path-dependnt SDEs and fractional Brownian motion.

To resolve such obstructions, the notion of mirror-padded FNO (MFNO) and interpretation of inputs on grids as linear interpolated versions were introduced in \cite{LKP:26}. Before stating the definition of MFNO, we begin by denoting the pseudo-spectral projection of order $N$ by $\mathcal{I}_N$, which is used to describe DFT on grid size $2N+1$ . For a function $a:[0,T]\rightarrow\mathbb{R}^{d_a}$, define its mirror padding $\mathcal{M}a$ on $\mathbb{T}_{2T}$ by
\[
\mathcal{M}a(t)=
\begin{cases}
a(t),&0\leq t\leq T,\\
a(2T-t),&T<t\leq 2T.
\end{cases}
\]
We also define the truncation operator $\mathcal{T}$ by
\[
\mathcal{T}u=u|_{[0,T]}.
\]

\begin{definition}[Mirror-padded Fourier neural operator]
An MFNO of order $N$ is an operator of the form
\[
\mathcal{N}
=
\mathcal{T}\circ\mathcal{Q}\circ\mathcal{I}_N
\circ\mathcal{L}_L\circ\mathcal{I}_N\circ\cdots
\circ\mathcal{L}_1\circ\mathcal{I}_N
\circ\mathcal{R}\circ\mathcal{M},
\]
where $\mathcal{R}$, $\mathcal{Q}$ and $\mathcal{L}_1,\ldots,\mathcal{L}_L$
are the lift, projection and Fourier layers described above.
\end{definition}

Thus, an MFNO first extends a non-periodic input to a periodic function through mirror padding, applies the usual pseudo-spectral Fourier layers, and restricts the output back to $[0,T]$. The following universal approximation result was established in \cite{LKP:26}.

\begin{theorem}[Universal approximation by MFNOs]
\label{thm:uamfno}
Let
\[
\mathcal{G}:H^1([0,T],\mathbb{R}^{d_a})
\longrightarrow L^2([0,T],\mathbb{R}^{d_u})
\]
be a continuous operator and let
$K\subset H^1([0,T],\mathbb{R}^{d_a})$ be compact. Then, for every
$\epsilon>0$, there exist $N\in\mathbb{N}$ and an MFNO $\mathcal{N}$ of
order $N$ such that
\[
\sup_{a\in K}
\|\mathcal{G}(a)-\mathcal{N}(a)\|_{L^2([0,T],\mathbb{R}^{d_u})}
<\epsilon.
\]
\end{theorem}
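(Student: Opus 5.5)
The idea is to reduce Theorem~\ref{thm:uamfno} to the Kovachki et al.\ universal approximation theorem for $\Psi$-FNOs on the torus, using mirror padding as a continuous embedding.

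\textbf{Step 1: embed into the periodic setting.} First observe that $\mathcal M$ maps $H^1([0,T],\mathbb R^{d_a})$ continuously and linearly into $H^1(\mathbb T_{2T},\mathbb R^{d_a})$. The even reflection $t\mapsto a(2T-t)$ agrees with $a$ at $t=T$, and the periodic identification $0\sim 2T$ matches $a(0)$ with itself. The padded function is therefore continuous on the circle, and its weak derivative is $a'$ on $[0,T]$ and $-a'(2T-t)$ on $[T,2T]$, with no boundary atoms. This gives $\|\mathcal M a\|_{H^1(\mathbb T_{2T})}=\sqrt2\,\|a\|_{H^1}$. Since $\mathcal M$ is continuous, $\mathcal M(K)$ is compact in $H^1(\mathbb T_{2T})$, and $s=1>1/2$ meets the dimension requirement for one-dimensional tori.

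\textbf{Step 2: define a periodic target operator.} Set $\widetilde{\mathcal G}(u):=\mathcal M\bigl(\mathcal G(u|_{[0,T]})\bigr)$ for $u\in H^1(\mathbb T_{2T})$. Restriction to $[0,T]$ is continuous from $H^1(\mathbb T_{2T})$ to $H^1([0,T])$, and padding is an isometry up to the factor $\sqrt 2$ from $L^2([0,T])$ into $L^2(\mathbb T_{2T})$. Hence $\widetilde{\mathcal G}:H^1(\mathbb T_{2T})\to L^2(\mathbb T_{2T})=H^0$ is continuous, and $\widetilde{\mathcal G}(\mathcal M a)$ restricts to $\mathcal G(a)$.

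\textbf{Step 3: approximate and transfer back.} Next, apply the $\Psi$-FNO universality theorem of \cite{Kovachki:21} with $s=1$, $s'=0$ on the compact set $\mathcal M(K)$. This produces $N$ and a $\Psi$-FNO
\[
\widetilde{\mathcal N}=\mathcal Q\circ\mathcal I_N\circ\mathcal L_L\circ\cdots\circ\mathcal I_N\circ\mathcal R
\]
with $\sup_{u\in\mathcal M(K)}\|\widetilde{\mathcal G}(u)-\widetilde{\mathcal N}(u)\|_{L^2(\mathbb T_{2T})}<\epsilon$. Finally set $\mathcal N:=\mathcal T\circ\widetilde{\mathcal N}\circ\mathcal M$, which is an MFNO of order $N$ by definition. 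Truncation does not increase the $L^2$ norm, so
\[
\|\mathcal G(a)-\mathcal N(a)\|_{L^2([0,T])}\le\|\widetilde{\mathcal G}(\mathcal M a)-\widetilde{\mathcal N}(\mathcal M a)\|_{L^2(\mathbb T_{2T})}<\epsilon
\]
for every $a\in K$.

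\textbf{Main obstacle.} The delicate step is matching the exact form of the Kovachki et al.\ theorem. It is stated on the standard torus with pseudo-spectral projections $\mathcal I_N$ interleaved between layers, and it uses activation hypotheses that should coincide with the non-polynomial, Lipschitz, $C^3$ $\sigma$ assumed here. I would handle this by rescaling $\mathbb T_{2T}$ to $\mathbb T=[0,2\pi)$. The rescaling is linear and bounded in $H^1$ and $L^2$, and it commutes with the Fourier structure, so it only alters the multipliers $P_l$. I would also check that the $\Psi$-FNO class used there is exactly the interleaved form in the MFNO definition. If the cited theorem only covers target spaces $H^{s'}$ with $s'\ge0$, then $L^2$ is the admissible case $s'=0$.
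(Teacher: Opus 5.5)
Your reduction is correct: mirror padding is a bounded linear map $H^1([0,T])\to H^1(\mathbb{T}_{2T})$ (there is no jump at $t=T$ or across $0\sim 2T$), so $\mathcal{M}(K)$ is compact and $\mathcal{M}\circ\mathcal{G}\circ\mathcal{T}$ is continuous into $L^2(\mathbb{T}_{2T})$. The $\Psi$-FNO theorem of \citet{Kovachki:21} then applies with $s=1>1/2$ and $s'=0$, and truncation can only decrease the error. The paper does not reprove this statement and instead imports it from \citet{LKP:26}; your argument is the reduction its surrounding discussion points to (mirror padding keeps the $H^1$ regularity that zero padding destroys), and it has no gap.
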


\section{Approximation Theorem for SVIEs}
\label{sec:approx}

We consider the equation
\begin{equation}\label{eq:volterra-h1-driver}
Y_t
=
y_0
+
\int_0^t \phi_0(t-s)b(Y_s)\,ds
+
\int_0^t \phi(t-s)\sigma(Y_s)\,d\omega_s,
\qquad t\in[0,T],
\end{equation}
where $y_0\in\mathbb R^d$, $\omega\in H^1([0,T];\mathbb R^m)$,
\[
b:\mathbb R^d\to\mathbb R^d,
\qquad
\sigma:\mathbb R^d\to\mathbb R^{d\times m},
\qquad
\phi_0\to\mathbb R,\qquad\phi:[0,T]\to\mathbb{R}^{d\times d}
\]
In follows from assumptions \eqref{eq:assumptions} that $\phi_0\in L^1([0,T];\mathbb{R})$ and
$\phi\in L^2([0,T];\mathbb{R}^{d\times d})$, and that there exist positive constants
$L_b,L_\sigma,M_b,M_\sigma$ such that, for all $x,y\in\mathbb R^d$, 
\begin{equation}\label{eq:coeff-assumptions}
|b(x)-b(y)|\le L_b|x-y|,
\qquad
|b(0)|\le M_b,
\end{equation}
and
\begin{equation}\label{eq:sigma-assumptions}
|\sigma(x)-\sigma(y)|\le L_\sigma|x-y|,
\qquad
|\sigma(0)|\le M_\sigma.
\end{equation}
Here $|\cdot|$ denotes the Euclidean norm for vectors and the Frobenius norm for
matrices.

\begin{theorem}[Well-definedness of the Volterra solution map in $L^2$]
\label{thm:welldefVSM}
Under assumptions \eqref{eq:coeff-assumptions}--\eqref{eq:sigma-assumptions},
with $\phi_0\in L^1([0,T];\mathbb{R})$ and $\phi\in L^2([0,T];\mathbb {R}^{d\times d})$,
for every $y_0\in\mathbb{R}^d$ and $\omega\in H^1([0,T];\mathbb R^{m})$, equation
\eqref{eq:volterra-h1-driver} admits a unique solution
\[
Y^{y_0,\omega}\in L^2([0,T];\mathbb R^d).
\]
Consequently, the solution map
\[
F:H^1([0,T];\mathbb R^{d+m})\to L^2([0,T];\mathbb R^d),
\qquad
F(\gamma,\omega):=Y^{\gamma(0),\omega},
\]
is well-defined.
\end{theorem}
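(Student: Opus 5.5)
Since $\omega\in H^1([0,T];\mathbb R^m)$ is absolutely continuous with $\dot\omega\in L^2$, I will read the stochastic-looking integral as the ordinary integral $\int_0^t\phi(t-s)\sigma(Y_s)\dot\omega_s\,ds$. Then \eqref{eq:volterra-h1-driver} becomes a deterministic Volterra equation $Y=\Phi(Y)$ with
\[
\Phi(Y)_t:=y_0+\int_0^t\phi_0(t-s)b(Y_s)\,ds+\int_0^t\phi(t-s)\sigma(Y_s)\dot\omega_s\,ds .
\]
The plan is a contraction argument for $\Phi$ on $L^2([0,T];\mathbb R^d)$, using a weighted norm $\|Y\|_\lambda^2:=\int_0^Te^{-2\lambda t}|Y_t|^2\,dt$, or equivalently iterated kernel (Picard factorial-type) bounds.

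\textbf{Step 1: $\Phi$ maps $L^2$ into $L^2$.} The constant $y_0$ is trivially in $L^2$. For the drift term, $b(Y)\in L^2$ by the linear growth implied by \eqref{eq:coeff-assumptions}, and Young's convolution inequality gives
\[
\|\phi_0*b(Y)\|_{L^2}\le\|\phi_0\|_{L^1}\|b(Y)\|_{L^2}.
\]
The noise term is the delicate one, since $\sigma(Y)\dot\omega$ is only $L^1$ (a product of two $L^2$ functions). I will use Young's inequality with $\phi\in L^2$ and $\sigma(Y)\dot\omega\in L^1$:
\[
\bigl\|\phi*(\sigma(Y)\dot\omega)\bigr\|_{L^2}\le\|\phi\|_{L^2}\,\|\sigma(Y)\|_{L^2}\,\|\dot\omega\|_{L^2}.
\]
Here the matrix-vector products are handled via the Frobenius bound $|Mv|\le|M||v|$.

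\textbf{Step 2: a Volterra-type Lipschitz estimate and contraction.} A plain global contraction fails, because $\|\dot\omega\|_{L^2}$ can be large. I will instead derive a pointwise estimate. By Cauchy--Schwarz in $s$,
\[
\Bigl|\int_0^t\phi(t-s)\bigl(\sigma(Y_s)-\sigma(\widehat Y_s)\bigr)\dot\omega_s\,ds\Bigr|^2\le L_\sigma^2\Bigl(\int_0^t|\phi(t-s)|^2|\dot\omega_s|^2ds\Bigr)^{\!1/2}\Bigl(\int_0^t|Y_s-\widehat Y_s|^2ds\Bigr)^{\!1/2}\cdots
\]
This particular splitting is awkward, so I will instead first treat the equation in $C([0,T])$ and in $L^2$ simultaneously. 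Cauchy--Schwarz in a different grouping gives
\[
|\Phi(Y)_t-\Phi(\widehat Y)_t|\le L_b\int_0^t|\phi_0(t-s)|\,|\Delta_s|\,ds+L_\sigma\|\phi\|_{L^2}\Bigl(\int_0^t|\dot\omega_s|^2|\Delta_s|^2ds\Bigr)^{1/2},
\]
where $\Delta:=Y-\widehat Y$. This suggests working in the sup-norm with the weight $e^{-\lambda A(t)}$, where
\[
A(t):=t+\int_0^t|\dot\omega_s|^2\,ds .
\]
Such a weighted Bielecki-type norm absorbs both the $L^1$ kernel $\phi_0$ and the integrable factor $|\dot\omega|^2$.

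\textbf{Step 3: conclude.} I will show that $\Phi$ maps $C([0,T])$ into itself. Continuity of the convolutions follows from continuity of translations in $L^1$ and $L^2$, as in the proof of Lemma~\ref{lem:deterministic}. I will then check that $\Phi$ is a contraction for the weighted sup-norm once $\lambda$ is large. The drift piece of the contraction bound is where I expect the main obstacle: $\phi_0$ is only $L^1$, so I will split it into a part with small $L^1$ norm near the origin and a bounded remainder, or simply use that $\phi_0\in W^{1,1}$ is bounded under \eqref{eq:assumptions}. Banach's fixed point theorem then gives a continuous solution, hence one in $L^2$. For uniqueness in $L^2$, I will take any $L^2$ solution $\widehat Y$ and note that the right-hand side of the equation is continuous (by Step 1 and the translation argument), so $\widehat Y$ agrees a.e.\ with a continuous solution, and contraction uniqueness applies. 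Finally, $F(\gamma,\omega)=Y^{\gamma(0),\omega}$ is well-defined, since $H^1$ functions have continuous representatives, so $\gamma(0)$ makes sense.
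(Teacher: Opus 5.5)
Your existence argument is correct, and it is a genuinely different route from the paper's. With $A(t)=t+\int_0^t|\dot\omega_s|^2\,ds$, the bound $\int_0^t|\dot\omega_s|^2e^{2\lambda A(s)}\,ds\le e^{2\lambda A(t)}/(2\lambda)$ gives the noise factor $L_\sigma\|\phi\|_{L^2}(2\lambda)^{-1/2}$. The inequality $A(t)-A(s)\ge t-s$ gives the drift factor $L_b\int_0^Te^{-\lambda r}|\phi_0(r)|\,dr$, which tends to $0$ by dominated convergence. So the drift is no obstacle, and you should not invoke $\phi_0\in W^{1,1}$, which is not a hypothesis of this theorem.

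The gap is uniqueness in $L^2$, which is part of the statement. You assert that for an arbitrary $L^2$ solution $\widehat Y$ the right-hand side $\Phi(\widehat Y)$ is continuous. Under the stated hypotheses, however, $b(\widehat Y)\in L^2$, and since $\sigma$ only has linear growth, $\sigma(\widehat Y)\dot\omega\in L^1$ only. The two convolutions are therefore of type $L^1*L^2$ and $L^2*L^1$. These lie in $L^2$ but need not be bounded, let alone continuous. The translation argument needs $b(\widehat Y)$ and $\sigma(\widehat Y)$ bounded, which is essentially what you are trying to prove. Young's inequality gives no gain of integrability here, so there is no bootstrap.

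Nor can you compare $\widehat Y$ with your continuous solution through your pointwise estimate: $\int_0^t|\dot\omega_s|^2|Y_s-\widehat Y_s|^2\,ds$ may be infinite when $\widehat Y$ is only in $L^2$. As written, you have uniqueness only among continuous solutions. The step would be valid if $\phi_0$ and $\phi$ were continuous, as under \eqref{eq:assumptions}, but the theorem assumes only $\phi_0\in L^1$ and $\phi\in L^2$.

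The missing idea is the one you discarded in Step 2. A global contraction does hold on all of $L^2$ once you use the weighted norm $\|X\|_\lambda=(\int_0^Te^{-2\lambda t}|X_t|^2\,dt)^{1/2}$ and move the weight onto the kernel. Set $\phi_\lambda(r)=e^{-\lambda r}\phi(r)$ and $E=Y-\widehat Y$, and write $e^{-\lambda t}\int_0^t|\phi(t-s)||\dot\omega_s||E_s|\,ds=\int_0^t|\phi_\lambda(t-s)||\dot\omega_s|\,e^{-\lambda s}|E_s|\,ds$. Apply Cauchy--Schwarz pairing $|\phi_\lambda(t-s)||\dot\omega_s|$ against $e^{-\lambda s}|E_s|$, then integrate in $t$ and use Fubini. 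The result is the Lipschitz constant $L_b\|e^{-\lambda\cdot}\phi_0\|_{L^1}+L_\sigma\|\phi_\lambda\|_{L^2}\|\dot\omega\|_{L^2}$, which tends to $0$ as $\lambda\to\infty$ by dominated convergence, however large $\|\dot\omega\|_{L^2}$ is.

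This is the paper's proof, and it yields existence and uniqueness in $L^2$ at once. If you add this estimate for the uniqueness part, your proof is complete. You also keep the extra information, not recorded in the paper, that the $L^2$ solution has a continuous representative.
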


\begin{proof}
We fix $y_0\in\mathbb{R}^d$ and $\omega\in H^1([0,T];\mathbb R^m)$, then define $\Gamma(Y)$ by

\[
\Gamma(Y)(t)
:=
y_0
+
\int_0^t \phi_0(t-s)b(Y_s)\,ds
+
\int_0^t \phi(t-s)\sigma(Y_s)\,d\omega_s.
\]
for $Y\in L^2([0,T];\mathbb{R}^d)$.

We show that $\Gamma$ has a unique fixed point in $L^2([0,T];\mathbb R^d)$. To this end, we begin by defining a norm $\|\cdot\|_\lambda$ on $L^2([0,T];\mathbb{R}^d)$ by
\[
\|X\|_\lambda
:=
\left(
\int_0^T e^{-2\lambda t}|X_t|^2\,dt
\right)^{1/2}
\]
for each $\lambda>0$. Then, we have
\[
e^{-\lambda T}\|X\|_{L^2([0,T])}
\le
\|X\|_\lambda
\le
\|X\|_{L^2([0,T])}.
\]
That is, each $\|\cdot\|_\lambda$ is a norm equivalent to the standard norm $\|\cdot\|_{L^2([0,T];\mathbb{R}^d)}$. Also, denoting
\[
\phi_\lambda(r):=e^{-\lambda r}\phi(r),
\qquad
\phi_{0,\lambda}(r):=e^{-\lambda r}\phi_0(r),
\qquad r\in[0,T],
\]
we obtain
\begin{equation}
\label{Climit}
\|\phi_\lambda\|_{L^2([0,T])}\to0,
\qquad
\|\phi_{0,\lambda}\|_{L^1([0,T])}\to0
\qquad\text{as }\lambda\to\infty.
\end{equation}
by dominated convergence, since $\phi\in L^2([0,T])$ and $\phi_0\in L^1([0,T])$.

First, we note that $\Gamma$ maps $L^2([0,T];\mathbb R^d)$ into itself. Indeed, \eqref{eq:coeff-assumptions} implies
\[
|b(Y_s)|
\le
M_b+L_b|Y_s|,
\]
and therefore

\begin{equation}
\label{L21}
\left\|
\int_0^\cdot \phi_0(\cdot-s)b(Y_s)\,ds
\right\|_\lambda
\le
\|\phi_{0,\lambda}\|_{L^1([0,T])}
\left(M_bT^{1/2}+L_b\|Y\|_\lambda\right).
\end{equation}

Furthemore, we have
\[
|\sigma(Y_s)|
\le
M_\sigma+L_\sigma|Y_s|,
\]
and hence, applying Young's convolution inequality yields
\begin{equation}
\label{L22}
\begin{aligned}
\left\|
\int_0^\cdot \phi(\cdot-s)\sigma(Y_s)\,d\omega_s
\right\|_\lambda
&=\left\|e^{-\lambda\cdot}\int_0^\cdot \phi(\cdot-s)\sigma(Y_s)\dot\omega_s\,ds\right\|_{L^2([0,T])}\\
&=\left\|\int_0^\cdot e^{-\lambda(\cdot-s)}\phi(\cdot-s)\left(e^{-\lambda s}\sigma(Y_s)\dot\omega_s\right)\,ds\right\|_{L^2([0,T])}\\
&\le
\|\phi_\lambda\|_{L^2([0,T])}
\left\|
e^{-\lambda\cdot}|\sigma(Y)|\,|\dot \omega|
\right\|_{L^1(0,T)}
\\
&\le
\|\phi_\lambda\|_{L^2([0,T])}
\left(
M_\sigma T^{1/2}\|\dot \omega\|_{L^2([0,T])}
+
L_\sigma\|Y\|_\lambda\|\dot\omega\|_{L^2([0,T])}
\right).
\end{aligned}
\end{equation}
Since $Y\in L^2([0,T]\mathbb{R}^d)$ and $\omega\in H^1([0,T];\mathbb{R}^m)$ so that $\dot\omega\in L^2([0,T];\mathbb{R}^m)$, we conclude $\Gamma(Y)\in L^2([0,T];\mathbb R^d)$ from \eqref{L21} and \eqref{L22}.

Next, we show that $\Gamma$ is a contraction. Let $Y^1,Y^2\in L^2([0,T];\mathbb R^d)$, then by the Lipschitz assumptions, we have
\begin{equation}
\label{C0}
\begin{aligned}
|\Gamma (Y^1)(t)-\Gamma (Y^2)(t)|
&\le
L_b\int_0^t |\phi_0(t-s)|\,|Y^1_s-Y^2_s|\,ds
\\
&\quad+
L_\sigma\int_0^t
|\phi(t-s)|\,|\dot\omega_s|\,|Y^1_s-Y^2_s|\,ds.
\end{aligned}
\end{equation}
For the first term, applying Young's convolutional inequality  yields

\begin{equation}
\label{C1}
\begin{aligned}
\left\|
\int_0^\cdot |\phi_0(\cdot-s)|\,|Y^1_s-Y^2_s|\,ds
\right\|_\lambda
&\le \|\phi_{0,\lambda}\|_{L^1([0,T])}
\|e^{-\lambda\cdot}|Y^1-Y^2|\|_{L^2([0,T])}\\
&=\|\phi_{0,\lambda}\|_{L^1([0,T])}
\|Y^1-Y^2\|_\lambda.
\end{aligned}
\end{equation}

To deal with the second term, we denote
\[
J(t)
:=
\int_0^t |\phi(t-s)|\,|\dot\omega_s|\,|E_s|\,ds .
\]
where $E:=Y^1-Y^2$. It is straight forward that
\[
e^{-\lambda t}J(t)
=
\int_0^t
|\phi_\lambda(t-s)|\,|\dot\omega_s|\,e^{-\lambda s}|E_s|\,ds,
\]
and hence
\[
\|J\|_\lambda^2
=
\int_0^T
\left(
\int_0^t
|\phi_\lambda(t-s)|\,|\dot\omega_s|\,e^{-\lambda s}|E_s|\,ds
\right)^2dt.
\]
Applying the Cauchy--Schwartz inequality, we obtain
\[
\begin{aligned}
&\left(
\int_0^t
|\phi_\lambda(t-s)|\,|\dot\omega_s|\,e^{-\lambda s}|E_s|\,ds
\right)^2 \\
&\le
\left(
\int_0^t
|\phi_\lambda(t-s)|^2|\dot\omega_s|^2\,ds
\right)
\left(
\int_0^t
e^{-2\lambda s}|E_s|^2\,ds
\right).
\end{aligned}
\]
Since
\[
\int_0^t e^{-2\lambda s}|E_s|^2\,ds
\le
\int_0^T e^{-2\lambda s}|E_s|^2\,ds
=
\|E\|_\lambda^2,
\]
we arrive at
\[
\|J\|_\lambda^2
\le
\|E\|_\lambda^2
\int_0^T\int_0^t
|\phi_\lambda(t-s)|^2|\dot\omega_s|^2\,ds\,dt.
\]
By Fubini's theorem,
\[
\begin{aligned}
\int_0^T\int_0^t
|\phi_\lambda(t-s)|^2|\dot\omega_s|^2\,ds\,dt
&=
\int_0^T
|\dot\omega_s|^2
\int_s^T
|\phi_\lambda(t-s)|^2\,dt\,ds \\
&=
\int_0^T
|\dot\omega_s|^2
\int_0^{T-s}
|\phi_\lambda(r)|^2\,dr\,ds \\
&\le
\|\phi_\lambda\|_{L^2([0,T])}^2
\|\dot{\omega}\|_{L^2([0,T])}^2.
\end{aligned}
\]
Consequently,
\[
\|J\|_\lambda
\le
\|\phi_\lambda\|_{L^2([0,T])}
\|\dot\omega\|_{L^2([0,T])}
\|E\|_\lambda.
\]
That is,
\begin{equation}
\label{C2}
\left\|
\int_0^\cdot
|\phi(\cdot-s)|\,|\dot\omega_s|\,|Y^1_s-Y^2_s|\,ds
\right\|_\lambda
\le
\|\phi_\lambda\|_{L^2([0,T])}
\|\dot\omega\|_{L^2([0,T])}
\|Y^1-Y^2\|_\lambda.
\end{equation}
Thus, plugging in \eqref{C1} and \eqref{C2} to \eqref{C0}, we arrive at
\[
\|\Gamma(Y^1)-\Gamma(Y^2)\|_\lambda
\le
\left(
L_b\|\phi_{0,\lambda}\|_{L^1([0,T])}
+
L_\sigma
\|\phi_\lambda\|_{L^2([0,T])}
\|\dot{\omega}\|_{L^2([0,T])}
\right)
\|Y^1-Y^2\|_\lambda.
\]

By \eqref{Climit}, there exists a $\lambda_0>0$ such that
\[
L_b\|\phi_{0,\lambda}\|_{L^1([0,T])}
+
L_\sigma\|\dot{\omega}\|_{L^2([0,T])}
\|\phi_\lambda\|_{L^2([0,T])}
<1
\]
It is clear that for $\lambda>\lambda_0$, $\Gamma $ is a contraction on $L^2([0,T];\mathbb R^d)$ equipped with
the equivalent norm $\|\cdot\|_\lambda$. By Banach's fixed point theorem,
there exists a unique $Y^{y_0,\omega}\in L^2([0,T];\mathbb R^d)$ such that
\[
Y^{y_0,\omega}=\Gamma (Y^{y_0,\omega}).
\]
This is exactly equation \eqref{eq:volterra-h1-driver}, and therefore, the operator $F:H^1([0,T];\mathbb{R}^{d+m})\to L^2([0,T];\mathbb{R}^d)$ given by $F(\gamma,\omega):=Y^{\gamma(0),\omega}$
is well-defined.
\end{proof}

\begin{theorem}[Continuity of the Volterra solution map in $L^2$]
\label{thm:continuityVSM}
The operator
\[
F:H^1([0,T];\mathbb R^{d+m})
\to
L^2([0,T];\mathbb R^d)
\]
defined in Theorem \ref{thm:welldefVSM} is continuous.
\end{theorem}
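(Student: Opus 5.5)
Fix $(\gamma,\omega)\in H^1([0,T];\mathbb R^{d+m})$ and a sequence $(\gamma^n,\omega^n)\to(\gamma,\omega)$ in $H^1$. Write $Y=F(\gamma,\omega)$, $Y^n=F(\gamma^n,\omega^n)$, $y_0=\gamma(0)$, $y_0^n=\gamma^n(0)$. Since $H^1([0,T])\hookrightarrow C([0,T])$ continuously, $|y_0^n-y_0|\le C\|\gamma^n-\gamma\|_{H^1}\to0$, and $\|\dot\omega^n-\dot\omega\|_{L^2}\to0$. In particular $\sup_n\|\dot\omega^n\|_{L^2}\le K<\infty$. The plan is to redo the weighted-norm contraction estimate from the proof of Theorem~\ref{thm:welldefVSM} with a single $\lambda$ chosen uniformly in $n$, and then compare the two fixed-point equations directly.

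\textbf{Step 1 (uniform $\lambda$).} By \eqref{Climit}, choose $\lambda$ so large that
\[
\kappa:=L_b\|\phi_{0,\lambda}\|_{L^1([0,T])}+L_\sigma K\|\phi_\lambda\|_{L^2([0,T])}<1 .
\]
Then every map $\Gamma_n$ (built from $y_0^n,\omega^n$) is a $\kappa$-contraction in $\|\cdot\|_\lambda$. The same $\lambda$ works for $\Gamma$, the map built from $(y_0,\omega)$, since $\|\dot\omega\|_{L^2}\le K$ as well. The bounds \eqref{L21}--\eqref{L22} applied to $Y=\Gamma(Y)$ give
\[
\|Y\|_\lambda\le |y_0|T^{1/2}+\|\phi_{0,\lambda}\|_{L^1}M_bT^{1/2}+\|\phi_\lambda\|_{L^2}M_\sigma T^{1/2}K+\kappa\|Y\|_\lambda ,
\]
so $\|Y\|_\lambda<\infty$ is controlled. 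This is only needed for the fixed limit $Y$.

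\textbf{Step 2 (comparison).} Write
\[
Y^n-Y=\Gamma_n(Y^n)-\Gamma_n(Y)+\bigl(\Gamma_n(Y)-\Gamma(Y)\bigr).
\]
The first difference has $\lambda$-norm at most $\kappa\|Y^n-Y\|_\lambda$. The second difference equals
\[
(y_0^n-y_0)+\int_0^\cdot\phi(\cdot-s)\sigma(Y_s)\bigl(\dot\omega^n_s-\dot\omega_s\bigr)\,ds .
\]
By the Young/Cauchy--Schwarz argument of \eqref{L22}, its $\lambda$-norm is bounded by
\[
T^{1/2}|y_0^n-y_0|+\|\phi_\lambda\|_{L^2}\bigl(M_\sigma T^{1/2}+L_\sigma\|Y\|_\lambda\bigr)\|\dot\omega^n-\dot\omega\|_{L^2}.
\]
Hence
\[
\|Y^n-Y\|_\lambda\le(1-\kappa)^{-1}\Bigl[T^{1/2}|y_0^n-y_0|+C_Y\|\dot\omega^n-\dot\omega\|_{L^2}\Bigr]\to0 .
\]
Norm equivalence ($\|\cdot\|_{L^2}\le e^{\lambda T}\|\cdot\|_\lambda$) then gives $Y^n\to Y$ in $L^2$. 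Since $H^1$ is metric, sequential continuity suffices, and in fact this yields local Lipschitz continuity.

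\textbf{Main obstacle.} The key subtlety is making the contraction constant uniform in $n$, since $\kappa$ depends on $\|\dot\omega\|_{L^2}$. This is handled by the boundedness of convergent sequences in Step 1. A second check is that the estimate \eqref{L22} applied to $|\sigma(Y_s)|\,|\dot\omega^n_s-\dot\omega_s|$ really only needs $e^{-\lambda s}|\sigma(Y_s)|\in L^2$. This holds by linear growth and $Y\in L^2$, via Cauchy--Schwarz:
\[
\bigl\|e^{-\lambda\cdot}|\sigma(Y)|\,|\dot\omega^n-\dot\omega|\bigr\|_{L^1}\le\bigl(M_\sigma T^{1/2}+L_\sigma\|Y\|_\lambda\bigr)\|\dot\omega^n-\dot\omega\|_{L^2}.
\]
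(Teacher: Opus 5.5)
Your proposal is correct and follows essentially the same route as the paper. Both arguments fix one weight $\lambda$ uniformly in $n$ using $\sup_n\|\dot\omega^n\|_{L^2}<\infty$. Your splitting $\Gamma_n(Y^n)-\Gamma_n(Y)+(\Gamma_n(Y)-\Gamma(Y))$ is the paper's splitting of $\sigma(Y^n)\dot\omega^n-\sigma(Y)\dot\omega$, and the contraction term is absorbed in the same way; the paper merely takes the constant below $1/2$ instead of $\kappa<1$.
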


\begin{proof}
Let
\[
(\gamma^n,\omega^n)\to(\gamma,\omega)
\qquad\text{in }H^1([0,T];\mathbb R^{d+m}),
\]
and denote
\[
Y^n:=F(\gamma^n,\omega^n),
\qquad
Y:=F(\gamma,\omega).
\]
By Theorem \ref{thm:welldefVSM}, $Y^n$ and $Y$ are well-defined elements of
$L^2([0,T];\mathbb R^d)$, and we prove our claim by showing that
\[
Y^n\to Y
\qquad\text{in }L^2([0,T];\mathbb R^d).
\]

We begin by noting that since
\[
\omega^n\to\omega
\qquad\text{in }H^1([0,T];\mathbb R^m),
\]
we have
\[
\dot\omega^n\to\dot\omega
\qquad\text{in }L^2([0,T];\mathbb R^m).
\]
Therefore, there exists a constant $M>0$ such that
\[
\|\dot\omega^n\|_{L^2([0,T])}\le M
\]
for every $n\in\mathbb N$. We use the norm $\|\cdot\|_\lambda$ and the functions $\phi_\lambda$ and $\phi_{0,\lambda}$ introduced
in the proof of Theorem \ref{thm:welldefVSM}. By \eqref{Climit}, we may choose
$\lambda_0>0$ sufficiently large such that
\begin{equation}
\label{eq:continuity-choice-lambda}
L_b\|\phi_{0,\lambda}\|_{L^1([0,T])}
+
L_\sigma M\|\phi_\lambda\|_{L^2([0,T])}
<
\frac{1}{2}.
\end{equation}
for any $\lambda>\lambda_0$

We denote
\[
E^n:=Y^n-Y.
\]
Subtracting the equations satisfied by $Y^n$ and $Y$, we obtain
\[
\begin{aligned}
E_t^n
&=
\gamma^n(0)-\gamma(0)
+
\int_0^t
\phi_0(t-s)\left(b(Y_s^n)-b(Y_s)\right)\,ds
\\
&\quad+
\int_0^t
\phi(t-s)
\left(
\sigma(Y_s^n)\dot\omega_s^n
-
\sigma(Y_s)\dot\omega_s
\right)\,ds.
\end{aligned}
\]
From
\[
\sigma(Y_s^n)\dot\omega_s^n
-
\sigma(Y_s)\dot\omega_s
=
\left(
\sigma(Y_s^n)-\sigma(Y_s)
\right)\dot\omega_s^n
+
\sigma(Y_s)
\left(
\dot\omega_s^n-\dot\omega_s
\right)
\]
and the Lipschitz assumptions, we have
\begin{equation}
\label{eq:continuity-pointwise}
\begin{aligned}
|E_t^n|
&\le
|\gamma^n(0)-\gamma(0)|
+
L_b\int_0^t|\phi_0(t-s)|\,|E_s^n|\,ds
\\
&\quad+
L_\sigma\int_0^t
|\phi(t-s)|\,|\dot\omega_s^n|\,|E_s^n|\,ds
\\
&\quad+
\int_0^t
|\phi(t-s)|\,|\sigma(Y_s)|
|\dot\omega_s^n-\dot\omega_s|\,ds.
\end{aligned}
\end{equation}

We begin by dealing with the first two terms. We have
\begin{equation}
\label{eq:continuity-first-term}
\begin{aligned}
\|\gamma^n(0)-\gamma(0)\|_\lambda
&=
\left(
\int_0^T e^{-2\lambda t}
|\gamma^n(0)-\gamma(0)|^2\,dt
\right)^{1/2}
\\
&\le
T^{1/2}|\gamma^n(0)-\gamma(0)|.
\end{aligned}
\end{equation}
Furthermore, since
\[
e^{-\lambda t}\int_0^t|\phi_0(t-s)|\,|E_s^n|\,ds
=
\int_0^t
|\phi_{0,\lambda}(t-s)|e^{-\lambda s}|E_s^n|\,ds,
\]
applying Young's convolution inequality yields

\begin{equation}
\label{eq:continuity-second-term}
\begin{aligned}
\left\|\int_0^\cdot|\phi_0(\cdot-s)|\,|E_s^n|\,ds\right\|_\lambda
&=
\left\|
\int_0^\cdot
|\phi_{0,\lambda}(\cdot-s)|e^{-\lambda s}|E_s^n|\,ds
\right\|_{L^2([0,T])}
\\
&\le
\|\phi_{0,\lambda}\|_{L^1([0,T])}
\|e^{-\lambda\cdot}|E^n|\|_{L^2([0,T])}
\\
&=
\|\phi_{0,\lambda}\|_{L^1([0,T])}\|E^n\|_\lambda.
\end{aligned}
\end{equation}

To deal with the third term, we begin by noting that
\[
e^{-\lambda t}
\int_0^t
|\phi(t-s)|\,|\dot\omega_s^n|\,|E_s^n|\,ds
=
\int_0^t
|\phi_\lambda(t-s)|e^{-\lambda s}
|\dot\omega_s^n|\,|E_s^n|\,ds.
\]
Hence, applying Young's convolution inequality, we obtain
\[
\left\|
\int_0^\cdot
|\phi(\cdot-s)|\,|\dot\omega_s^n|\,|E_s^n|\,ds
\right\|_\lambda
\le
\|\phi_\lambda\|_{L^2([0,T])}
\left\|
e^{-\lambda\cdot}|\dot\omega^n|\,|E^n|
\right\|_{L^1([0,T])}.
\]
Next, applying the Cauchy--Schwarz inequality yields
\[
\begin{aligned}
\left\|
e^{-\lambda\cdot}|\dot\omega^n|\,|E^n|
\right\|_{L^1([0,T])}
&=
\int_0^T
|\dot\omega_s^n|e^{-\lambda s}|E_s^n|\,ds
\\
&\le
\left(
\int_0^T|\dot\omega_s^n|^2\,ds
\right)^{1/2}
\left(
\int_0^T e^{-2\lambda s}|E_s^n|^2\,ds
\right)^{1/2}
\\
&=
\|\dot\omega^n\|_{L^2([0,T])}\|E^n\|_\lambda.
\end{aligned}
\]
Consequently,
\begin{equation}
\label{eq:continuity-third-term}
\left\|
\int_0^\cdot
|\phi(\cdot-s)|\,|\dot\omega_s^n|\,|E_s^n|\,ds
\right\|_\lambda
\le
M\|\phi_\lambda\|_{L^2([0,T])}\|E^n\|_\lambda.
\end{equation}

Now, we deal with the last term. Similar to before, we have
\[
\begin{aligned}
&e^{-\lambda t}
\int_0^t
|\phi(t-s)|\,|\sigma(Y_s)|
|\dot\omega_s^n-\dot\omega_s|\,ds
\\
&=
\int_0^t
|\phi_\lambda(t-s)|e^{-\lambda s}|\sigma(Y_s)|
|\dot\omega_s^n-\dot\omega_s|\,ds.
\end{aligned}
\]
Applying Young's convolution inequality yields
\[
\begin{aligned}
&\left\|
\int_0^\cdot
|\phi(\cdot-s)|\,|\sigma(Y_s)|
|\dot\omega_s^n-\dot\omega_s|\,ds
\right\|_\lambda
\\
&\le
\|\phi_\lambda\|_{L^2([0,T])}
\left\|
e^{-\lambda\cdot}|\sigma(Y)|
|\dot\omega^n-\dot\omega|
\right\|_{L^1([0,T])}.
\end{aligned}
\]
Next, applying the Cauchy--Schwarz inequality, we obtain
\[
\begin{aligned}
&\left\|
e^{-\lambda\cdot}|\sigma(Y)|
|\dot\omega^n-\dot\omega|
\right\|_{L^1([0,T])}
\\
&\le
\left\|e^{-\lambda\cdot}\sigma(Y)\right\|_{L^2([0,T])}
\|\dot\omega^n-\dot\omega\|_{L^2([0,T])}.
\end{aligned}
\]
Furthermore, since
\[
|\sigma(Y_s)|\le M_\sigma+L_\sigma|Y_s|,
\]
we have
\[
\begin{aligned}
\left\|e^{-\lambda\cdot}\sigma(Y)\right\|_{L^2([0,T])}
&\le
M_\sigma
\left(
\int_0^T e^{-2\lambda s}\,ds
\right)^{1/2}
+
L_\sigma\|Y\|_\lambda
\\
&\le
M_\sigma T^{1/2}+L_\sigma\|Y\|_\lambda.
\end{aligned}
\]
Consequently,
\begin{equation}
\label{eq:continuity-last-term}
\begin{aligned}
&\left\|
\int_0^\cdot
|\phi(\cdot-s)|\,|\sigma(Y_s)|
|\dot\omega_s^n-\dot\omega_s|\,ds
\right\|_\lambda
\\
&\le
\|\phi_\lambda\|_{L^2([0,T])}
\left(
M_\sigma T^{1/2}+L_\sigma\|Y\|_\lambda
\right)
\|\dot\omega^n-\dot\omega\|_{L^2([0,T])}.
\end{aligned}
\end{equation}

Combining the above estimates \eqref{eq:continuity-first-term},\eqref{eq:continuity-second-term}, \eqref{eq:continuity-third-term}, and \eqref{eq:continuity-last-term} with \eqref{eq:continuity-pointwise}, we arrive at
\[
\begin{aligned}
\|E^n\|_\lambda
&\le
T^{1/2}|\gamma^n(0)-\gamma(0)|
\\
&\quad+
\left(
L_b\|\phi_{0,\lambda}\|_{L^1([0,T])}
+
L_\sigma M\|\phi_\lambda\|_{L^2([0,T])}
\right)
\|E^n\|_\lambda
\\
&\quad+
\|\phi_\lambda\|_{L^2([0,T])}
\left(
M_\sigma T^{1/2}+L_\sigma\|Y\|_\lambda
\right)
\|\dot\omega^n-\dot\omega\|_{L^2([0,T])}.
\end{aligned}
\]
Thus, choosing $\lambda>\lambda_0$, we obtain
\begin{equation}
\label{eq:continuity-final}
\begin{aligned}
\|E^n\|_\lambda
&\le
2T^{1/2}|\gamma^n(0)-\gamma(0)|
\\
&\quad+
2\|\phi_\lambda\|_{L^2([0,T])}
\left(
M_\sigma T^{1/2}+L_\sigma\|Y\|_\lambda
\right)
\|\dot\omega^n-\dot\omega\|_{L^2([0,T])}.
\end{aligned}
\end{equation}
by \eqref{eq:continuity-choice-lambda}. Here, since $\gamma^n\to\gamma$ in $H^1([0,T];\mathbb R^d)$, it is straightforward that
\[
|\gamma^n(0)-\gamma(0)|\to0.
\]
Also, since $\omega^n\to\omega$ in $H^1([0,T];\mathbb R^m)$, we have
\[
\|\dot\omega^n-\dot\omega\|_{L^2([0,T])}\to0.
\]
Thus, we conclude
\[
\|E^n\|_\lambda\to0
\]
from \eqref{eq:continuity-final}, and therefore, the operator $F$ is continuous.
\end{proof}

\begin{theorem}[MFNO approximation for SVIEs]
\label{thm:MFNOapprox}
Suppose that $b$, $\sigma$, $\phi_0$ and $\phi$ satisfy
the assumptions in \eqref{eq:assumptions}.
Let $y_0\in \mathbb{R}^d$ and let $Y^{y_0}$
be the solution to \eqref{eq:stratonovich}.
Moreover, let $B^n$ be the non-adapted piecewise linear
interpolation of $B$ with respect to the dyadic partition
\[
\pi_n
:=
\{t_k^n:0\le k\le 2^n\},
\qquad
t_k^n:=kT2^{-n},
\qquad n\in\mathbb N.
\]
Then, for any $\epsilon,\epsilon'>0$ and $M\in\mathbb N$,
there exist $N,M_0\in\mathbb N$, an event $D$ and an MFNO
$\mathcal N:H^1([0,T],\mathbb R^{d+m})
\to L^2([0,T],\mathbb R^d)$ of order $N$ such that
$\mathbb P(D)>1-\epsilon'$ and
\[
\left(
\mathbb E\left[
\|Y^{y_0}-\mathcal N(y_0,B^n)\|_{L^2}^2
\mid D
\right]
\right)^{1/2}
<\epsilon
\]
whenever $M_0\le n\le M_0+M$.
Here $y_0$ is identified with the constant path $t\mapsto y_0$.
\end{theorem}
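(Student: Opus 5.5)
The plan is to split the error $\|Y^{y_0}-\mathcal N(y_0,B^n)\|_{L^2}$ into a Wong--Zakai part and an operator-approximation part, using the pathwise solution map $F$ of Theorem~\ref{thm:welldefVSM}. For each $n$, $B^n$ is piecewise linear, so $B^n\in H^1([0,T];\R^m)$. The Riemann--Stieltjes solution $Y^n$ of \eqref{eq:approximation} is continuous and solves \eqref{eq:volterra-h1-driver} with $\omega=B^n$. By the uniqueness in Theorem~\ref{thm:welldefVSM}, $Y^n=F(y_0,B^n)$ as elements of $L^2$. Hence
\[
\|Y^{y_0}-\mathcal N(y_0,B^n)\|_{L^2}\le \|Y^{y_0}-Y^n\|_{L^2}+\|F(y_0,B^n)-\mathcal N(y_0,B^n)\|_{L^2}.
\]

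\textbf{Wong--Zakai term.} By Theorem~\ref{thm:main}, $\E\|Y^{y_0}-Y^n\|_{L^2}^2\le T\,\E\supnorm{Y^n-Y}{T}^2\le CTh_n\log(2T/h_n)\to0$. Choose $M_0$ large enough that this quantity is below $\epsilon^2(1-\epsilon')/8$ for all $n\ge M_0$. Then for any event $D$ with $\mathbb P(D)>1-\epsilon'$ we get $\E[\|Y^{y_0}-Y^n\|_{L^2}^2\mid D]\le \E\|Y^{y_0}-Y^n\|_{L^2}^2/\mathbb P(D)<\epsilon^2/8$. Note that $M_0$ is fixed before $D$ is constructed, which is consistent with the order of quantifiers.

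\textbf{Compact set.} To apply Theorem~\ref{thm:uamfno} I need a compact $K\subset H^1([0,T];\R^{d+m})$ that contains $(y_0,B^n)$ on $D$ for all $M_0\le n\le M_0+M$. Take $K=\bigcup_{n=M_0}^{M_0+M}K_n$ with
\[
K_n:=\{(y_0,\omega):\omega\text{ piecewise linear on }\pi_n,\ \omega_0=0,\ |\omega_{t_k^n}|\le R\ \text{for all }k\}.
\]
Each $K_n$ is the image of a closed bounded subset of a finite-dimensional space ($\R^{m(2^n)}$, the grid values) under a continuous linear map into $H^1$, so it is compact. A finite union of compact sets is compact. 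Let $D:=\{\supnorm{B}{T}\le R\}$ and choose $R$ so that $\mathbb P(D)>1-\epsilon'$. On $D$ we have $(y_0,B^n)\in K$ for every such $n$. This finite-window argument is the main obstacle and is precisely why the statement restricts to $M_0\le n\le M_0+M$: the union over all $n$ is not relatively compact in $H^1$, since $\|\dot B^n\|_{L^2}$ blows up.

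\textbf{Operator-approximation term.} Theorem~\ref{thm:continuityVSM} shows $F$ is continuous on $H^1$. Theorem~\ref{thm:uamfno} then gives an $N$ and an MFNO $\mathcal N$ of order $N$ with $\sup_{a\in K}\|F(a)-\mathcal N(a)\|_{L^2}<\epsilon/2$.

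\textbf{Conclusion.} Minkowski's inequality in $L^2(\cdot\mid D)$ gives
\[
\bigl(\E[\|Y^{y_0}-\mathcal N(y_0,B^n)\|_{L^2}^2\mid D]\bigr)^{1/2}<\epsilon/\sqrt8+\epsilon/2<\epsilon.
\]
Two minor points need checking. First, measurability of $\mathcal N(y_0,B^n)$; this follows since it is a continuous function of the finitely many grid values. Second, the identification $Y^n=F(y_0,B^n)$, which follows directly from the uniqueness part of Theorem~\ref{thm:welldefVSM}.
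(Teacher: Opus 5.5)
Your proof is correct and shares the paper's skeleton: identify $Y^n=F(y_0,B^n)$ via uniqueness in Theorem~\ref{thm:welldefVSM}, fix $M_0$ from the Wong--Zakai rate of Theorem~\ref{thm:main}, apply Theorem~\ref{thm:uamfno} to the continuous map $F$ on a compact $K$, and combine the two errors by Minkowski's inequality under $\mathbb P(\cdot\mid D)$.

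The difference is in how $K$ and $D$ are built.
\begin{itemize}
\item The paper's $K$ pairs constant initial paths of $H^1$-norm at most $R$ with a closed $H^1$-ball of radius $R_{M_0+M,\epsilon''}$ in the single space $\mathcal P\ell(\pi_{M_0+M},\mathbb R^m)$. This relies on the nestedness of the dyadic partitions. The paper bounds $\mathbb P(D)$ using Lemma~1 of \citet{LKP:26} for $\|B^n\|_{H^1}$, plus a union bound over the $M+1$ resolutions with $\epsilon''=\epsilon'/(2(M+1))$.
\item You take a finite union of compact sets $K_n$, described by bounds on nodal values. Your event is the single set $\{\|B\|_{L^\infty}\le R\}$, which works at every resolution at once because $B^n$ coincides with $B$ at the nodes.
\end{itemize}

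Your route is more elementary and self-contained. It needs no external lemma, no union bound and no nestedness; any finite family of partitions would do. Your $D$ depends only on $\epsilon'$, with $M_0$ and $M$ entering only through $K$ and hence $\mathcal N$. The paper's choice instead gives a single ball in one fixed finite-dimensional subspace, which also contains every constant initial path of norm at most $R$.

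One small point: your threshold $\epsilon^2(1-\epsilon')/8$ presupposes $\epsilon'<1$. State, as the paper does, that this case suffices.
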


\begin{proof}
We prove this theorem in several steps.
For $\gamma\in H^1([0,T],\mathbb R^d)$ and
$\zeta\in H^1([0,T],\mathbb R^m)$, we consider the Volterra equation
\begin{equation}
\label{WZap1}
\tilde Y(t)
=
\gamma(0)
+
\int_0^t\phi_0(t-s)b(\tilde Y(s))\,ds
+
\int_0^t\phi(t-s)\sigma(\tilde Y(s))\,d\zeta(s),
\qquad t\in[0,T].
\end{equation}
This equation has a unique solution $\tilde Y^{\gamma,\zeta}$
by Theorem~\ref{thm:welldefVSM}.
Also, if we define
$F:H^1([0,T],\mathbb R^{d+m})
\to L^2([0,T],\mathbb R^d)$ by
$F(\gamma,\zeta)=\tilde Y^{\gamma,\zeta}$,
then the map $F$ is continuous by
Theorem~\ref{thm:continuityVSM}. Furthermore, by uniqueness,
\[
F(y_0,B^n)=Y^{n,y_0}.
\]

It suffices to consider $0<\epsilon'<1$.
By Theorem~\ref{thm:main}, there exists a constant $C>0$
such that
\[
\mathbb E\|Y^{y_0}-F(y_0,B^n)\|_{L^2}^2
\le
T\mathbb E\|Y^{y_0}-Y^{n,y_0}\|_{L^\infty}^2
\le
CT|\pi_n|\log\!\left(\frac{2T}{|\pi_n|}\right)
\]
for all sufficiently large $n$.
Also, since $|\pi_n|\to0$ as $n\to\infty$, there exists
$M_0\in\mathbb N$ such that
\[
\left(
\mathbb E\|Y^{y_0}-F(y_0,B^n)\|_{L^2}^2
\right)^{1/2}
<
\frac{\epsilon(1-\epsilon')^{1/2}}{2}
\]
for all $n\ge M_0$.

We denote by $\mathcal P\ell(\pi_n,\mathbb R^m)$
the finite-dimensional subspace of
$H^1([0,T],\mathbb R^m)$ consisting of functions that
are piecewise linear with respect to the partition $\pi_n$.
Choose $R>\sqrt T\,|y_0|$, so that $\|y_0\|_{H^1}<R$.
Next, let
\[
\epsilon'':=\frac{\epsilon'}{2(M+1)},
\]
and let $R_{M_0+M,\epsilon''}$ be the constant given in
\citet[Lemma~1]{LKP:26}. Then, we have
\[
\mathbb P\left(
\|B^n\|_{H^1}\le R_{M_0+M,\epsilon''}
\right)\ge1-\epsilon''
\]
for all $n\le M_0+M$.
Also, let
\[
\begin{aligned}
K:=\Big\{
(x,\zeta)\in H^1([0,T],\mathbb R^{d+m})
\,\Big|\,
&x\text{ is constant},\quad \|x\|_{H^1}\le R,\\
&\|\zeta\|_{H^1}\le R_{M_0+M,\epsilon''},\\
&\zeta\in\mathcal P\ell(\pi_{M_0+M},\mathbb R^m)
\Big\},
\end{aligned}
\]
and let
\[
D:=
\left\{
\omega\in\Omega:
(y_0,B^n(\omega))\in K
\text{ for }M_0\le n\le M_0+M
\right\}.
\]
Then $K$ is a compact subset of
$H^1([0,T],\mathbb R^{d+m})$, since it is a closed bounded
subset of a finite-dimensional subspace.
Moreover, the dyadic partitions are nested, so that
\[
\mathcal P\ell(\pi_n,\mathbb R^m)
\subset
\mathcal P\ell(\pi_{M_0+M},\mathbb R^m),
\qquad n\le M_0+M.
\]
Consequently, by the preceding probability bound and the union bound,
\[
\mathbb P(D)
\ge1-(M+1)\epsilon''
=1-\frac{\epsilon'}{2}
>1-\epsilon',
\]
which implies
\[
\begin{aligned}
&\left(
\mathbb E\left[
\|Y^{y_0}-F(y_0,B^n)\|_{L^2}^2\mid D
\right]
\right)^{1/2}
\\
&\qquad\le
\frac{
\left(
\mathbb E\|Y^{y_0}-F(y_0,B^n)\|_{L^2}^2
\right)^{1/2}
}{\mathbb P(D)^{1/2}}
<
\frac{\epsilon}{2}
\end{aligned}
\]
whenever $M_0\le n\le M_0+M$.

By the universal approximation theorem for MFNOs
 (Theorem \ref{thm:uamfno}), there exists an MFNO
$\mathcal N:H^1([0,T],\mathbb R^{d+m})
\to L^2([0,T],\mathbb R^d)$ of order $N$ such that
\[
\sup_{(x,\zeta)\in K}
\|F(x,\zeta)-\mathcal N(x,\zeta)\|_{L^2}
<
\frac{\epsilon}{2},
\]
so that
\[
\|F(y_0,B^n)-\mathcal N(y_0,B^n)\|_{L^2}
<
\frac{\epsilon}{2}
\]
on the set $D$.
Thus, for $M_0\le n\le M_0+M$,
\begin{align*}
&\left(
\mathbb E\left[
\|Y^{y_0}-\mathcal N(y_0,B^n)\|_{L^2}^2\mid D
\right]
\right)^{1/2}
\\
&\le
\left(
\mathbb E\left[
\left(
\|Y^{y_0}-F(y_0,B^n)\|_{L^2}
+
\|F(y_0,B^n)-\mathcal N(y_0,B^n)\|_{L^2}
\right)^2
\mid D
\right]
\right)^{1/2}
\\
&\le
\left(
\mathbb E\left[
\|Y^{y_0}-F(y_0,B^n)\|_{L^2}^2\mid D
\right]
\right)^{1/2}
+
\left(
\mathbb E\left[
\|F(y_0,B^n)-\mathcal N(y_0,B^n)\|_{L^2}^2\mid D
\right]
\right)^{1/2}
\\
&<
\frac{\epsilon}{2}+\frac{\epsilon}{2}
=
\epsilon.
\end{align*}
This completes the proof.
\end{proof}

Note that our theorem guarantees approximation for a fixed range of resolutions with an MFNO with fixed weights on an event set of sufficiently large probability, but not necessarily for arbitrarily high resolution.
\section{Experiment}
\label{sec:experi}

\subsection{Problem Setup}
Each task is a one-dimensional stochastic Volterra equation on $[0,T]$ with
$T=12.8$. The initial value is sampled as
\[
    Y_0 \sim \mathrm{Unif}(0.5,1.5),
\]
and the noise coefficient is $\sigma=0.35$. The learning input is the pair
\[
    X(t) = (B_t, Y_0),
\]
where $B_t$ is the Brownian path and $Y_0$ is repeated as a constant input
channel over the time grid. The target is the path $Y(t)$.

The continuous model being approximated is
\[
    Y_t
    =
    Y_0
    +
    \int_0^t \phi_0(t-s)b(Y_s)\,ds
    +
    \sigma \int_0^t K(t-s)\,dB_s .
\]
For these experiments, $\phi_0=\phi=K$.
The three tasks are listed in Table~\ref{tab:tasks}. 
Tasks 2 and 3 use regularized rough kernels. The convergence result applies for each fixed $\epsilon>0$, but does not establish convergence for the singular kernels obtained by setting $\epsilon=0$ or provide error bounds uniform in $\epsilon$ as $\epsilon\searrow0$.
\begin{table}[htbp]
\centering
\caption{Volterra tasks.}
\label{tab:tasks}
\begin{tabular}{lllllll}
\toprule
Task & Kernel $K(t)$ & Drift $b$ & $H$ & $\epsilon$ & $b$ parameter \\
\midrule
\TaskOne &  $\exp(-2t)$ & $b(y)=y$ & -- & -- & $1.0$ \\
\TaskTwo & $(t+\epsilon)^{H-1/2}$ & $b(y)=by$ & $0.40$ & $\num{1e-4}$ & $0.08$ \\
\TaskThree & $(t+\epsilon)^{H-1/2}$ & $b(y)=b$ & $0.40$ & $\num{1e-4}$ & $0.08$ \\
\bottomrule
\end{tabular}
\end{table}

\subsection{Data Preparation}
The training grid resolution is $128$, so each training path has $129$ time
points. The training set contains $8192$ paths per task, and each test set
contains $2048$ paths. Test data were generated at resolutions
\[
    128,160,192,256,320,384,512,640,832,1024.
\]

To reduce time-discretization error, each sample is first simulated on an
internal time grid that is considerably finer than the grid stored in the
dataset. Here, a resolution of $r$ means $r$ time intervals, or $r+1$ time
points. The refinement factor is chosen as
\[
    \mathrm{skip}(r) =
    r\,\left\lceil \frac{128\cdot 32}{r} \right\rceil,\qquad
    N_{\mathrm{fine}}(r)=r\,\mathrm{skip}(r),
\]
where $N_{\mathrm{fine}}(r)$ is the number of intervals used for the simulation.
This choice guarantees $N_{\mathrm{fine}}(r)\geq 4096$. After the simulation,
every $\mathrm{skip}(r)$-th point is retained, including both endpoints. The
result is the requested uniform grid of $r+1$ points. Thus, changing $r$
changes the resolution of the saved sample, while every sample is still
computed from a simulation using at least $4096$ internal time intervals.

The discrete noise term is generated from Brownian increments
$\Delta B_n\sim N(0,\Delta t)$ and convolved with $K$ using FFT-based one
dimensional linear convolution. For linear drift, the discrete Volterra equation
is solved using the resolvent recursion
\[
    r_0=1,\qquad
    r_n=\sum_{m=1}^{n} b\,\Delta t\,K_m\,r_{n-m}.
\]
For the constant-drift task, the deterministic drift contribution is computed as
$b\,\Delta t\sum_{m=1}^{n}K_m$.

\subsection{Models and Hyperparameter Search}
The compared models are the mirror-padded Fourier neural operator
(MFNO) \citep{LKP:26}, the zero-padded Fourier neural operator (ZFNO),
the Fourier neural operator (FNO) \citep{Li:21}, long short-term memory
(LSTM) networks \citep{HochreiterSchmidhuber1997}, temporal convolutional
networks (TCNs) \citep{BaiKolterKoltun2018}, and the deep operator network
(DeepONet) \citep{Lu2021DeepONet}. Here, ZFNO denotes the FNO variant
with zero padding. Hyperparameter search was performed separately for every task-model pair. The
search criterion was validation relative $\RelLtwo$ error as implemented in the code.
The training set was split into a training portion and a validation tail of
$1024$ paths.

The search was two-stage.
\begin{enumerate}
    \item Stage 1 searched the Cartesian product of $5$ architecture candidates
    and learning rates $\{10^{-3},5\cdot10^{-4},2\cdot10^{-4}\}$. Each candidate
    was trained for $50$ epochs with the scheduler disabled. The best
    architecture and learning rate were selected by validation $\RelLtwo$.
    \item Stage 2 fixed the selected architecture and learning rate, then tuned
    StepLR schedulers over
    \[
        (25,0.9),\ (50,0.9),\ (50,0.7),\ (100,0.9),\ (100,0.7),
    \]
    where each pair denotes $(\mathrm{step\ size},\gamma)$. Scheduler candidates
    were trained for $200$ epochs, again selected by validation $\RelLtwo$.
\end{enumerate}
The final training run uses the chosen architecture, learning rate, and
scheduler.

Tables~\ref{tab:selected-hyperparams-task1}--\ref{tab:selected-hyperparams-task3}
list the selected hyperparameters used by the training code for this run.
For FNO-family entries, $m,w,d,k$ denote Fourier modes, width, depth, and kernel
size. For TCN, $w$ is hidden width. For LSTM, $h$ is hidden dimension. For
DeepONet, $p$ is the latent product width. Learning rates and validation errors
are written in standard scientific notation, with no trailing zeros in the
mantissa.

{\scriptsize
\begin{table}[htbp]
\centering
\caption{Selected hyperparameters for \TaskOne.}
\label{tab:selected-hyperparams-task1}
\begin{tabular}{lp{6.0cm}ccc}
\toprule
Model & Architecture & LR & StepLR $(s,\gamma)$ & Val. MSE \\
\midrule
MFNO & \texttt{m=64, w=64, d=6, k=1} & \sci{5}{-4} & $(25,0.9)$ & \sci{2.44151}{-4} \\
ZFNO & \texttt{m=64, w=64, d=6, k=1} & \sci{1}{-3} & $(100,0.7)$ & \sci{1.28829}{-4} \\
FNO & \texttt{m=64, w=64, d=6, k=1} & \sci{5}{-4} & $(100,0.7)$ & \sci{3.69936}{-4} \\
LSTM & \texttt{h=128, layers=1, drop=0.0} & \sci{1}{-3} & $(25,0.9)$ & \sci{9.78445}{-4} \\
TCN & \texttt{w=512, blocks=6, k=2, drop=0.2} & \sci{1}{-3} & $(25,0.9)$ & \sci{4.50948}{-4} \\
DeepONet & \texttt{p=300, branch=3, trunk=2, w=128} & \sci{2}{-4} & $(25,0.9)$ & \sci{1.52127}{-2} \\
\bottomrule
\end{tabular}
\end{table}
}

{\scriptsize
\begin{table}[htbp]
\centering
\caption{Selected hyperparameters for \TaskTwo.}
\label{tab:selected-hyperparams-task2}
\begin{tabular}{lp{6.0cm}ccc}
\toprule
Model & Architecture & LR & StepLR $(s,\gamma)$ & Val. MSE \\
\midrule
MFNO & \texttt{m=64, w=64, d=6, k=1} & \sci{2}{-4} & $(50,0.7)$ & \sci{7.59577}{-4} \\
ZFNO & \texttt{m=64, w=64, d=6, k=1} & \sci{2}{-4} & $(50,0.7)$ & \sci{4.67849}{-4} \\
FNO & \texttt{m=32, w=64, d=6, k=1} & \sci{2}{-4} & $(25,0.9)$ & \sci{8.71486}{-4} \\
LSTM & \texttt{h=128, layers=1, drop=0.0} & \sci{1}{-3} & $(50,0.9)$ & \sci{2.07717}{-3} \\
TCN & \texttt{w=512, blocks=6, k=2, drop=0.2} & \sci{1}{-3} & $(100,0.7)$ & \sci{1.86754}{-3} \\
DeepONet & \texttt{p=384, branch=4, trunk=3, w=256} & \sci{2}{-4} & $(50,0.7)$ & \sci{6.95737}{-2} \\
\bottomrule
\end{tabular}
\end{table}
}

{\scriptsize
\begin{table}[htbp]
\centering
\caption{Selected hyperparameters for \TaskThree.}
\label{tab:selected-hyperparams-task3}
\begin{tabular}{lp{6.0cm}ccc}
\toprule
Model & Architecture & LR & StepLR $(s,\gamma)$ & Val. MSE \\
\midrule
MFNO & \texttt{m=64, w=64, d=6, k=1} & \sci{2}{-4} & $(50,0.7)$ & \sci{5.42639}{-4} \\
ZFNO & \texttt{m=64, w=64, d=6, k=1} & \sci{2}{-4} & $(100,0.9)$ & \sci{4.81996}{-4} \\
FNO & \texttt{m=32, w=32, d=6, k=1} & \sci{2}{-4} & $(25,0.9)$ & \sci{7.18237}{-4} \\
LSTM & \texttt{h=128, layers=1, drop=0.0} & \sci{1}{-3} & $(50,0.9)$ & \sci{2.60186}{-3} \\
TCN & \texttt{w=128, blocks=6, k=3, drop=0.2} & \sci{1}{-3} & $(25,0.9)$ & \sci{1.47409}{-3} \\
DeepONet & \texttt{p=300, branch=3, trunk=2, w=128} & \sci{2}{-4} & $(50,0.7)$ & \sci{6.71098}{-2} \\
\bottomrule
\end{tabular}
\end{table}
}

\subsection{Training Procedure}
Final training used seeds $1,\ldots,10$ for every task-model pair. The optimizer is Adam. The loss function is mean squared error,
\[
    \mathcal{L}_{\mathrm{MSE}}
    =
    \frac{1}{M_{\text{train}}}\sum^{M_{\text{train}}}_{i=1} \lVert \widehat{Y}^{(i)}-Y^{(i)}\rVert_2^2,
\]
computed over the tensor entries in each mini-batch, where $M_{\text{train}}$ is the number of sample paths. Final training uses $200$
epochs and batch size $256$. Weight decay is $\num{3e-3}$ for MFNO, ZFNO, FNO,
LSTM, and TCN, and $\num{3e-4}$ for DeepONet. The StepLR scheduler is the
task/model-specific scheduler selected in
Tables~\ref{tab:selected-hyperparams-task1}--\ref{tab:selected-hyperparams-task3}.

\subsection{Evaluation Protocol}
Evaluation uses the saved seed checkpoints. At resolution $128$, every model is
evaluated on the corresponding test set for each task. For higher resolutions,
only MFNO, ZFNO, and FNO are evaluated because the other architectures are tied
to the training time grid in this implementation.

The code computes one relative $\RelLtwo$-like error and one relative
$\RelLinf$ error per
seed:
\[
    \RelLtwo(Y,\widehat{Y})
    =
    \frac{1}{M_{\text{test}}}\sum^{M_{\text{test}}}_{i=1}\sqrt{\frac{\sum_j |Y^{(i)}_j-\widehat{Y}^{(i)}_j|^2}
         {\sum_j |Y^{(i)}_j|^2}},
    \qquad
    \RelLinf(Y,\widehat{Y})
    =
     \frac{1}{M_{\text{test}}}\sum^{M_{\text{test}}}_{i=1}\frac{\max_j |Y^{(i)}_j-\widehat{Y}^{(i)}_j|}
         {\max_j |Y^{(i)}_j|},
\]
where $M_{\text{test}}$ is the number of test samples. For every task-model-resolution combination, the table
reports the mean and sample standard deviation over seeds. If a seed produces
NaN or non-finite values, the mean and standard deviation are computed over the
finite seed values only, and the number of NaNs is reported in the relevant
table caption.

\section{Results}
\label {sec:results}

Training loss over iterations is shown in the following figure (Fig~\ref{fig:training-loss}).
\begin{figure}
\centering
\includegraphics[width=0.85\textwidth]{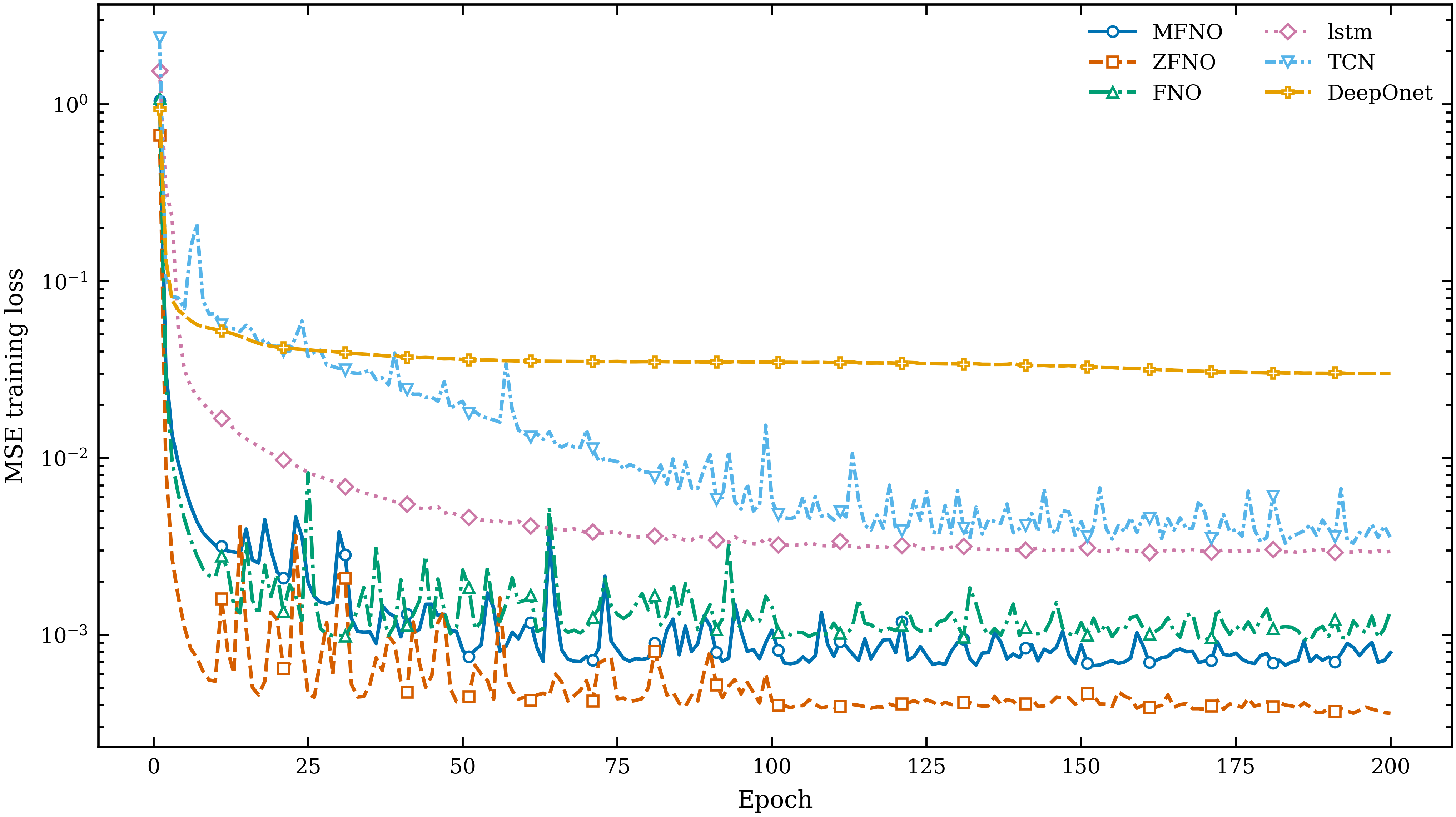}
\caption{Training loss comparison for \TaskOne, seed 1.}
\label{fig:training-loss}
\end{figure}
\subsection{Results at Training Resolution}
Tables~\ref{tab:base-errors-l2} and~\ref{tab:base-errors-linf} give the
paper-facing comparison at resolution $128$. Tasks form the column axis, and
boldface marks the smallest mean error in each task column. Means and sample
standard deviations are over seeds, excluding non-finite values where noted in
the captions.

{\scriptsize
\begin{table}[htbp]
\centering
\caption{Resolution-$128$ relative $\RelLtwo$ errors. TCN produced no NaNs on
\TaskOne, 5 NaNs out of 10 seeds on \TaskTwo, and 8 NaNs out of 10 seeds on \TaskThree; the TCN
statistics for \TaskTwo\ and \TaskThree\ use only the finite seeds.}
\label{tab:base-errors-l2}
\footnotesize
\begin{tabular}{lccc}
\toprule
Model & \TaskOne & \TaskTwo & \TaskThree \\
\midrule
MFNO & \sci{1.451}{-2} $\pm$ \sci{4.150}{-4} & \sci{2.020}{-2} $\pm$ \sci{5.575}{-4} & \sci{1.978}{-2} $\pm$ \sci{4.828}{-4} \\
ZFNO & \bsci{1.050}{-2} $\pm$ \sci{5.232}{-4} & \bsci{1.711}{-2} $\pm$ \sci{2.779}{-4} & \bsci{1.855}{-2} $\pm$ \sci{4.354}{-4} \\
FNO & \sci{1.720}{-2} $\pm$ \sci{1.153}{-3} & \sci{2.363}{-2} $\pm$ \sci{1.329}{-3} & \sci{2.330}{-2} $\pm$ \sci{6.305}{-4} \\
LSTM & \sci{2.827}{-2} $\pm$ \sci{1.932}{-3} & \sci{3.797}{-2} $\pm$ \sci{3.254}{-3} & \sci{4.505}{-2} $\pm$ \sci{2.773}{-3} \\
TCN & \sci{2.111}{-2} $\pm$ \sci{2.770}{-3} & \sci{4.873}{-2} $\pm$ \sci{1.529}{-2} & \sci{3.167}{-2} $\pm$ \sci{4.304}{-3} \\
DeepONet & \sci{1.026}{-1} $\pm$ \sci{3.474}{-3} & \sci{1.664}{-1} $\pm$ \sci{3.852}{-3} & \sci{1.818}{-1} $\pm$ \sci{6.010}{-3} \\
\bottomrule
\end{tabular}
\end{table}
}

{\scriptsize
\begin{table}[htbp]
\centering
\caption{Resolution-$128$ relative $\RelLinf$ errors. TCN produced no NaNs on
\TaskOne, 5 NaNs out of 10 seeds on \TaskTwo, and 8 NaNs out of 10 seeds on \TaskThree;
the TCN statistics for \TaskTwo\ and \TaskThree\ use only the finite seeds.}
\label{tab:base-errors-linf}
\footnotesize
\begin{tabular}{lccc}
\toprule
Model & \TaskOne & \TaskTwo & \TaskThree \\
\midrule
MFNO & \sci{5.009}{-2} $\pm$ \sci{1.756}{-3} & \sci{3.2}{-2} $\pm$ \sci{4.899}{-4} & \sci{3.472}{-2} $\pm$ \sci{7.024}{-4} \\
ZFNO & \bsci{3.234}{-2} $\pm$ \sci{1.33}{-3} & \bsci{2.676}{-2} $\pm$ \sci{4.723}{-4} & \bsci{3.107}{-2} $\pm$ \sci{5.798}{-4} \\
FNO & \sci{4.942}{-2} $\pm$ \sci{1.473}{-3} & \sci{3.943}{-2} $\pm$ \sci{1.531}{-3} & \sci{5.144}{-2} $\pm$ \sci{1.864}{-3} \\
LSTM & \sci{1.024}{-1} $\pm$ \sci{4.219}{-3} & \sci{9.632}{-2} $\pm$ \sci{3.493}{-3} & \sci{1.021}{-1} $\pm$ \sci{4.259}{-3} \\
TCN & \sci{4.756}{-2} $\pm$ \sci{4.487}{-3} & \sci{8.505}{-2} $\pm$ \sci{1.65}{-2} & \sci{4.91}{-2} $\pm$ \sci{2.328}{-3} \\
DeepONet & \sci{2.045}{-1} $\pm$ \sci{5.952}{-3} & \sci{2.433}{-1} $\pm$ \sci{5.031}{-3} & \sci{2.783}{-1} $\pm$ \sci{7.847}{-3} \\
\bottomrule
\end{tabular}
\end{table}
}

At the training resolution, ZFNO has the smallest reported $\RelLtwo$ and
$\RelLinf$ errors on all three tasks. MFNO is consistently better than the
plain FNO in $\RelLtwo$ and is also better than FNO in $\RelLinf$ on the two
rough tasks. DeepONet has the largest errors in these runs. The TCN performs
competitively on \TaskOne\ but exhibits non-finite failures on \TaskTwo\ and
\TaskThree.

\subsection{Resolution Generalization}
The higher-resolution experiment evaluates MFNO, ZFNO, and FNO at resolutions
$160$ through $1024$, excluding the training resolution $128$. The full numerical
table is given in Appendix~\ref{app:resolution-table}. Figure~\ref{fig:resolution-plots}
shows the same results as separate relative $\RelLtwo$ and relative $\RelLinf$ plots for
each task.

\begin{figure}[htbp]
\centering
\begin{subfigure}{0.48\textwidth}
    \includegraphics[width=\linewidth]{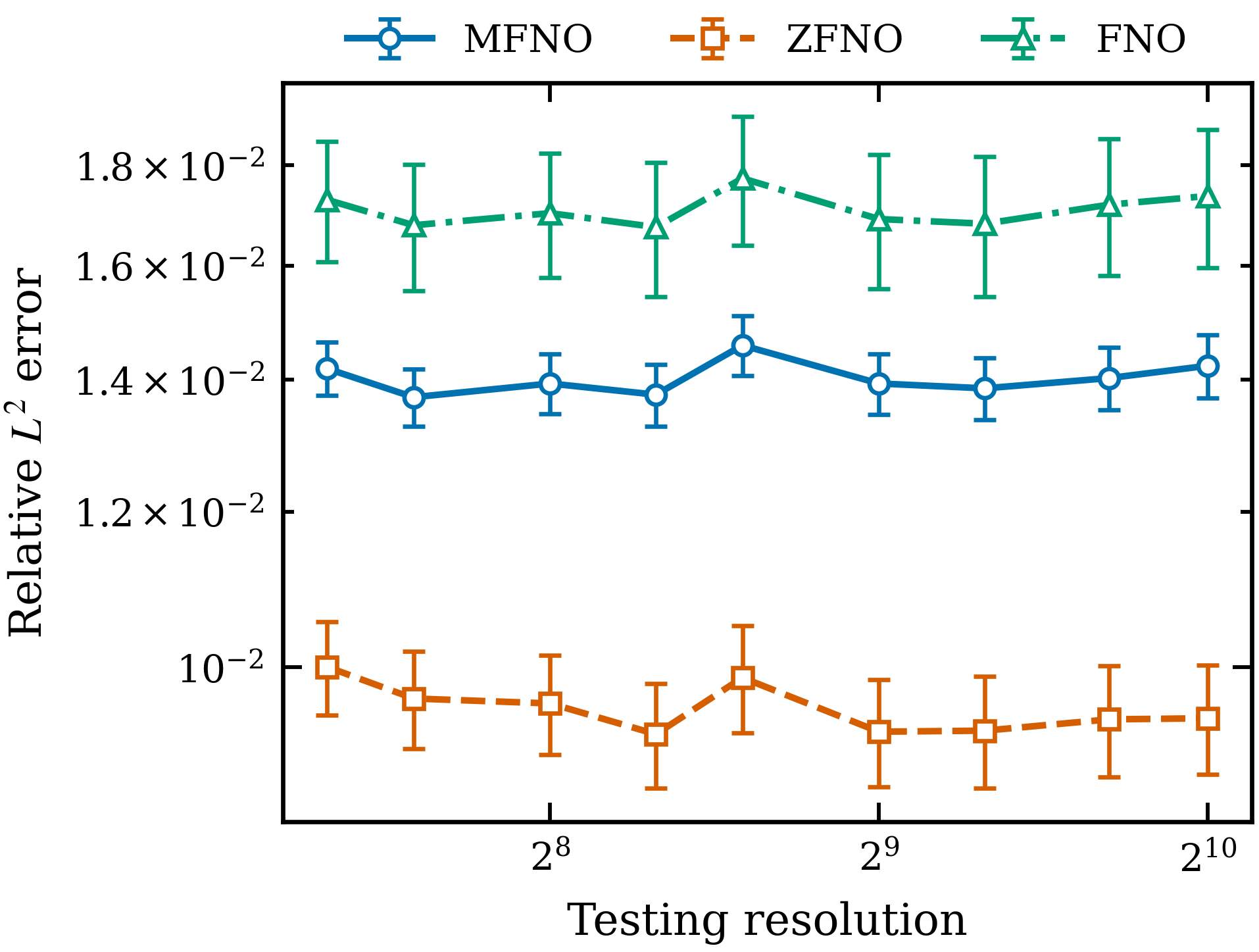}
    \caption{\TaskOne, $\RelLtwo$}
\end{subfigure}
\begin{subfigure}{0.48\textwidth}
    \includegraphics[width=\linewidth]{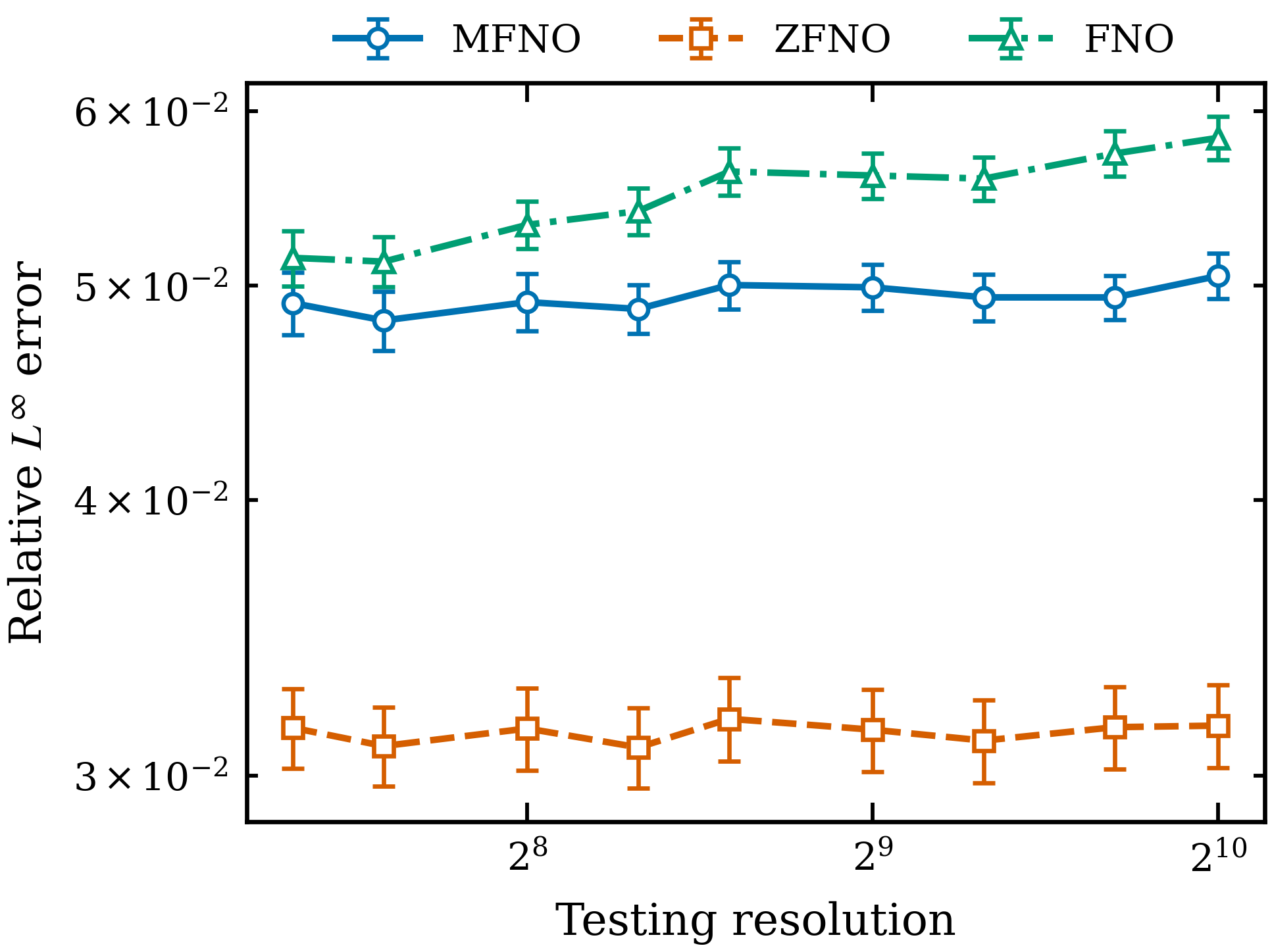}
    \caption{\TaskOne, $\RelLinf$}
\end{subfigure}
\begin{subfigure}{0.48\textwidth}
    \includegraphics[width=\linewidth]{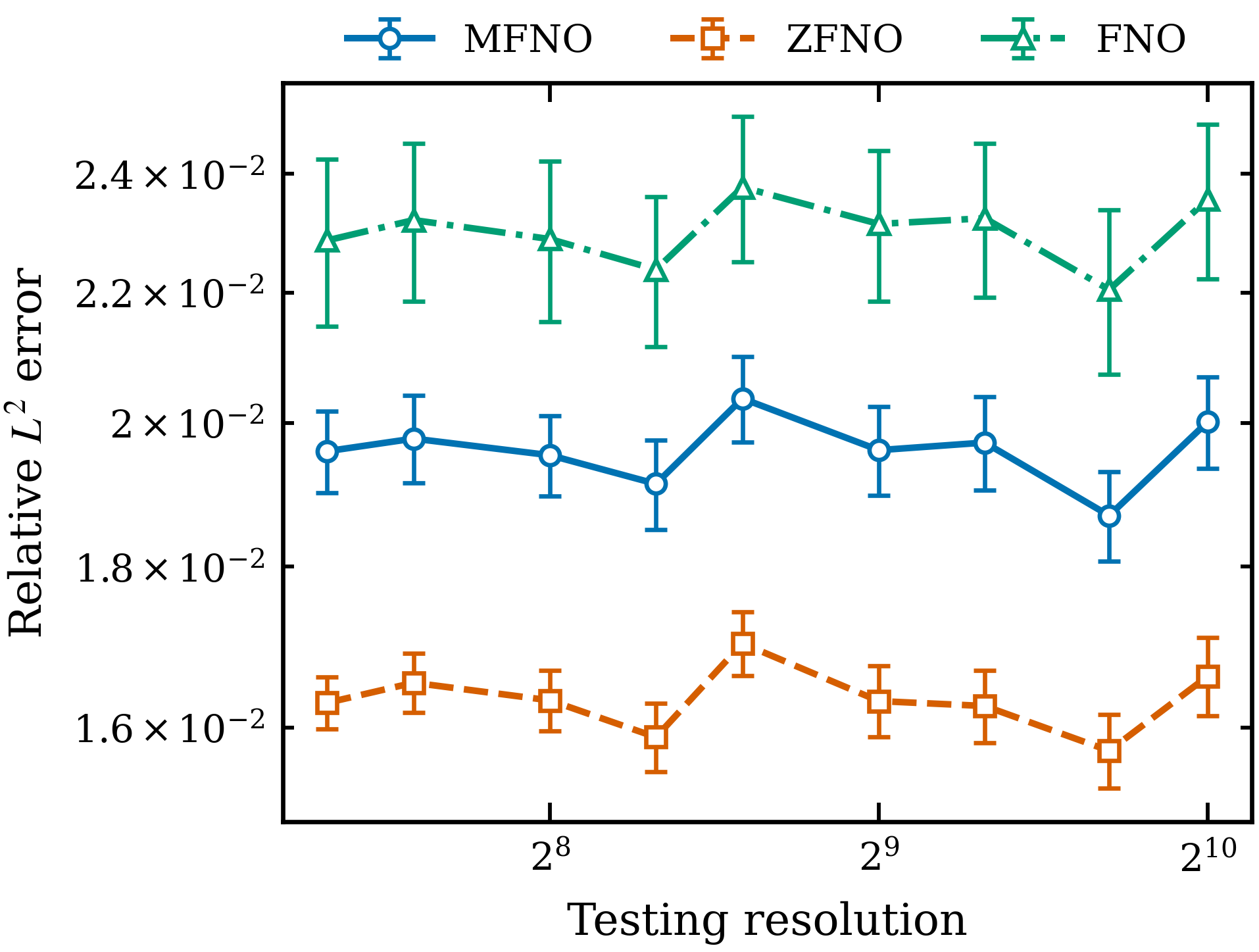}
    \caption{\TaskTwo, $\RelLtwo$}
\end{subfigure}
\begin{subfigure}{0.48\textwidth}
    \includegraphics[width=\linewidth]{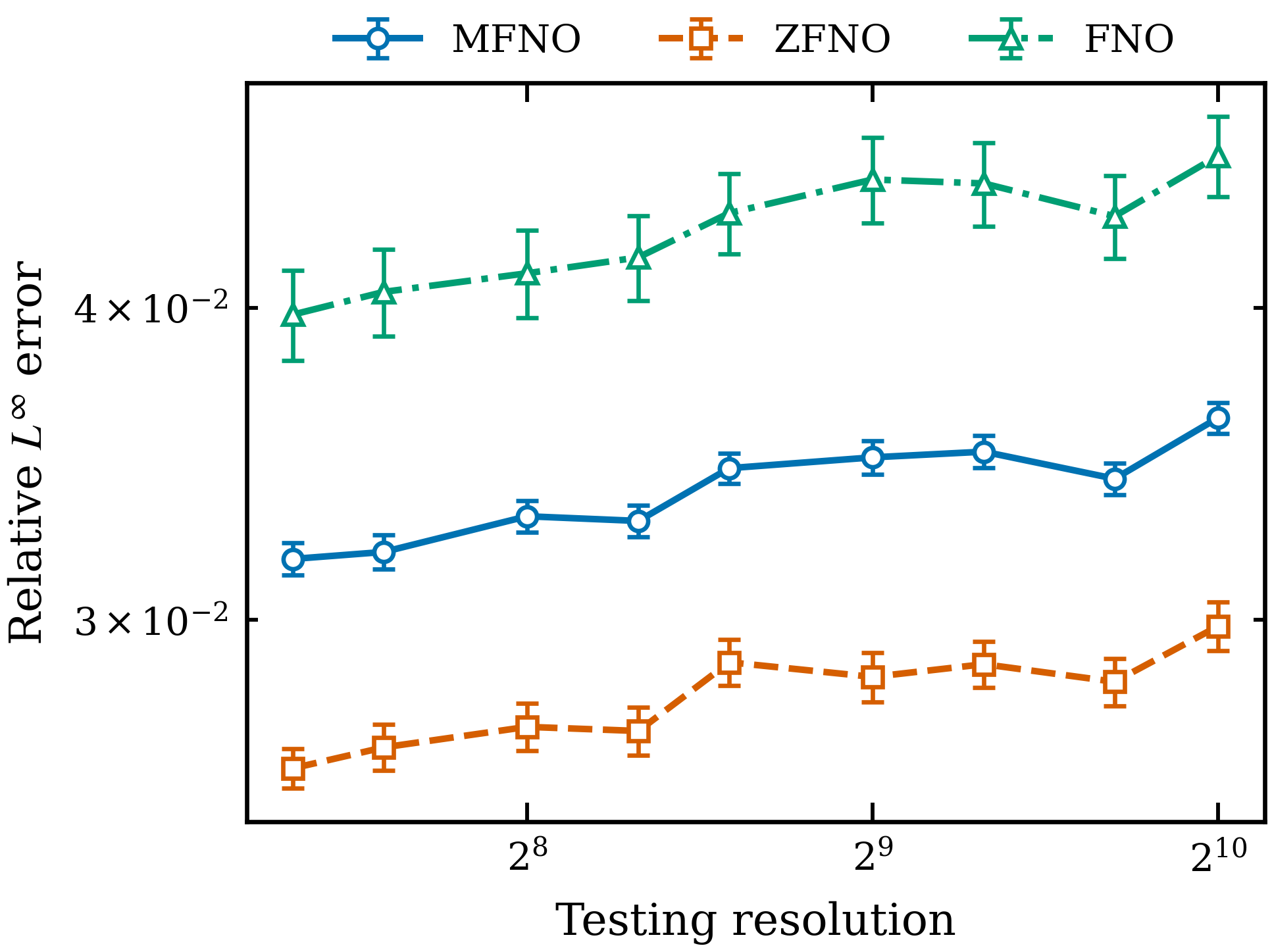}
    \caption{\TaskTwo, $\RelLinf$}
\end{subfigure}
\begin{subfigure}{0.48\textwidth}
    \includegraphics[width=\linewidth]{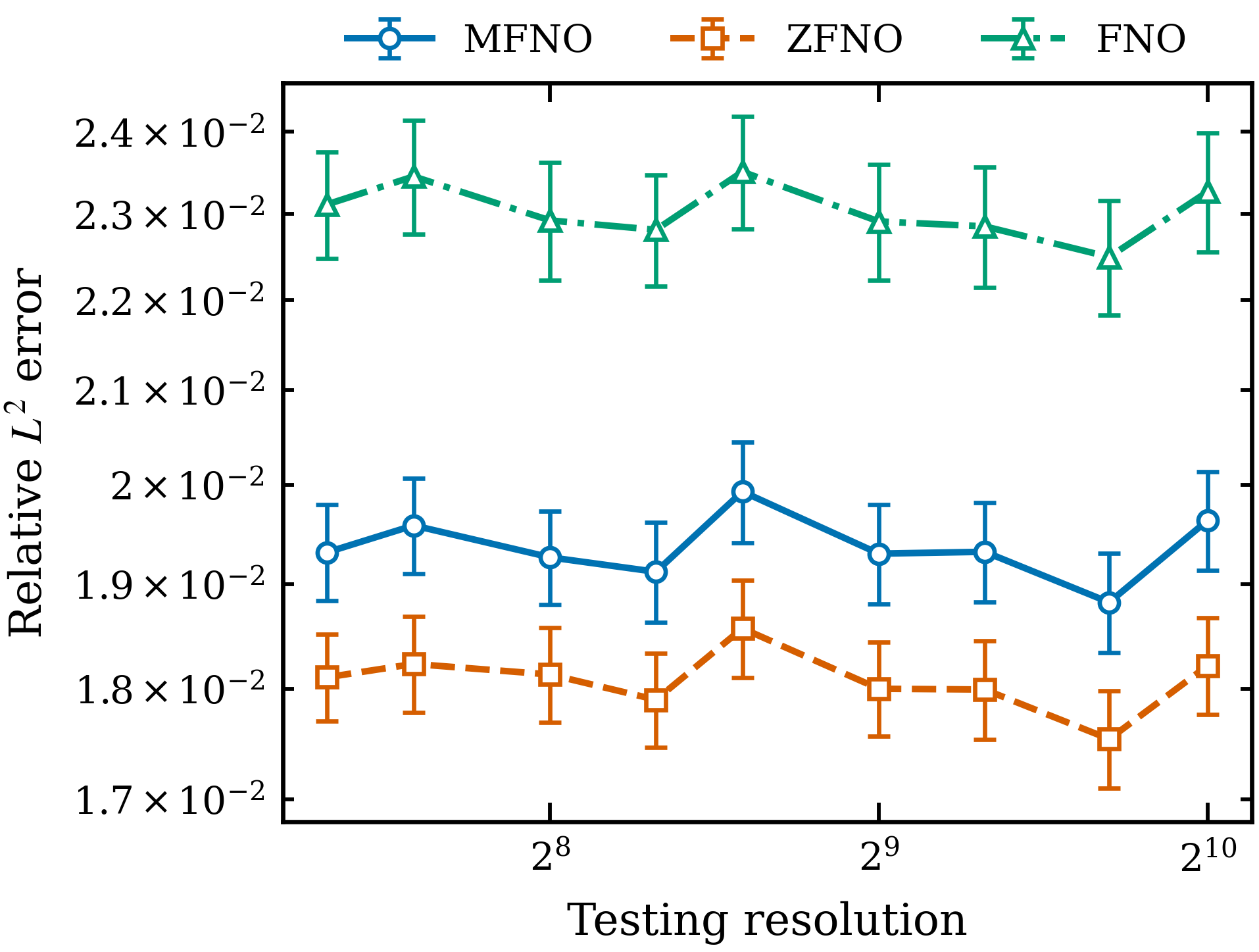}
    \caption{\TaskThree, $\RelLtwo$}
\end{subfigure}
\begin{subfigure}{0.48\textwidth}
    \includegraphics[width=\linewidth]{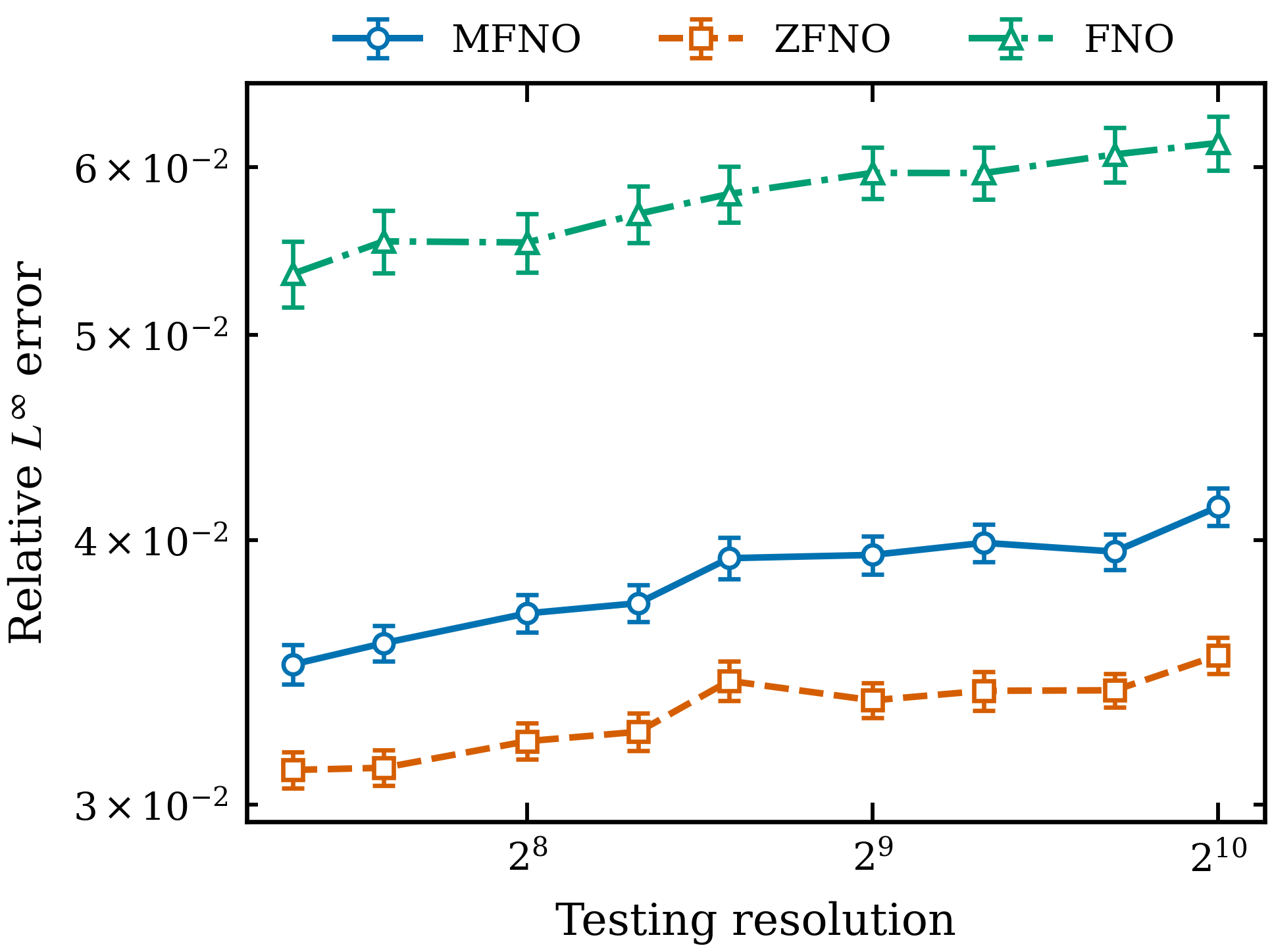}
    \caption{\TaskThree, $\RelLinf$}
\end{subfigure}
\caption{Resolution generalization for the FNO-family models. Error bars show
sample standard deviation over seeds.}
\label{fig:resolution-plots}
\end{figure}

\subsection{Parameter Counts and Inference Time}
Table~\ref{tab:timing} reports parameter counts and average inference time for
\TaskOne\ at resolution $128$. Timing was performed on \texttt{cuda} with an NVIDIA
GeForce RTX 4080, batch size $256$, $2048$ test paths per repeat, $2$ warmup
passes, and $5$ timed repeats per seed. Values are averaged over all 10 completed
seed checkpoints.

\begin{table}[htbp]
\centering
\caption{\TaskOne\ parameter counts and inference timing.}
\label{tab:timing}
\begin{tabular}{lrrrrr}
\toprule
Model & Parameters & Avg. time (s) & Std. time (s) & ms/path & Seeds \\
\midrule
MFNO & \num{1606337} & \num{2.866e-2} & \num{4.373e-3} & \num{1.399e-2} & 10 \\
ZFNO & \num{1606337} & \num{2.625e-2} & \num{1.208e-4} & \num{1.282e-2} & 10 \\
FNO & \num{1606337} & \num{1.926e-2} & \num{8.682e-4} & \num{9.402e-3} & 10 \\
LSTM & \num{67713} & \num{9.902e-3} & \num{2.834e-3} & \num{4.835e-3} & 10 \\
TCN & \num{5784604} & \num{1.879e-1} & \num{1.606e-4} & \num{9.173e-2} & 10 \\
DeepONet & \num{160344} & \num{8.289e-3} & \num{2.719e-3} & \num{4.047e-3} & 10 \\
\bottomrule
\end{tabular}
\end{table}

The TCN has the largest parameter count and is the slowest model in this timing
test. DeepONet is fastest on this hardware, while the FNO-family models have
identical parameter counts but different inference times due to their different
operator constructions.

\section{Conclusion}
\label{sec:con}

This paper extends the operator-learning framework of \citet{LKP:26} to stochastic Volterra integral equations. The Wong--Zakai convergence theorem, together with well-posedness and continuity of the associated deterministic solution maps, provides a foundation for approximating stochastic Volterra solution operators by mirror-padded Fourier neural operators. This extension requires separate convergence and approximation arguments that account for convolutional memory and are not covered by the preceding study.

Numerical experiments demonstrate competitive prediction accuracy across the Volterra examples considered here. The trained Fourier-based models retain comparable accuracy on finer temporal grids without retraining, supporting their use across different resolutions. Comparisons among architectures and padding strategies also highlight the influence of these choices on prediction accuracy and computational cost. These findings support the applicability of Fourier neural operators to stochastic systems with memory.

Future research may extend the analysis to singular memory kernels, investigate performance for more general nonlinear systems, and develop quantitative error bounds for learned solution operators.

\section*{Declaration of AI Assistance}
The authors used ChatGPT (OpenAI) to improve the language and readability of the article and to assist in correcting errors.
The ideas and results presented in this article are the authors' original intellectual contributions.
The authors take full responsibility for the content of the article, including its mathematical statements, proofs, and references.

\section*{Funding}
Hyungbin Park was supported by National Research Foundation of Korea grants funded by the Ministry of Science and ICT (2022R1A5A6000840, RS-2026-25488333).
Financial support from the Institute for Research in Finance and Economics of Seoul National University is gratefully acknowledged.  

 \section*{Disclosure Statement}
No potential conflict of interest was reported by the authors.

\section*{Data Availability Statement}
The data are available upon request.

\bibliographystyle{abbrvnat}
   \bibliography{wongzakai}

\clearpage
\appendix
\section{Architecture Search Grid}
\label{app:architecture-grid}

The same architecture candidate list was used for each of the three tasks.
Each entry below is one jointly specified architecture; the values in a tuple
were not independently combined. Each of the five architecture candidates was
crossed with the three learning rates
$\{10^{-3},5\mathord{\cdot}10^{-4},2\mathord{\cdot}10^{-4}\}$, resulting in
$15$ Stage-1 fits per task-model pair.

For MFNO, ZFNO, and FNO, an architecture is written as $(m,w,d,k)$, where
$m$ is the number of Fourier modes, $w$ is the channel width, $d$ is the
number of Fourier layers, and $k$ is the pointwise convolution kernel size.

{\scriptsize
\begin{table}[htbp]
\centering
\caption{Architecture candidates for the FNO-family models.}
\label{tab:fno-architecture-grid}
\resizebox{\textwidth}{!}{%
\begin{tabular}{lccccc}
\toprule
Model & Candidate 1 & Candidate 2 & Candidate 3 & Candidate 4 & Candidate 5 \\
\midrule
MFNO
& $(16,16,4,1)$
& $(32,32,4,1)$
& $(32,32,5,1)$
& $(32,64,5,1)$
& $(64,64,6,1)$ \\
ZFNO
& $(16,16,4,1)$
& $(32,32,4,1)$
& $(32,32,6,1)$
& $(32,64,6,1)$
& $(64,64,6,1)$ \\
FNO
& $(16,16,4,1)$
& $(32,32,4,1)$
& $(32,32,6,1)$
& $(32,64,6,1)$
& $(64,64,6,1)$ \\
\bottomrule
\end{tabular}%
}
\end{table}
}

For TCN, tuples are $(w,B,k,q)$, where $w$ is hidden width, $B$ is the
number of temporal convolution blocks, $k$ is kernel size, and $q$ is dropout.
For LSTM, tuples are $(h,L,q)$, where $h$ is hidden dimension, $L$ is the
number of recurrent layers, and $q$ is dropout. For DeepONet, tuples are
$(p,d_b,d_t,w)$, where $p$ is the branch--trunk product width, $d_b$ and
$d_t$ are the branch and trunk depths, and $w$ is their hidden-layer width.

{\scriptsize
\begin{table}[htbp]
\centering
\caption{Architecture candidates for TCN, LSTM, and DeepONet.}
\label{tab:baseline-architecture-grid}
\resizebox{\textwidth}{!}{%
\begin{tabular}{lccccc}
\toprule
Model & Candidate 1 & Candidate 2 & Candidate 3 & Candidate 4 & Candidate 5 \\
\midrule
TCN
& $(128,3,2,0.1)$
& $(128,6,3,0.2)$
& $(256,4,2,0.2)$
& $(256,6,2,0.2)$
& $(512,6,2,0.2)$ \\
LSTM
& $(128,1,0.0)$
& $(128,2,0.0)$
& $(256,2,0.1)$
& $(512,2,0.2)$
& $(512,3,0.2)$ \\
DeepONet
& $(128,2,2,128)$
& $(128,3,2,128)$
& $(300,3,2,128)$
& $(384,4,3,128)$
& $(384,4,3,256)$ \\
\bottomrule
\end{tabular}%
}
\end{table}
}

Thus, the complete Stage-1 sweep contains
$3\ \text{tasks}\times6\ \text{models}\times15=270$ fits. Stage~2 evaluates
five scheduler candidates for each selected task-model architecture, adding
$3\times6\times5=90$ fits. The full two-stage search therefore comprises
$360$ fitted models.
\section{Full Resolution-Generalization Error Table}
\label{app:resolution-table}
The following table gives the mean and sample standard deviation over seeds for
the higher-resolution FNO-family experiment. The training resolution $128$ is
omitted here by design.

\begin{landscape}
{\scriptsize
\begin{longtable}{llrcc}
\caption{Full resolution-generalization errors for MFNO, ZFNO, and FNO.}\\
\toprule
Task & Model & Resolution & $\RelLtwo$ mean $\pm$ std & $\RelLinf$ mean $\pm$ std \\
\midrule
\endfirsthead
\toprule
Task & Model & Resolution & $\RelLtwo$ mean $\pm$ std & $\RelLinf$ mean $\pm$ std \\
\midrule
\endhead
\TaskOne & FNO & 160 & \num{1.729e-02} $\pm$ \num{1.215e-03} & \num{5.148e-02} $\pm$ \num{1.483e-03} \\
\TaskOne & FNO & 192 & \num{1.678e-02} $\pm$ \num{1.242e-03} & \num{5.128e-02} $\pm$ \num{1.337e-03} \\
\TaskOne & FNO & 256 & \num{1.702e-02} $\pm$ \num{1.242e-03} & \num{5.328e-02} $\pm$ \num{1.327e-03} \\
\TaskOne & FNO & 320 & \num{1.674e-02} $\pm$ \num{1.309e-03} & \num{5.405e-02} $\pm$ \num{1.321e-03} \\
\TaskOne & FNO & 384 & \num{1.773e-02} $\pm$ \num{1.334e-03} & \num{5.633e-02} $\pm$ \num{1.400e-03} \\
\TaskOne & FNO & 512 & \num{1.690e-02} $\pm$ \num{1.321e-03} & \num{5.610e-02} $\pm$ \num{1.332e-03} \\
\TaskOne & FNO & 640 & \num{1.681e-02} $\pm$ \num{1.375e-03} & \num{5.592e-02} $\pm$ \num{1.261e-03} \\
\TaskOne & FNO & 832 & \num{1.719e-02} $\pm$ \num{1.376e-03} & \num{5.740e-02} $\pm$ \num{1.364e-03} \\
\TaskOne & FNO & 1024 & \num{1.736e-02} $\pm$ \num{1.405e-03} & \num{5.835e-02} $\pm$ \num{1.323e-03} \\
\TaskOne & MFNO & 160 & \num{1.419e-02} $\pm$ \num{4.427e-04} & \num{4.911e-02} $\pm$ \num{1.590e-03} \\
\TaskOne & MFNO & 192 & \num{1.372e-02} $\pm$ \num{4.552e-04} & \num{4.823e-02} $\pm$ \num{1.490e-03} \\
\TaskOne & MFNO & 256 & \num{1.394e-02} $\pm$ \num{4.884e-04} & \num{4.916e-02} $\pm$ \num{1.473e-03} \\
\TaskOne & MFNO & 320 & \num{1.376e-02} $\pm$ \num{4.995e-04} & \num{4.881e-02} $\pm$ \num{1.239e-03} \\
\TaskOne & MFNO & 384 & \num{1.458e-02} $\pm$ \num{5.082e-04} & \num{5.004e-02} $\pm$ \num{1.240e-03} \\
\TaskOne & MFNO & 512 & \num{1.394e-02} $\pm$ \num{4.938e-04} & \num{4.992e-02} $\pm$ \num{1.203e-03} \\
\TaskOne & MFNO & 640 & \num{1.386e-02} $\pm$ \num{5.036e-04} & \num{4.940e-02} $\pm$ \num{1.199e-03} \\
\TaskOne & MFNO & 832 & \num{1.403e-02} $\pm$ \num{5.167e-04} & \num{4.940e-02} $\pm$ \num{1.136e-03} \\
\TaskOne & MFNO & 1024 & \num{1.423e-02} $\pm$ \num{5.256e-04} & \num{5.052e-02} $\pm$ \num{1.196e-03} \\
\TaskOne & ZFNO & 160 & \num{1.001e-02} $\pm$ \num{5.464e-04} & \num{3.153e-02} $\pm$ \num{1.310e-03} \\
\TaskOne & ZFNO & 192 & \num{9.643e-03} $\pm$ \num{5.468e-04} & \num{3.094e-02} $\pm$ \num{1.272e-03} \\
\TaskOne & ZFNO & 256 & \num{9.587e-03} $\pm$ \num{5.550e-04} & \num{3.151e-02} $\pm$ \num{1.347e-03} \\
\TaskOne & ZFNO & 320 & \num{9.249e-03} $\pm$ \num{5.654e-04} & \num{3.089e-02} $\pm$ \num{1.291e-03} \\
\TaskOne & ZFNO & 384 & \num{9.881e-03} $\pm$ \num{6.210e-04} & \num{3.183e-02} $\pm$ \num{1.386e-03} \\
\TaskOne & ZFNO & 512 & \num{9.275e-03} $\pm$ \num{5.822e-04} & \num{3.147e-02} $\pm$ \num{1.359e-03} \\
\TaskOne & ZFNO & 640 & \num{9.287e-03} $\pm$ \num{6.064e-04} & \num{3.111e-02} $\pm$ \num{1.334e-03} \\
\TaskOne & ZFNO & 832 & \num{9.410e-03} $\pm$ \num{6.112e-04} & \num{3.155e-02} $\pm$ \num{1.348e-03} \\
\TaskOne & ZFNO & 1024 & \num{9.422e-03} $\pm$ \num{6.020e-04} & \num{3.160e-02} $\pm$ \num{1.368e-03} \\
\TaskTwo & FNO & 160 & \num{2.286e-02} $\pm$ \num{1.398e-03} & \num{3.974e-02} $\pm$ \num{1.650e-03} \\
\TaskTwo & FNO & 192 & \num{2.320e-02} $\pm$ \num{1.338e-03} & \num{4.058e-02} $\pm$ \num{1.626e-03} \\
\TaskTwo & FNO & 256 & \num{2.288e-02} $\pm$ \num{1.347e-03} & \num{4.128e-02} $\pm$ \num{1.668e-03} \\
\TaskTwo & FNO & 320 & \num{2.237e-02} $\pm$ \num{1.230e-03} & \num{4.189e-02} $\pm$ \num{1.644e-03} \\
\TaskTwo & FNO & 384 & \num{2.376e-02} $\pm$ \num{1.265e-03} & \num{4.364e-02} $\pm$ \num{1.620e-03} \\
\TaskTwo & FNO & 512 & \num{2.313e-02} $\pm$ \num{1.274e-03} & \num{4.501e-02} $\pm$ \num{1.767e-03} \\
\TaskTwo & FNO & 640 & \num{2.323e-02} $\pm$ \num{1.311e-03} & \num{4.485e-02} $\pm$ \num{1.725e-03} \\
\TaskTwo & FNO & 832 & \num{2.205e-02} $\pm$ \num{1.326e-03} & \num{4.351e-02} $\pm$ \num{1.657e-03} \\
\TaskTwo & FNO & 1024 & \num{2.356e-02} $\pm$ \num{1.330e-03} & \num{4.600e-02} $\pm$ \num{1.700e-03} \\
\TaskTwo & MFNO & 160 & \num{1.959e-02} $\pm$ \num{5.868e-04} & \num{3.173e-02} $\pm$ \num{4.716e-04} \\
\TaskTwo & MFNO & 192 & \num{1.977e-02} $\pm$ \num{6.305e-04} & \num{3.194e-02} $\pm$ \num{5.053e-04} \\
\TaskTwo & MFNO & 256 & \num{1.953e-02} $\pm$ \num{5.763e-04} & \num{3.301e-02} $\pm$ \num{4.793e-04} \\
\TaskTwo & MFNO & 320 & \num{1.913e-02} $\pm$ \num{6.256e-04} & \num{3.286e-02} $\pm$ \num{4.795e-04} \\
\TaskTwo & MFNO & 384 & \num{2.036e-02} $\pm$ \num{6.381e-04} & \num{3.450e-02} $\pm$ \num{4.842e-04} \\
\TaskTwo & MFNO & 512 & \num{1.960e-02} $\pm$ \num{6.372e-04} & \num{3.485e-02} $\pm$ \num{5.356e-04} \\
\TaskTwo & MFNO & 640 & \num{1.971e-02} $\pm$ \num{6.706e-04} & \num{3.503e-02} $\pm$ \num{5.178e-04} \\
\TaskTwo & MFNO & 832 & \num{1.869e-02} $\pm$ \num{6.119e-04} & \num{3.417e-02} $\pm$ \num{4.952e-04} \\
\TaskTwo & MFNO & 1024 & \num{2.002e-02} $\pm$ \num{6.719e-04} & \num{3.614e-02} $\pm$ \num{5.122e-04} \\
\TaskTwo & ZFNO & 160 & \num{1.629e-02} $\pm$ \num{3.082e-04} & \num{2.617e-02} $\pm$ \num{4.791e-04} \\
\TaskTwo & ZFNO & 192 & \num{1.654e-02} $\pm$ \num{3.570e-04} & \num{2.668e-02} $\pm$ \num{5.666e-04} \\
\TaskTwo & ZFNO & 256 & \num{1.632e-02} $\pm$ \num{3.644e-04} & \num{2.719e-02} $\pm$ \num{5.947e-04} \\
\TaskTwo & ZFNO & 320 & \num{1.589e-02} $\pm$ \num{3.994e-04} & \num{2.708e-02} $\pm$ \num{5.981e-04} \\
\TaskTwo & ZFNO & 384 & \num{1.702e-02} $\pm$ \num{3.969e-04} & \num{2.885e-02} $\pm$ \num{6.137e-04} \\
\TaskTwo & ZFNO & 512 & \num{1.631e-02} $\pm$ \num{4.248e-04} & \num{2.847e-02} $\pm$ \num{6.475e-04} \\
\TaskTwo & ZFNO & 640 & \num{1.626e-02} $\pm$ \num{4.294e-04} & \num{2.880e-02} $\pm$ \num{6.123e-04} \\
\TaskTwo & ZFNO & 832 & \num{1.573e-02} $\pm$ \num{4.268e-04} & \num{2.834e-02} $\pm$ \num{6.213e-04} \\
\TaskTwo & ZFNO & 1024 & \num{1.661e-02} $\pm$ \num{4.765e-04} & \num{2.983e-02} $\pm$ \num{6.684e-04} \\
\TaskThree & FNO & 160 & \num{2.312e-02} $\pm$ \num{6.353e-04} & \num{5.345e-02} $\pm$ \num{1.923e-03} \\
\TaskThree & FNO & 192 & \num{2.345e-02} $\pm$ \num{6.880e-04} & \num{5.537e-02} $\pm$ \num{1.880e-03} \\
\TaskThree & FNO & 256 & \num{2.292e-02} $\pm$ \num{6.957e-04} & \num{5.530e-02} $\pm$ \num{1.763e-03} \\
\TaskThree & FNO & 320 & \num{2.281e-02} $\pm$ \num{6.578e-04} & \num{5.705e-02} $\pm$ \num{1.752e-03} \\
\TaskThree & FNO & 384 & \num{2.351e-02} $\pm$ \num{6.826e-04} & \num{5.829e-02} $\pm$ \num{1.757e-03} \\
\TaskThree & FNO & 512 & \num{2.291e-02} $\pm$ \num{6.873e-04} & \num{5.964e-02} $\pm$ \num{1.668e-03} \\
\TaskThree & FNO & 640 & \num{2.285e-02} $\pm$ \num{7.086e-04} & \num{5.963e-02} $\pm$ \num{1.689e-03} \\
\TaskThree & FNO & 832 & \num{2.250e-02} $\pm$ \num{6.661e-04} & \num{6.085e-02} $\pm$ \num{1.804e-03} \\
\TaskThree & FNO & 1024 & \num{2.327e-02} $\pm$ \num{7.130e-04} & \num{6.162e-02} $\pm$ \num{1.808e-03} \\
\TaskThree & MFNO & 160 & \num{1.931e-02} $\pm$ \num{4.796e-04} & \num{3.495e-02} $\pm$ \num{7.484e-04} \\
\TaskThree & MFNO & 192 & \num{1.958e-02} $\pm$ \num{4.796e-04} & \num{3.577e-02} $\pm$ \num{6.879e-04} \\
\TaskThree & MFNO & 256 & \num{1.926e-02} $\pm$ \num{4.622e-04} & \num{3.694e-02} $\pm$ \num{7.459e-04} \\
\TaskThree & MFNO & 320 & \num{1.912e-02} $\pm$ \num{4.951e-04} & \num{3.736e-02} $\pm$ \num{7.427e-04} \\
\TaskThree & MFNO & 384 & \num{1.993e-02} $\pm$ \num{5.186e-04} & \num{3.923e-02} $\pm$ \num{8.819e-04} \\
\TaskThree & MFNO & 512 & \num{1.930e-02} $\pm$ \num{4.967e-04} & \num{3.937e-02} $\pm$ \num{8.234e-04} \\
\TaskThree & MFNO & 640 & \num{1.932e-02} $\pm$ \num{4.933e-04} & \num{3.989e-02} $\pm$ \num{8.155e-04} \\
\TaskThree & MFNO & 832 & \num{1.882e-02} $\pm$ \num{4.831e-04} & \num{3.951e-02} $\pm$ \num{7.665e-04} \\
\TaskThree & MFNO & 1024 & \num{1.963e-02} $\pm$ \num{5.016e-04} & \num{4.148e-02} $\pm$ \num{8.487e-04} \\
\TaskThree & ZFNO & 160 & \num{1.811e-02} $\pm$ \num{4.048e-04} & \num{3.116e-02} $\pm$ \num{6.168e-04} \\
\TaskThree & ZFNO & 192 & \num{1.823e-02} $\pm$ \num{4.520e-04} & \num{3.124e-02} $\pm$ \num{5.984e-04} \\
\TaskThree & ZFNO & 256 & \num{1.813e-02} $\pm$ \num{4.444e-04} & \num{3.215e-02} $\pm$ \num{6.296e-04} \\
\TaskThree & ZFNO & 320 & \num{1.790e-02} $\pm$ \num{4.375e-04} & \num{3.248e-02} $\pm$ \num{6.614e-04} \\
\TaskThree & ZFNO & 384 & \num{1.857e-02} $\pm$ \num{4.640e-04} & \num{3.433e-02} $\pm$ \num{7.309e-04} \\
\TaskThree & ZFNO & 512 & \num{1.800e-02} $\pm$ \num{4.373e-04} & \num{3.361e-02} $\pm$ \num{6.421e-04} \\
\TaskThree & ZFNO & 640 & \num{1.799e-02} $\pm$ \num{4.610e-04} & \num{3.396e-02} $\pm$ \num{7.122e-04} \\
\TaskThree & ZFNO & 832 & \num{1.754e-02} $\pm$ \num{4.410e-04} & \num{3.398e-02} $\pm$ \num{6.290e-04} \\
\TaskThree & ZFNO & 1024 & \num{1.822e-02} $\pm$ \num{4.533e-04} & \num{3.530e-02} $\pm$ \num{6.877e-04} \\
\bottomrule
\end{longtable}
}
\end{landscape}

\end{document}